\begin{document}

\title{Generalization Bounds for Convolutional Neural Networks}\footnote{This Thesis has been Submitted to the School of Computer Science, the University of Sydney
in Fulfilment of the requirements for the Degree of Master of Philosophy of Shan Lin on 26-Dec-2018, but has not been published. The thesis is available at \url{https://ses.library.usyd.edu.au/handle/2123/20315}.
}

\author{\name Shan Lin \email{slin4492@uni.sydney.edu.au} \\
       \addr Department of Computer Science\\
       University of Sydney\\
       Sydney, New South Wales, Australia
       \AND
       \name Jingwei Zhang \email zjin8228@uni.sydney.edu.au\\
        \addr Department of Computer Science\\
       University of Sydney\\
       Sydney, New South Wales, Australia}
       

\editor{}

\maketitle

\begin{abstract}
	Convolutional neural networks (CNNs) have achieved breakthrough performances in a wide range of applications including image classification, semantic segmentation, and object detection. Previous research on characterizing the generalization ability of neural networks mostly focuses on fully connected neural networks (FNNs), regarding CNNs as a special case of FNNs without taking into account the special structure of convolutional layers. In this work, we propose a tighter generalization bound for CNNs by exploiting the sparse and permutation structure of its weight matrices.
	As the generalization bound relies on the spectral norm of weight matrices, we further study spectral norms of three commonly used convolution operations including standard convolution, depthwise convolution, and pointwise convolution. Theoretical and experimental results both demonstrate that our bounds for CNNs are tighter than existing bounds.  
	
\end{abstract}

\begin{keywords}
  Convolutional Neural Networks, Generalization Bounds, Covering Number
\end{keywords}

\section{Introduction}\label{sec-intro}
Convolutional neural networks (CNNs) have led state-of-the-art performances in various applications including image classification \citep{krizhevsky2012imagenet}, semantic segmentation \citep{long2015fully}, and object detection \citep{ren2015faster,redmon2016you}. They have significantly outperformed traditional machine learning approaches on large-scale datasets. However, 
there exists a gap between their experimental success and theoretical understanding, as previous tools in statistical learning theory become powerless for neural networks. A commonly held view suggests that a heavily overparameterized neural network can easily overfit \citep{zhang2016understanding} and thus will result in poor predictions for unseen data, which is clearly in contrast to empirical evidence. 


This motivates us to study neural networks from a theoretical perspective so as to better understand how deep learning algorithm works. We focus on a key problem in supervised learning, which is the models' ability to \textit{generalize}. For classification, the generalization performance of a model can be measured by the expected zero-one loss on the underlying input distribution, referred to as \textit{classification error}. 
Indeed, generalizable models are attained through the combined efforts from various aspects including networks architectures and optimization algorithms. In this work, we pay close attention to architectures and parameters, and propose a generalization bound specifically designed for CNNs. 


Previous research on characterizing the generalization capability of neural networks mostly focuses on fully connected neural networks (FNNs), regarding CNNs as a special case of FNNs. Although these bounds are also applicable for CNNs, the sparsity characteristic of weight matrices for convolutional layers is not fully exploited. Different from FNNs, CNNs have at least one convolutional layer which uses convolution operation instead of matrix multiplication. In this operation, a small-sized filter is applied to the feature map in a sliding-window fashion to obtain properties of sparse interaction and parameter sharing. 
We show that convolutional layers can be transformed to fully connected layers whose weight matrices are sparse and have shared weights. By exploiting this structure, it is expected to achieve a tighter bound for CNNs compared with those bounds derived from FNNs.



Our approach to deriving generalization bounds for CNNs is closely related to \citep{bartlett2017spectrally,neyshabur2017pac,golowich2018size,li2018tighter}. In these works, they upper bounded generalization error of a given neural network in terms of complexity measures depending on the Lipschitz constants of activation functions and norms of weight matrices. Different from previous methods, we propose to bound model complexity by the norm of convolutional weights rather than that of the corresponding fully connected weight matrices generated from convolutional weights. 
To achieve this goal, we make use of the sparsity and permutation structure of the convolutional layer, yielding a tighter bound in comparison with those bounds directly derived from FNNs. Our main contributions are summarized as follows.
\begin{itemize}
    \item We propose a generalization bound for general convolutional neural networks that contain both fully connected layers and convolutional layers. By transforming convolution operation into a multiplication of input feature map and the corresponding fully connected matrix generated by convolutional weights, we are able to compare the proposed generalization bound with existing bounds for FNNs. We show that the proposed bound is tighter under certain conditions, and further provide experimental results on architectures of MobileNet V1 \citep{howard2017mobilenets} and MobileNet V2 \citep{sandler2018mobilenetv2} to validate our theoretical results.
    \item   We analyzes spectral norms of different convolution operations including standard convolution operation, depthwise operation, and pointwise operation. We show that the spectral norm of weight matrices generated by convolutional weights can be upper bounded by certain norms of the convolutional weights. The proposed generalization bound relies on the spectral norm of weight matrices of layers in CNNs.
\end{itemize}
The rest of this paper is organized as follows. We first review related works in Section \ref{sec-related}. In Section \ref{sec-pre}, we briefly introduce notations, definitions, and lemmas that will be used later in this paper. Then, we present our main theorem on bounding generalization error of CNNs in Section \ref{sec-bound}. Section \ref{sec-mobile} provides discussions on spectral norm of different convolution operations, and Section \ref{sec-comparison} provides theoretical and experimental comparisons of various existing generalization bounds. Section \ref{sec-conclude} concludes the paper.


\section{Related Work}\label{sec-related}

Research on characterizing generalization behaviours of neural networks has attracted increasing attention over the last few years. \cite{neyshabur2017exploring} investigated multiple measures of model capacity that can be used to bound the generalization error of deep neural networks (DNNs), including the number of parameters, VC dimension, flatness and sharpness, and margin and norm based complexities. 
Among these complexity measures, the number of parameters and VC dimension increase as the size of network grows \citep{anthony2009neural}. Besides, proper definitions of flatness are currently lacked \citep{dinh2017sharp}. Thus both of them are not effective in explaining generalization of deep learning. In terms of margin and norm based complexities, let $n$ denote the size of a given sample, $L$ denote the depth of networks, $D$ denote the maximum width of each layer, and $A_{i}$ be the weight matrix for layer $i \in \{1, ..., L\}$. For a fully connected neural network with rectified linear unit (ReLU) activation function, \cite{neyshabur2015norm} introduced a complexity measure based on a broad definition of norm with an exponential dependence on depth. The bound is $\widetilde{\mathcal{O}}(2^L\prod_{i=1}^{L}\|A_i\|_F/\sqrt{n})$, where $\|\cdot\|_F$ represents the $F$-norm of a matrix.
Later, \cite{bartlett2017spectrally} and \cite{neyshabur2017pac} proposed similar bounds by the product of Lipschitz constants of activation functions and norms of weight matrices, which is 
    $\widetilde{\mathcal{O}}\left(\prod_{i=1}^L \|A_i\|_\sigma \sqrt{L^3D}/\sqrt{n}\right)$, where $\|\cdot\|_\sigma$ denotes the spectral norm of a matrix, which equals to the largest singular value.
Recently, \cite{golowich2018size} proposed a generalization bound of $\widetilde{\mathcal{O}}\left(\prod_{i=1}^L \|A_i\|_F \cdot  \min\left\{1/\sqrt[4]{n}, \sqrt{L/n}\right\} \right)$, which is fully independent of the depth of neural nets at the cost of a slower convergence rate. \cite{li2018tighter} achieved a tighter bound of $\widetilde{\mathcal{O}}\left(\prod_{i=1}^L \|A_i\|_\sigma \sqrt{LD^2}/\sqrt{n} \right)$ for ultra-deep neural networks when the layer depth $L$ is much greater than $D$. More recently, \cite{arora2018stronger} experimentally showed that the weight matrices of layers are noise tolerant in the sense that the influences will decay at higher layers. On account of this property, they proposed to compress the neural nets first and obtained a tighter generalization bound. 

Besides generalization bounds, researchers are also interested in understanding neural networks theoretically from other aspects including hardness of training neural networks \citep{blum1989training,livni2014computational,brutzkus2017globally,shamir2018distribution}, expressiveness of neural networks \citep{hornik1989multilayer,bengio2011expressive,eldan2016power,sharir2017expressive}, the surfaces of loss functions \citep{hochreiter1995simplifying,choromanska2015loss,keskar2016large,kawaguchi2016deep,dinh2017sharp,pmlr-v80-draxler18a,garipov2018loss}, stochastic gradient descent \citep{dauphin2014identifying,ge2015escaping,ge2016matrix,jin2017escape}, and information theory \citep{xu2017information,zhang2018information}. 

Focusing on CNNs, \cite{li2018tighter} discussed generalization bounds of CNNs composed of orthogonal filters and \cite{arora2018stronger} investigated model compression of CNNs. \cite{du2018many} studied the sample complexity of CNNs based on one convolutional filter, and showed that the sample complexity of CNNs is $\mathcal{O}(m/\epsilon^2)$ for a $m$-dimensional filter, which is much smaller than that of FNNs $\mathcal{O}(d/\epsilon^2)$ as $m\ll d$. We also refer readers to \citep{sharir2017expressive,brutzkus2017globally,du2017convolutional,du2018improved} for more theoretical analysis on CNNs.

\section{Preliminaries}\label{sec-pre}
In this section, we briefly introduce the definitions of generalization error, Rademacher complexity, and ramp loss for multiclass classification. We also formulate the notations of neural networks including fully connected neural networks, fully convolutional neural networks, and general convolutional neural networks.

\subsection{Generalization Error}

In a classification problem, training examples are assumed to be independently and identically distributed according to some fixed but unknown probability distribution $\mathcal{D}$. Given a hypothesis set $\mathcal{F}$, the expected risk of a hypothesis $f\in\mathcal{F}$ with respect to distribution $\mathcal{D}$ is defined by
	\begin{equation}\label{eq:expected-risk}
		\mathcal{R}_{\mathcal{D}}(f) = \mathbb{E}_{(x,y)\sim\mathcal{D}}[\mathbf{1}_{f(x)\neq y}].
	\end{equation}
Our goal is to learn a hypothesis $f\in\mathcal{F}$ which minimizes the expected risk. However, the expected risk is not directly accessible as the underlying distribution is unknown given a finite sample. Instead, we use an unbiased estimator of expected risk named empirical risk. Given a sample $S = \left((x_1, y_1), ..., (x_n, y_n)\right)$, the \textit{empirical risk} of hypothesis $f\in\mathcal{F}$ is defined by
	\begin{equation}\label{eq:expected-risk2}
		\widehat{\mathcal{R}}_{S}(f) = \frac{1}{n}\sum_{i=1}^{n}\mathbf{1}_{f(x_i)\neq y_i}.
	\end{equation}
Minimizing empirical risk will probably lead to overfitting, especially when the hypothesis set is overly expressive, which results in a large gap between empirical risk and expected risk. According to statistical learning theory, such gap can be bounded by the empirical Rademacher complexity of the hypothesis set, which is defined in the following.

\begin{definition} (Empirical Rademacher complexity) Given a sample $Z=\left(z_1,...,z_n\right)\in \mathcal{Z}^n$ and a set of real-valued functions $\mathcal{F}$ defined on $\mathcal{Z}$, the empirical Rademacher complexity is defined by
\begin{equation}
    \mathfrak{R}_Z(\mathcal{F}) = \frac{1}{n}\mathbb{E}_{\tau}\left[\sup_{f\in\mathcal{F}}\sum_{i=1}^{n}\tau_i f(z_i)\right],
\end{equation}
where $\tau_i$s are Rademacher variables which are uniform random variables taking values in $\{-1, +1\}$. 
\end{definition}
    
    
\subsection{Loss Function}


As the zero-one loss is non-smooth, it is difficult to minimize empirical risk. In practise, we use surrogate loss functions such as hinge loss and cross entropy loss. Here, we focus on ramp loss which is defined using the concept of \textit{margin}. For a classification problem with $k$ target classes, the margin of a training example $(x,y)\in\mathcal{X}\times\mathcal{Y}$ with respect to a hypothesis $f\in\mathcal{F}$ is defined by 
\begin{equation} 
    f(x)_y - \max_{j\neq y,j\in\{1,...,k\}}f(x)_j. 
\end{equation}
Then the ramp loss $\ell_{\eta}$ is given by 
 	\begin{equation}\label{def: ell eta}
 	    \ell_{\eta}\left(f(x), y\right) \triangleq g_{\eta}\left(-\left(f(x)_y - \max_{j\neq y}f(x)_j\right)\right),
 	\end{equation}
where $g_{\eta}(r):\mathbb{R}\to\mathbb{R}^{+}$ is a function defined by
	\[
		g_{\eta}(r) =
		\begin{cases}
			0, &{ r < -\eta} \\
			1 + r/\eta, &{-\eta\leq r\leq 0} \\
			1, &{r>0}.
		\end{cases} \]
Thus, the empirical risk for a classification problem with respect to ramp loss $\ell_{\eta}$ and the training sample $S$ can be formulated as
	\begin{equation}\label{eq:empirical-risk}
		\widehat{\mathcal{R}}_{S,\ell_{\eta}}(f) = \frac{1}{n}\sum_{i=1}^{n} \ell_{\eta}\left(f(x_i), y_i\right) = \frac{1}{n}\sum_{i=1}^{n}g_{\eta}\left(-\left(f(x_i)_{y_i} - \max_{j\neq y_i}f(x_i)_j\right)\right).
	\end{equation}
With this in view, the expected risk can be bounded in the following theorem.
\begin{theorem}\label{th:rademacher}{(\cite{bartlett2017spectrally}, Lemma 3.1)} Given a hypothesis set $\mathcal{F}$, a sample $S = ((x_1, y_1), ..., (x_n, y_n))$ of size $n$, and any $\eta>0$,
with probability at least $1-\delta$, each hypothesis $f\in\mathcal{F}$ satisfies
\begin{equation}
    \mathcal{R}_{\mathcal{D}}(f) \leq \widehat{\mathcal{R}}_{S,\ell_{\eta}}(f) + 2\mathfrak{R}_S(\mathcal{F}_{\eta}) + 3\sqrt{\frac{\ln(1/\delta)}{2n}},
\end{equation}
where $\mathcal{F}_{\eta}$ is the $\ell_\eta$ loss function class with respect to hypothesis set $\mathcal{F}$ and is defined as
\begin{equation}
    \mathcal{F}_{\eta} \triangleq \left\{(x,y) \to \ell_{\eta}\left(f(x),y\right): f\in\mathcal{F} \right\}.
\end{equation} 
 
\end{theorem}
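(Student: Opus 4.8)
The plan is to reduce the statement to two standard ingredients: (i) the ramp loss pointwise dominates the zero--one loss, so that controlling the expected ramp loss controls the classification error; and (ii) the classical empirical-Rademacher uniform convergence bound applied to the bounded loss class $\mathcal{F}_\eta$.

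\emph{Step 1: domination of the zero--one loss.} An input $(x,y)$ is misclassified precisely when $f(x)_y \le \max_{j\neq y} f(x)_j$, equivalently when $r := -\big(f(x)_y - \max_{j\neq y} f(x)_j\big) \ge 0$. By definition $g_\eta(r) = 1$ whenever $r \ge 0$ and $g_\eta \ge 0$ everywhere, hence $\mathbf{1}_{f(x)\neq y} \le g_\eta(r) = \ell_\eta(f(x),y)$ for every $(x,y)$ and every $f\in\mathcal{F}$. Taking $\mathbb{E}_{(x,y)\sim\mathcal{D}}$ gives $\mathcal{R}_{\mathcal{D}}(f) \le \mathbb{E}_{(x,y)\sim\mathcal{D}}[\ell_\eta(f(x),y)]$, so it suffices to bound the right-hand side uniformly over $f\in\mathcal{F}$.

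\emph{Step 2: uniform convergence for $\mathcal{F}_\eta$.} Every $h\in\mathcal{F}_\eta$ takes values in $[0,1]$ since $0\le g_\eta\le 1$. Set $\Phi(S) := \sup_{h\in\mathcal{F}_\eta}\big(\mathbb{E}[h] - \tfrac1n\sum_{i=1}^n h(x_i,y_i)\big)$. Replacing one sample changes each empirical average by at most $1/n$, so $\Phi$ has bounded differences $1/n$; McDiarmid's inequality gives $\Phi(S) \le \mathbb{E}_S[\Phi(S)] + \mathcal{O}\big(\sqrt{\ln(1/\delta)/n}\big)$ with high probability. A ghost-sample symmetrization bounds $\mathbb{E}_S[\Phi(S)] \le 2\,\mathbb{E}_S[\mathfrak{R}_S(\mathcal{F}_\eta)]$, and since $\mathfrak{R}_S(\mathcal{F}_\eta)$ itself has bounded differences $1/n$, a second application of McDiarmid replaces its expectation by $\mathfrak{R}_S(\mathcal{F}_\eta) + \mathcal{O}\big(\sqrt{\ln(1/\delta)/n}\big)$. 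Collecting the two high-probability events by a union bound and tracking the constants (the confidence terms add up as $1+2=3$) yields, with probability at least $1-\delta$ and for all $f\in\mathcal{F}$,
\[
\mathbb{E}_{(x,y)\sim\mathcal{D}}[\ell_\eta(f(x),y)] \;\le\; \widehat{\mathcal{R}}_{S,\ell_\eta}(f) + 2\,\mathfrak{R}_S(\mathcal{F}_\eta) + 3\sqrt{\frac{\ln(1/\delta)}{2n}}.
\]
Chaining this inequality with Step 1 proves the theorem. (Equivalently, one may simply invoke the textbook empirical-Rademacher generalization bound applied to $\mathcal{F}_\eta$.)

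\emph{Expected main obstacle.} There is no deep difficulty here. The only point requiring care is the constant bookkeeping in Step 2 — in particular the \emph{second} McDiarmid application, which is exactly what allows the final bound to feature the \emph{empirical} Rademacher complexity $\mathfrak{R}_S(\mathcal{F}_\eta)$ rather than its expectation, and which, together with the symmetrization step, produces the factor $3$ in front of the $\sqrt{\ln(1/\delta)/(2n)}$ term.
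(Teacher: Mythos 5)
Your proposal is correct and is essentially the argument behind the stated result: the paper offers no proof of its own, quoting Lemma 3.1 of \cite{bartlett2017spectrally}, whose proof is exactly your two steps --- the ramp loss $\ell_\eta$ pointwise dominates the zero--one loss, and the standard symmetrization/McDiarmid uniform-convergence bound is applied to the $[0,1]$-valued class $\mathcal{F}_\eta$, with the second McDiarmid application converting the expected Rademacher complexity into the empirical one and producing the constant $3$. The only caveat is the usual bookkeeping: a strict union bound over the two McDiarmid events gives $\ln(2/\delta)$ rather than $\ln(1/\delta)$ under the square root, a slack already present in the statement as quoted from the source.
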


\subsection{Neural Networks}\label{subsec-nn}
Given a neural network, let $L$ denote the number of layers, and $(\sigma_1, .., \sigma_L)$ denote fixed Lipschitz functions, e.g., rectified linear unit (ReLU) and max pooling function \citep{bartlett2017spectrally}. We assume that $\sigma_i$ is $\rho_i$-Lipschitz satisfying $\sigma_i(0) = 0$. In this paper, we discuss three types of neural networks including fully connected neural networks (FNNs), fully convolutional neural networks (FCNNs), and general convolutional neural networks (CNNs). They can be formulated in a unified approach. The key observation is that the convolution operation can be represented by matrix multiplication, see Section \ref{subsubsec-matrix} for details. Let $X=(x_1,...,x_n)^\top \in \mathbb{R}^{d \times n}$ denote the given input data, where $n$ is the number of the training examples. We list the formulations in the following.
\begin{itemize}
    \item {FNNs:} Let $\mathbf{A} = (A_1, ..., A_L)$ denote the weight matrices of all layers in FNNs, where $A_i \in \mathbb{R}^{d_{i} \times d_{i-1}}$ and $d_0 = d$. Then, FNNs can be formulated as
    \begin{equation}\label{def: fnn}
	    F_{\mathbf{A}}(X) \triangleq \sigma_L(A_L\sigma_{L-1}(A_{L-1} \cdots \sigma_1(A_1 X) \cdots)).
    \end{equation}
    \item{FCNNs:} Let $\mathbf{W}=(W_1,...,W_L)$ be the convolutional weights of all layers where convolutional weight $W_i \in \mathbb{R}^{c_i \times r_i}$ contains $c_i$ convolutional filters, each of which has dimension $r_i$. Let matrix $Z_i \in \mathbb{R}^{d_i \times n}$ denote the output of the $i_{th}$ layer, which is also the input of the $(i+1)_{th}$ layer. Thus we have $Z_0 = X$ and $d_0 = d$. Let $\mu_i$ be the function representing convolution operation for the $i_{th}$ layer taking convolutional weight $W_i$ and $Z_{i-1}$ as input. Then, we can formulate $Z_i$ in terms of $Z_{i-1}$ as   
    \begin{equation} 
        Z_{i} = \sigma_{i}(\mu_i(W_i, Z_{i-1})).
    \end{equation}
    Note that the convolutional weight $W_i$ can be in any shape as long as its total dimension is $r_i$. Such convolution operation can be rewritten as matrix multiplication, see Section \ref{subsubsec-matrix} for detail. Let $\gamma_i(W_i) \in \mathbb{R}^{d_i \times d_{i-1}}$ denote the fully connected matrix generated by convolutional weight $W_i$, and we have
    \begin{equation}\label{eq:gamma-def}
        \mu_i(W_i, Z_{i-1}) = \gamma_i(W_i)Z_{i-1}.
    \end{equation}
    Hence, FCNNs can be formulated as
    \begin{equation}\label{eq:fcnn-mu-def}
    \begin{aligned}
	    &F_{\mathbf{W}}(X) \triangleq 
	    &\sigma_L(\mu_L(W_L, \sigma_{L-1}(\mu_{L-1}( W_{L-1}, \cdots \sigma_1(\mu_1(W_1, X))\cdots)))),
	\end{aligned}
    \end{equation}
    or,
    \begin{equation}\label{eq: fcnn def}
	\begin{aligned}
		&F_{\mathbf{W}}(X) \triangleq  &\sigma_L(\gamma_L(W_L) \sigma_{L-1}(\gamma_{L-1}( W_{L-1}) \cdots \sigma_1(\gamma_1(W_1)X)\cdots)).
	\end{aligned}
    \end{equation}
    \item{CNNs:} The formulations of CNNs are obtained by a combination of~\eqref{def: fnn} and~\eqref{eq: fcnn def}. Let $\mathbf{C}=(C_1,...,C_L)$ be the unified fully connected matrices, then CNNs can be written as
    \begin{equation}\label{eq: cnn def}
	    F_{\mathbf{C}}(X) \triangleq \sigma_L(C_L \sigma_{L-1}(C_{L-1} \cdots \sigma_1(C_1 X)\cdots)),
    \end{equation}
    where matrix $C_i$ equals $A_i$ if the $i_{th}$ layer is fully connected or $C_i$ equals $\gamma_i(W_i)$ if the layer is convolutional and $\gamma_i(W_i)$ satisfies~\eqref{eq:gamma-def}.
\end{itemize}

\subsubsection{the fully connected matrix for one convolutional layer}\label{subsubsec-matrix}
This section illustrates how fully connected matrices are generated by convolution operations. Let convolutional weight $W = (w^1, ..., w^c)\in \mathbb{R}^{c \times r}$ denote the $c$ convolutional filters for one layer with the same dimension $r$. Let $Z_{input} = (Z^1, ..., Z^n) \in \mathbb{R}^{d_{input} \times n}$ denote the $n$ input examples with dimension $d_{input}$. Note that we only need to figure out how one convolutional filter $w^{i},i\in\{1,...,c\}$ operates on one input example $Z^{j},j\in\{1,...,n\}$, and this can be easily extended to multiple filters and multiple training examples. 

Assuming that each filter $w^i$ performs $m$ operations on one example $Z^j$, each operation selects $r$ elements out of the $d_{input}$-dimensional features of $Z^j$. For the $k_{th}$ operation, let vector $S_k\in \mathbb{R}^r$ represent the indices of the $r$ elements chosen from the $d_{input}$-dimensional features, and let $w^i_{S_k} \in \mathbb{R}^{d_{input}}$ denote the vector derived from $w^i$ whose elements are re-arranged to the places indexed by $S_k$ while filling the other places with zeros. Analogously, let $Z^j_{S_k} \in \mathbb{R}^r$ denote the vector which selects $r$ elements of $Z^j$ according to $S_k$. Then we have 
\begin{equation}\label{eq: indexing Z}
    w^i_{S_k}Z^j = w^i Z^j_{S_k}.
\end{equation}
For a better understanding, we illustrate this procedure using an example with a 2-dimensional convolutional filter $w\in \mathbb{R}^{2\times 2}$ and a 2-dimensional input example $Z \in \mathbb{R}^{3 \times 4}$ as follows. Let
\begin{equation}
    w =
    \begin{bmatrix}
        w_{1,1} & w_{1,2} \\
        w_{2,1} & w_{2,2}
    \end{bmatrix},\; \text{and}\;
    Z = 
    \begin{bmatrix}
        Z_{1,1} & Z_{1,2} & Z_{1,3} & Z_{1,4} \\
        Z_{2,1} & Z_{2,2} & Z_{2,3} & Z_{2,4} \\
        Z_{3,1} & Z_{3,2} & Z_{3,3} & Z_{3,4}
    \end{bmatrix},
\end{equation}
then a standard convolution operation $\mu(w, Z)$ with step size one performs $m=6$ operations and generates the following result
\begin{equation}\label{eq: mu w z}
\mu(w, Z) =
    \begin{bmatrix}
        \sum_{i=1}^2 \sum_{j=1}^2 w_{i,j}Z_{i,j} & \sum_{i=1}^2 \sum_{j=1}^2 w_{i,j}Z_{i,j+1} & \sum_{i=1}^2 \sum_{j=1}^2 w_{i,j}Z_{i,j+2} \\
        \sum_{i=1}^2 \sum_{j=1}^2 w_{i,j}Z_{i+1,j} & \sum_{i=1}^2 \sum_{j=1}^2 w_{i,j}Z_{i+1,j+1} & \sum_{i=1}^2 \sum_{j=1}^2 w_{i,j}Z_{i+1,j+2} 
    \end{bmatrix}.
\end{equation}
If we reshape $Z$ to one dimension and use $1$ as the starting index, then the index vectors become
\begin{equation} 
(S_1, .., S_6)^\top = 
    \begin{bmatrix}
        1 & 2 & 5 & 6 \\
        2 & 3 & 6 & 7 \\
        3 & 4 & 7 & 8 \\
        5 & 6 & 9 & 10 \\
        6 & 7 & 10 & 11 \\
        7 & 8 & 11 & 12 \\
    \end{bmatrix},
\end{equation}
and we have
\begin{equation} \label{eq:gamma w form}
\gamma(w) =
    \begin{bmatrix}
    w_{1,1} & w_{1,2} & 0 & 0 & w_{2,1} & w_{2,2} & 0 & 0 & 0 & 0 & 0 & 0\\
    0 & w_{1,1} & w_{1,2} & 0 & 0 & w_{2,1} & w_{2,2} & 0 & 0 & 0 & 0 & 0\\
    0 & 0 & w_{1,1} & w_{1,2} & 0 & 0 & w_{2,1} & w_{2,2} & 0 & 0 & 0 & 0\\
    0 & 0 & 0 & 0 & w_{1,1} & w_{1,2} & 0 & 0 & w_{2,1} & w_{2,2} & 0 & 0\\
    0 & 0 & 0 & 0 & 0 & w_{1,1} & w_{1,2} & 0 & 0 & w_{2,1} & w_{2,2} & 0\\
    0 & 0 & 0 & 0 & 0 & 0 & w_{1,1} & w_{1,2} & 0 & 0 & w_{2,1} & w_{2,2}\\
    \end{bmatrix}.
\end{equation}
After reshaping $\mu(w, Z)$ in~\eqref{eq: mu w z} to one dimension, we have 
\begin{equation}
    \mu(w, Z) = \gamma(w)Z.
\end{equation}
Using the above notations, $\gamma(W)$ can be easily formulated by
\begin{equation}\label{eq:gamma w}
    \gamma(W) = (w^1_{S_1}, \dots, w^1_{S_m}, w^2_{S_1}, \dots, w^2_{S_m},\dots,w^c_{S_1}, \dots, w^c_{S_m})^\top \in \mathbb{R}^{d_{output} \times d_{input}},
\end{equation}
where the output dimension $d_{output}$ is equal to $mc$.

At last, it is worth mentioning that the output dimension $m$ for one filter and the index vector $S_k$ varies for different convolution operations represented by $\mu$, depending on the shape of convolutional filters, the shape of input data, and step size. As a matter of fact, this notation allows the flexibility of applying different convolution operations in different layers as long as they can all be transformed to the same kind of formulation as~\eqref{eq:gamma w form}. 

\section{Generalization Bounds for Neural Networks}\label{sec-bound}
This section presents our main theorem on generalization bounds for different neural networks. 
Our work extends the generalization bound proposed by \citep{bartlett2017spectrally} which is derived in terms of Lipschitz constants of functions and norms of matrices. We first introduce a complexity measure that will be used in the main theorem.

\begin{definition}\label{def:conv-complexity}
Given a CNN $F_\mathbf{C}$ defined in~\eqref{eq: cnn def}. Let $(\sigma_1, ...,\sigma_L)$ be some fixed functions where $\sigma_i$ is $\rho_i$-Lipschitz satisfying $\sigma_i(0)=0$. Let $\mathbf{C} = (C_1, ..., C_L)$ be fully connected weight matrices, where $C_i \in \mathbb{R}^{d_{i} \times d_{i-1}}$ and $d_0=d$. Let $S_\mathbf{A}$ and $S_\mathbf{W}$ denote the index sets of fully connected layers and convolutional layers, respectively. Then if the index $i$ is in $S_\mathbf{A}$, we have $C_i=A_i$. Otherwise, we have $i \in S_\mathbf{W}$ and $C_i = \gamma_i(W_i)\in \mathbb{R}^{d_{i} \times d_{i-1}}$, where $W_i \in \mathbb{R}^{c_i\times r_i}$ and each filter in $W_i$ is of size $r_i$. Let $(a_1,...,a_L)$ and $(s_1,...s_L)$ be some real values. Assuming that $\|W_i\|_F \leq a_i$ and $\|\gamma_i(W_i)\|_\sigma \leq s_i$ for $i \in S_\mathbf{W}$, and $\|A_i\|_F \leq a_i$ and $\|A_i\|_\sigma \leq s_i$ for $i \in S_\mathbf{A}$, we define the sensitive complexity for $F_\mathbf{C}$ with respect to $\mathbf{C}$ as

    \begin{equation}\label{def:rc}
		\mathbf{\mathscr{R}}_{\mathbf{C}} = \left(2\prod_{i=1}^{L}\rho_{i}s_i\right)\left(\sum_{i \in S_\mathbf{A}}\frac{d_i^2d_{i-1}^2a_i}{s_i} + \sum_{i \in S_\mathbf{W}} \frac{c_i^2r_i^2a_i\sqrt{d_i/c_i}}{s_i} \right)L^2.
	\end{equation}
\end{definition}
\begin{remark} FNNs and FCNNs can be viewed as special cases of CNNs when $S_\mathbf{A}$ is empty or $S_\mathbf{W}$ is empty. Hence their corresponding sensitive complexity $\mathscr{R}_\mathbf{A}$ and $\mathscr{R}_\mathbf{W}$ can be defined as
\begin{equation}\label{def:ra}
		\mathscr{R}_{\mathbf{A}} = \left(2\prod_{i=1}^{L}\rho_is_i\right)\left(\sum_{i =1}^L \frac{d_i^2d_{i-1}^2a_i}{s_i} \right)L^2,
	\end{equation}
	and
	\begin{equation}\label{def:rw}
		\mathscr{R}_{\mathbf{W}} = \left(2\prod_{i=1}^{L}\rho_is_i\right)\left(\sum_{i=1}^L\frac{c_i^2r_i^2a_i\sqrt{d_i/c_i}}{s_i} \right)L^2.
	\end{equation}
\end{remark}
Based on the above complexity measures, we obtain the following generalization bound.
\begin{theorem}{(Generalization Bound for Convolutional Neural Networks)}\label{th:main} Given a training sample $S=\{(x_1,y_1),...,(x_n,y_n)\}$ of size $n$, each $(x_i,y_i)$ is i.i.d. according to some unknown distribution $\mathcal{D}$. Let $X = (x_1, ..., x_n)^{\top} \in \mathbb{R}^{d \times n}$ be the $n$ inputs. For a convolutional neural network $F_\mathbf{C}$ defined in~\eqref{eq: cnn def}, let $(\sigma_1, ...,\sigma_L)$ be some fixed functions where $\sigma_i$ is $\rho_i$-Lipschitz satisfying $\sigma_i(0)=0$. Then with probability at least $1-\delta$, we have
	\begin{equation}\label{eq: generalization bound}
		\begin{aligned}
			\mathcal{R}_{\mathcal{D}}(F_\mathbf{C}) \leq \widehat{\mathcal{R}}_{S,\ell_{\eta}}(F_\mathbf{C}) + \mathcal{O}\left( \left(\frac{\|X\|_F\mathscr{R}_\mathbf{C}}{\eta}\right)^{\frac{1}{4}} n^{-\frac{5}{8}}+\sqrt{\frac{\ln(1/\delta)}{n}}\right),
		\end{aligned}
	\end{equation}
where $\mathscr{R}_\mathbf{C}$ represents the corresponding complexity measure for $F_\mathbf{C}$ defined in \eqref{def:rc}.
\end{theorem}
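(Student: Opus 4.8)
The plan is to follow the covering-number argument of \cite{bartlett2017spectrally}, but to cover the small filter tensors $W_i$ of the convolutional layers rather than the large sparse matrices $\gamma_i(W_i)$ that they generate. First I would invoke Theorem~\ref{th:rademacher} to reduce the problem to bounding the empirical Rademacher complexity $\mathfrak{R}_S(\mathcal{F}_\eta)$ of the ramp-loss class. Since $g_\eta$ is $(1/\eta)$-Lipschitz and the margin map $v\mapsto v_y-\max_{j\neq y}v_j$ is $2$-Lipschitz in the $\ell_\infty$ (hence $\ell_2$) norm of the output vector, a contraction/composition argument reduces $\mathfrak{R}_S(\mathcal{F}_\eta)$ to the $\ell_2$ covering numbers of the set of network outputs on the sample, $\{(F_\mathbf{C}(x_1),\dots,F_\mathbf{C}(x_n)):\mathbf{C}\}$, at scale $\eta\epsilon/2$. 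Substituting these into Dudley's entropy integral, $\mathfrak{R}_S(\mathcal{F}_\eta)\le\inf_{\alpha>0}\big(\tfrac{4\alpha}{\sqrt{n}}+\tfrac{12}{n}\int_{\alpha}^{\infty}\sqrt{\log\mathcal{N}(\mathcal{F}_\eta|_S,\epsilon,\|\cdot\|_2)}\,d\epsilon\big)$, leaves only a covering-number estimate to be established.

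Second, I would bound $\log\mathcal{N}$ of the output set by induction on the $L$ layers. Writing $F_\mathbf{C}$ as in \eqref{eq: cnn def}, an approximation error $\delta_i$ committed when covering layer $i$ propagates to an error of at most $\delta_i\prod_{j>i}\rho_j s_j$ at the output, since each $\sigma_j$ is $\rho_j$-Lipschitz with $\sigma_j(0)=0$ and $\|C_j\|_\sigma\le s_j$ by hypothesis, while the data reaching layer $i$ has Frobenius norm at most $\|X\|_F\prod_{j<i}\rho_j s_j$. For a fully connected layer $i\in S_\mathbf{A}$ one covers the image $\{A_iY:\|A_i\|_F\le a_i\}$ of a current cover element $Y$ by a Maurey-type matrix covering lemma; its contribution is governed by $a_i$, the propagated data norm, and polynomial factors in the dimensions $d_i,d_{i-1}$, which assemble into the $d_i^2 d_{i-1}^2 a_i/s_i$ summand of $\mathscr{R}_\mathbf{C}$ in \eqref{def:rc}.

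The crux is the convolutional case $i\in S_\mathbf{W}$, where I want to pay only for the $c_i r_i$ genuine entries of $W_i$ instead of the nominal $d_i d_{i-1}$ entries of $\gamma_i(W_i)$. From the explicit form \eqref{eq:gamma w}, every row of $\gamma_i(W_i)$ is a zero-padded permutation of one of the $c_i$ filters and each entry of $W_i$ is reused exactly $m_i=d_i/c_i$ times; consequently $\|\gamma_i(W_i)\|_F=\sqrt{d_i/c_i}\,\|W_i\|_F$, so that $\|(\gamma_i(W_i)-\gamma_i(W_i'))Y\|_2\le\sqrt{d_i/c_i}\,\|W_i-W_i'\|_F\,\|Y\|_2$ because $\gamma_i$ is linear. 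Hence a Frobenius-norm cover of $\{W_i:\|W_i\|_F\le a_i\}$ --- whose log-size depends on $a_i$, on $c_i r_i$, and only mildly on the ambient dimension --- pushes forward to a cover of the layer map $Y\mapsto\gamma_i(W_i)Y$, and combining the reuse factor $\sqrt{d_i/c_i}$ with polynomial factors in $c_i,r_i$ yields the $c_i^2 r_i^2 a_i\sqrt{d_i/c_i}/s_i$ summand of $\mathscr{R}_\mathbf{C}$, which replaces the much larger $d_i^2 d_{i-1}^2 a_i/s_i$ one would incur by treating the layer as fully connected.

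Finally I would take a union over the per-layer covers, split the target precision $\epsilon$ across the $L$ layers in proportion to the per-layer sensitivities and factor out the overall Lipschitz product $\prod_{j=1}^{L}\rho_j s_j$ (this is where the $2\prod_i\rho_i s_i$ prefactor, the $L^2$ factor, and the separated sums $\sum_{i\in S_\mathbf{A}}+\sum_{i\in S_\mathbf{W}}$ in \eqref{def:rc} arise), obtain a polynomial covering estimate in $\|X\|_F$, $\mathscr{R}_\mathbf{C}$, $1/\eta$ and $1/\epsilon$ (up to logarithmic factors), substitute it into Dudley's integral, and optimize over the truncation $\alpha$ and the precision split to reach \eqref{eq: generalization bound}. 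I expect the convolutional covering transfer of the third step to be the main obstacle: one needs $\|(\gamma_i(W_i)-\gamma_i(W_i'))Y\|_2$ controlled by $\|W_i-W_i'\|_F$ with exactly the $\sqrt{d_i/c_i}$ dependence, uniformly over $Y$ ranging in the propagated cover, and one must verify that the sparsity and permutation pattern of \eqref{eq:gamma w} does not inflate the cover size beyond the $c_i r_i$ effective parameters. A secondary difficulty is calibrating the per-layer precision split jointly with the Dudley truncation; this optimization, together with the comparatively large power of $1/\epsilon$ carried by the crude covering bound (the price of the clean $\|W_i\|_F$ dependence), is what produces the $\tfrac14$-power and the $n^{-5/8}$ rate appearing in \eqref{eq: generalization bound}.
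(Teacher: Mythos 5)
Your proposal follows essentially the same route as the paper: cover the filter tensors $W_i$ directly in Frobenius norm, push the cover forward through the layer map via $\|\gamma_i(W_i-W_i')Z\|_F \le \sqrt{d_i/c_i}\,\|W_i-W_i'\|_F\|Z\|_F$, combine layers with the induction lemma of \cite{bartlett2017spectrally} using a sensitivity-weighted split of the precision, and finish with Dudley's entropy integral and Theorem~\ref{th:rademacher}, which is exactly the chain Lemma~\ref{lemma-cn}--Lemma~\ref{lemma-cnn} and Lemma~\ref{lemma-dudley-cnn}. The only cosmetic mismatch is that the paper handles fully connected layers with the same volumetric (parameter-counting) cover of the Frobenius ball as the filters rather than a Maurey-type cover (Maurey would give logarithmic, not $d_i d_{i-1}$, dimension dependence and so could not assemble into the $d_i^2 d_{i-1}^2 a_i/s_i$ term), and it is the resulting $\epsilon^{-1/2}$ scaling of the log-covering bound obtained via $\ln(1+x)\le\sqrt{x}$ --- a smaller, not larger, power of $1/\epsilon$ --- that yields the $\tfrac{1}{4}$ power and the $n^{-5/8}$ rate.
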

\begin{remark} Generalization bounds for FNNs $F_\mathbf{A}$ and FCNNs $F_\mathbf{W}$ can be similarly obtained by substituting the complexity term $\mathscr{R}_\mathbf{C}$ in~\eqref{eq: generalization bound} with $\mathscr{R}_\mathbf{A}$ and $\mathscr{R}_\mathbf{W}$, respectively.

\end{remark}

Before diving into the proof details of the main theorem, we first draw an outline as shown in the chart below. 

\vspace{3mm}
	    \begin{tikzpicture}
        \tikzset{node style/.style={state, 
                                    fill=gray!20!white,
                                    rectangle,
                                    minimum width=6cm}}
    
        \node[node style]               (I)   {Generalization Error};
        \node[node style, below=of I]   (II)  {Rademacher Complexity};
        \node[node style, right=of II]  (III) {Covering Number of CNNs};
        \node[node style, above=of III] (IV)  {Covering Number of Single Layer};
    
        \draw[>=latex,
          auto=right,
          every loop]
         (II)   edge node {(c)}  (I)
         (III)  edge node {(b)}  (II)
         (IV)   edge node {(a)}  (III);
        \end{tikzpicture}
        
    \vspace{2mm}
\noindent Here, 
$(a)$ denotes induction on each layer, $(b)$ represents Dudley's Entropy Integral, and $(c)$ represents Theorem \ref{th:rademacher}. 
The details are demonstrated in following subsections.
        
\begin{itemize}
    \item The first step is to bound the covering number of a single layer, see subsection \ref{subsec-single}. We consider two situations where the layers are convolutional or fully connected. 
	\item The next step is to compute the overall covering number of general convolutional neural networks by a straightforward induction on layers, as indicated by $(a)$, 
	see subsection \ref{subsec-cnn}.
	\item Finally, we relate the covering number of a convolutional neural network to its Rademacher complexity via Dudley's entropy integral as indicated by $(b)$. Substituting the above results into Theorem \ref{th:rademacher} yields our main Theorem \ref{th:main} as indicated by $(c)$, see subsection \ref{subsec-bound}.
\end{itemize}

\subsection{Covering Number Bound for a Single Neural Network Layer}\label{subsec-single}

We study covering number bounds for a single neural network layer, either fully connected or convolutional. We first introduce the definitions of $\epsilon$-cover and covering number as well as their closely related concepts named $\epsilon$-packing and packing number. 
\begin{definition}{($\epsilon$-cover and covering number)} Let $(V,\|\cdot \|)$ be a normed space and $U$ be a subset of $V$. Then $U$ is called an $\epsilon$-cover of $V$ if for any $v\in V$, there exists $u\in U$ such that
	$\|u - v\|\leq \epsilon.$ The covering number of the normed space $(V,\|\cdot\|)$ with any $\epsilon>0$ is the size of the smallest $\epsilon$-cover, which is defined by
	$\mathcal{N}(V,\epsilon,\|\cdot\|) \triangleq \min\{\,|U|: U$ is an $\epsilon$-cover of $V\}$.
\end{definition}
\begin{definition}($\epsilon$-packing and packing number) Let $(V, \|\cdot\|)$ be a normed space, and let $U\subseteq V$. Then $U$ is an $\epsilon$-packing of $V$ if for any $v,v'\in U$, the inequality $\|v-v'\|> \epsilon$ holds. The packing number is thus defined by $\mathcal{M}(V,\epsilon,\|\cdot\|) \triangleq \max\{\,|U|: U$ is an $\epsilon$-packing of $V\}$.
\end{definition}
The following lemma provides a covering number bound with respect to $l_2$ norm for a single $a$-bounded vector, which will be used in bounding the covering number of a single neural network layer. This proof extends from \citep{wu_2016}.
\begin{lemma}\label{lemma-cn}
	Let $W \triangleq \{w: w\in\mathbb{R}^r,\, \|w\|_2\leq a\}$, then for any $\epsilon>0$, the covering number of $W$ can be bounded by
	\begin{equation}\label{lemma-cn-equ}
		\ln \mathcal{N}\left(W, \epsilon, \|\cdot\|_2\right) \leq r \ln \left(1+\frac{2a}{\epsilon}\right).
	\end{equation}
\end{lemma}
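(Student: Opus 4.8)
The plan is to prove the bound by a standard volumetric (volume-ratio) argument relating covering numbers and packing numbers for a Euclidean ball. First I would recall the classical chain of inequalities: for any normed space, the covering number is at most the packing number at the same scale, i.e. $\mathcal{N}(W,\epsilon,\|\cdot\|_2) \le \mathcal{M}(W,\epsilon,\|\cdot\|_2)$, so it suffices to upper bound the size of any maximal $\epsilon$-packing of $W = \{w \in \mathbb{R}^r : \|w\|_2 \le a\}$.

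Next I would carry out the packing bound by a disjoint-balls volume comparison. Let $\{w_1,\dots,w_N\}$ be an $\epsilon$-packing of $W$; then the open balls $B(w_j, \epsilon/2)$ are pairwise disjoint, and each is contained in the enlarged ball $B(0, a + \epsilon/2)$ because every $w_j$ satisfies $\|w_j\|_2 \le a$. Comparing Lebesgue volumes in $\mathbb{R}^r$, and using that the volume of a Euclidean ball of radius $\rho$ in $\mathbb{R}^r$ scales as $\rho^r$ times a dimension-only constant, the constant cancels and we get
\begin{equation}
N \left(\frac{\epsilon}{2}\right)^r \le \left(a + \frac{\epsilon}{2}\right)^r,
\end{equation}
hence $N \le \left(1 + \frac{2a}{\epsilon}\right)^r$. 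Taking logarithms yields $\ln \mathcal{N}(W,\epsilon,\|\cdot\|_2) \le \ln \mathcal{M}(W,\epsilon,\|\cdot\|_2) \le r\ln\left(1 + \frac{2a}{\epsilon}\right)$, which is exactly \eqref{lemma-cn-equ}.

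There is no real obstacle here — the argument is entirely routine — but the one point that needs a little care is the relation $\mathcal{N} \le \mathcal{M}$: given a maximal $\epsilon$-packing $U$, maximality forces every point of $W$ to be within $\epsilon$ of some point of $U$ (otherwise that point could be added, contradicting maximality), so $U$ is automatically an $\epsilon$-cover, giving $\mathcal{N}(W,\epsilon,\|\cdot\|_2) \le |U| = \mathcal{M}(W,\epsilon,\|\cdot\|_2)$. The other mild subtlety is that the packing centers lie in $W$ (radius $\le a$), so the enclosing ball must have radius $a + \epsilon/2$ rather than $a$; this is where the ``$1$'' in $1 + 2a/\epsilon$ comes from, and it is essential for the bound to remain finite as $\epsilon \to \infty$. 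I would present the volume-ratio step cleanly and note that an identical estimate works for the $l_2$ ball in any finite dimension, which is all that is needed later when bounding the covering number of a single convolutional or fully connected layer.
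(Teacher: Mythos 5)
Your proof is correct and follows essentially the same route as the paper's: bound the covering number by the maximal packing number (maximality forces the packing to be a cover), then bound the packing size by comparing volumes of disjoint $\epsilon/2$-balls inside the enlarged ball of radius $a+\epsilon/2$. Your write-up is if anything slightly cleaner, since the paper states the volume comparison informally (and writes the dimension as $d$ rather than $r$), but the argument is the same.
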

\begin{proof} Let $M$ be the maximum $\epsilon$-packing of $W$ w.r.t $l_2$ norm, then $|M| = \mathcal{M}(W,\epsilon,\|\cdot\|_2)$. In the following, 
we first show that $ \mathcal{N}\left(W, \epsilon, \|\cdot\|_2\right) \leq |M|$ and then show that $|M| \leq \left(1 + \frac{2a}{\epsilon}\right)^d$.

For the first part, we demonstrate that $M$ is also an $\epsilon$-cover of $W$. Indeed, for any $w\in W, m\in M$, we must have $\|w-m\|_2 \leq \epsilon$. Otherwise, $\{w\} \bigcup M$ is a larger $\epsilon$-packing of $W$, which contradicts the definition of $M$. For the second part, let $B(w, \epsilon)$ denote the $\epsilon$-ball centered at $w$. Because $M$ is an $\epsilon$-packing of $W$, the set of $\epsilon/2$-balls centered at points in $M$ are disjoint and the union is covered within the $(a+\epsilon/2)$-ball centered at the origin. Hence,
	\begin{equation}
		\begin{aligned}
			\bigcup_{m\in M} B(m, \epsilon/2) &\leq B(0, a+\epsilon/2) \\
			|M|B(0,\epsilon/2)&\leq B(0,a+\epsilon/2) \\
			|M| (\epsilon/2)^d B(0,1) &\leq (a+\epsilon/2)^d B(0,1) \\
			|M| &\leq \left(\frac{2a}{\epsilon}+1\right)^d. 
		\end{aligned}
	\end{equation}
\end{proof}
Based on this lemma, we can bound the covering number for a single fully connected layer or a convolutional layer. In the following, we use $A$ and $W$ to denote the weight matrix for a fully connected layer and a convolutional layer, respectively. 

\begin{lemma}\label{lemma:one-fc-layer} 
Let $A \in\mathbb{R}^{d_{output}\times d_{input}}$ be the matrix for a fully connected layer satisfying $\|A\|_F\leq a$, and $Z\in\mathbb{R}^{d_{input}\times n}$ be a matrix with bounded $F$-norm representing the given input. We have the following covering number bound for this fully connected layer
    \begin{equation}
        \ln \mathcal{N}\left(\left\{AZ: \|A\|_F\leq a \right\},\epsilon,\|\cdot\|_F\right) \leq d_{input}d_{output}\ln\left(1+\frac{2a\|Z\|_F}{\epsilon}\right).
    \end{equation}
\end{lemma}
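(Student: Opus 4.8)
The plan is to reduce the covering number of the set $\{AZ : \|A\|_F \le a\}$ in Frobenius norm to the covering number of the Euclidean ball $\{A : \|A\|_F \le a\}$, and then invoke Lemma~\ref{lemma-cn} directly. The key observation is that the map $A \mapsto AZ$ is Lipschitz with respect to the relevant norms: for two matrices $A, A'$ with $\|A\|_F, \|A'\|_F \le a$ we have $\|AZ - A'Z\|_F = \|(A-A')Z\|_F \le \|A - A'\|_\sigma \|Z\|_F \le \|A - A'\|_F \|Z\|_F$. Hence if $U$ is an $\epsilon/\|Z\|_F$-cover of the ball $\{A \in \mathbb{R}^{d_{output}\times d_{input}} : \|A\|_F \le a\}$, then $\{AZ : A \in U\}$ is an $\epsilon$-cover of $\{AZ : \|A\|_F \le a\}$.

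Concretely, I would first identify $\mathbb{R}^{d_{output} \times d_{input}}$ with $\mathbb{R}^{d_{output} d_{input}}$ via vectorization, under which the Frobenius norm becomes the Euclidean $\ell_2$ norm. Applying Lemma~\ref{lemma-cn} with $r = d_{output} d_{input}$ and cover radius $\epsilon' = \epsilon / \|Z\|_F$ gives
\begin{equation}
\ln \mathcal{N}\left(\{A : \|A\|_F \le a\}, \epsilon/\|Z\|_F, \|\cdot\|_F\right) \le d_{output}d_{input} \ln\left(1 + \frac{2a\|Z\|_F}{\epsilon}\right).
\end{equation}
Then the Lipschitz argument above shows $\mathcal{N}(\{AZ : \|A\|_F \le a\}, \epsilon, \|\cdot\|_F)$ is at most this same quantity, since the image of an $\epsilon'$-cover of the domain is an $\epsilon$-cover of the image. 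This yields the claimed bound.

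I do not anticipate a genuine obstacle here — the argument is a standard ``cover the parameter space, push forward through a Lipschitz map'' reduction. The only points requiring a little care are: (i) confirming the submultiplicative inequality $\|(A-A')Z\|_F \le \|A-A'\|_\sigma \|Z\|_F \le \|A-A'\|_F \|Z\|_F$, which follows from $\|MZ\|_F \le \|M\|_\sigma \|Z\|_F$ and $\|M\|_\sigma \le \|M\|_F$; and (ii) handling the edge case $\|Z\|_F = 0$ (where $AZ = 0$ for all $A$, so the covering number is $1$ and the bound holds trivially), though since $Z$ has ``bounded $F$-norm'' and is a genuine input one may simply assume $\|Z\|_F > 0$. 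The bound is stated with $d_{input}d_{output}$ as the exponent's coefficient, matching the ambient dimension of the weight matrix, so no further optimization of the cover is attempted.
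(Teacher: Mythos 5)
Your proposal is correct and follows essentially the same route as the paper's proof: reduce to the covering number of the Frobenius ball $\{A:\|A\|_F\leq a\}$ via the Lipschitz estimate $\|AZ-\widehat{A}Z\|_F\leq\|A-\widehat{A}\|_F\|Z\|_F$, vectorize the weight matrix, and apply Lemma~\ref{lemma-cn} at scale $\epsilon/\|Z\|_F$. The only cosmetic difference is that you pass through the spectral norm as an intermediate bound and note the $\|Z\|_F=0$ edge case, neither of which changes the argument.
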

\begin{proof} 
Let $\widehat{A}$ be in the $\epsilon$-cover of $\{A: \|A\|_F\leq a\}$ such that $\|A-\widehat{A}\|_F \leq \epsilon$. Then, 
    \begin{equation}
        \|AZ - \widehat{A}Z\|_F \leq \|A - \widehat{A}\|_F \|Z\|_F \leq \|Z\|_F \epsilon.
    \end{equation}
This shows that any $\epsilon$-cover of $\{A:\|A\|_F\leq a\}$ is also an $\epsilon\|Z\|_F$-cover for $\{AZ:\|A\|_F\leq a\}$, i.e., 
    \begin{equation}\label{eq: one fc layer bound1}
        \ln \mathcal{N}\left(\left\{AZ: \|A\|_F\leq a \right\},\epsilon,\|\cdot\|_F\right) \leq \ln \mathcal{N}\left(\left\{A: \|A\|_F\leq a\right\},\frac{\epsilon}{\|Z\|_F},\|\cdot\|_F\right).
    \end{equation}
    To get the cover number of $A$, we reshape it into a one dimensional vector $\bar{A} \in \mathbb{R}^{d_{input}d_{output}}$. Then the $l_2$-norm of $\bar{A}$ is equivalent to the $F$-norm of $A$, i.e., $\|\bar{A}\|_2 = \|A\|_F \leq a$. Hence by Lemma \ref{lemma-cn}, we have
    \begin{equation}\label{eq:25}
        \ln \mathcal{N}\left(\{A: \|A\|_F\leq a\},\epsilon,\|\cdot\|_F \right) = \ln \mathcal{N}\left(\{\bar{A}: \|\bar{A}\|_2\leq a\},\epsilon,\|\cdot\|_2\right) \leq d_{input}d_{output}\ln\left(1+\frac{2a}{\epsilon}\right).
    \end{equation}
    Combining~\eqref{eq: one fc layer bound1} and~\eqref{eq:25}  concludes the proof.
\end{proof}

\begin{lemma}\label{lemma:one-conv-layer} 
Let $W=(w^{1},...,w^{c}) \in\mathbb{R}^{c\times r}$ satisfying $\|W\|_F\leq a$ and $Z\in\mathbb{R}^{d\times n}$ with bounded $\|Z\|_F$ denote the convolutional weight and the given input for one convolutional layer, respectively. Assuming that each filter $w^i$ in $W$ performs $m$ operations on $Z$, then the output is $ \mu(W, Z)\in\mathbb{R}^{mc\times n}$. We have the following covering number bound for this convolutional layer
    \begin{equation}
        \ln \mathcal{N}\left(\{\mu(W, Z): \|W\|_F\leq a\},\epsilon,\|\cdot\|_F\right) \leq cr\ln\left(1+\frac{2a\sqrt{m}\|Z\|_F}{\epsilon}\right).
    \end{equation}
\end{lemma}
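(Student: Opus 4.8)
The plan is to mimic the argument for the fully connected layer (Lemma~\ref{lemma:one-fc-layer}), adding one extra ingredient that accounts for the weight-sharing structure of the map $\gamma$. First I would pass from the output set to the weight set: if $\widehat{W}$ lies in an $\epsilon'$-cover of $\{W:\|W\|_F\le a\}$ with $\|W-\widehat{W}\|_F\le\epsilon'$, then using $\mu(W,Z)=\gamma(W)Z$ together with the same Frobenius-norm inequality used in Lemma~\ref{lemma:one-fc-layer},
\begin{equation}
\|\mu(W,Z)-\mu(\widehat{W},Z)\|_F = \|\gamma(W-\widehat{W})Z\|_F \le \|\gamma(W-\widehat{W})\|_F\,\|Z\|_F.
\end{equation}

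The key step, and the only place where the convolutional structure really enters, is the dilation identity $\|\gamma(V)\|_F=\sqrt{m}\,\|V\|_F$ for any convolutional weight $V=(v^1,\dots,v^c)$. This follows directly from the explicit form~\eqref{eq:gamma w}: the rows of $\gamma(V)$ are exactly the vectors $v^i_{S_k}$ for $i\in\{1,\dots,c\}$ and $k\in\{1,\dots,m\}$, and each $v^i_{S_k}$ is obtained from $v^i$ by permuting its entries into the coordinates indexed by $S_k$ and padding the rest with zeros, so $\|v^i_{S_k}\|_2=\|v^i\|_2$. Summing squared row norms gives $\|\gamma(V)\|_F^2=\sum_{i=1}^{c}\sum_{k=1}^{m}\|v^i_{S_k}\|_2^2=m\sum_{i=1}^{c}\|v^i\|_2^2=m\|V\|_F^2$. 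Combined with the display above, this shows that any $\epsilon'$-cover of $\{W:\|W\|_F\le a\}$ induces an $(\sqrt{m}\|Z\|_F\epsilon')$-cover of $\{\mu(W,Z):\|W\|_F\le a\}$, hence
\begin{equation}
\ln\mathcal{N}\!\left(\{\mu(W,Z):\|W\|_F\le a\},\epsilon,\|\cdot\|_F\right) \le \ln\mathcal{N}\!\left(\{W:\|W\|_F\le a\},\frac{\epsilon}{\sqrt{m}\|Z\|_F},\|\cdot\|_F\right).
\end{equation}

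Finally I would reshape $W$ into a vector $\bar{W}\in\mathbb{R}^{cr}$, observe that $\|\bar{W}\|_2=\|W\|_F\le a$, and apply Lemma~\ref{lemma-cn} with ambient dimension $cr$ to obtain $\ln\mathcal{N}(\{W:\|W\|_F\le a\},\epsilon',\|\cdot\|_F)\le cr\ln(1+2a/\epsilon')$; substituting $\epsilon'=\epsilon/(\sqrt{m}\|Z\|_F)$ then yields the claimed bound. I do not expect any genuine obstacle: the argument is routine given Section~\ref{subsubsec-matrix}, and the one substantive point is the identity $\|\gamma(V)\|_F=\sqrt{m}\,\|V\|_F$, which is the source of the $\sqrt{m}$ factor in the statement and is precisely what exploits the sparse, weight-shared structure of $\gamma(W)$ rather than treating it as an unstructured $mc\times d$ matrix.
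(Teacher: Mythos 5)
Your proposal is correct and follows essentially the same route as the paper: reduce an $\epsilon$-cover of the output set to an $\epsilon$-cover of the weight ball with Lipschitz factor $\sqrt{m}\,\|Z\|_F$, then apply Lemma~\ref{lemma-cn} in dimension $cr$. The only (harmless) difference is how that factor is obtained — you use linearity of $\gamma$ together with $\|\gamma(V)\|_F=\sqrt{m}\,\|V\|_F$ (the identity the paper itself proves in Proposition~\ref{prop: gamma w sigma standard}) and $\|AB\|_F\le\|A\|_F\|B\|_F$, whereas the paper argues row by row via $w^i_{S_j}Z=w^iZ_{S_j}$ and a max-over-$j$ bound, both yielding the same constant.
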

\begin{proof} 
Let $\widehat{W} = (\widehat{w}^{1},...,\widehat{w}^{c})$ be in the $\epsilon$-cover of $\{W:\|W\|_F\leq a\}$ such that 
\begin{equation}\label{eq:cover pre}
    \|W-\widehat{W}\|_F = \bigg(\sum_{i=1}^c \|w^i - \widehat{w}^i\|_2^2 \bigg)^{\frac{1}{2}} \leq \epsilon.
\end{equation}
Let $\gamma(W)\in\mathbb{R}^{mc\times d}$ and $\gamma(\widehat{W})\in\mathbb{R}^{mc\times d}$ denote the corresponding fully connected weight matrices generated by convolutional weight $W$ and $\widehat{W}$. Next, we have
 \begin{equation}
        \begin{aligned}
            \|\gamma(W)Z - \gamma(\widehat{W})Z\|_F 
            \overset{(a)}{=}& \left(\sum_{i=1}^{c}\sum_{j=1}^{m}\|w^{i}_{S_j}Z- \widehat{w}^{i}_{S_j}Z\|_2^2\right)^{\frac{1}{2}} \\
            \leq & \sqrt{m} \left(\sum_{i=1}^c\max_{j}\|w^{i}_{S_j}Z- \widehat{w}^{i}_{S_j}Z\|_2^2 \right)^{\frac{1}{2}} \\
            \overset{(b)}{=} & \sqrt{m} \left(\sum_{i=1}^c\max_{j}\|(w^i - \widehat{w}^i) Z_{S_j}\|_2^2 \right)^{\frac{1}{2}} \\
            \leq & \sqrt{m} \left(\sum_{i=1}^c\max_{j}\|w^i - \widehat{w}^i\|_2^2\|Z_{S_j}\|_F^2 \right)^{\frac{1}{2}} \\
            \leq & \sqrt{m} \left(\sum_{i=1}^c\|w^i - \widehat{w}^i\|_2^2\|Z\|_F^2 \right)^{\frac{1}{2}} \\ \overset{(c)}{\leq} & \sqrt{m}\epsilon\|Z\|_F,
        \end{aligned}
    \end{equation}
    where $(a)$ is due to~\eqref{eq:gamma w}, $(b)$ is from~\eqref{eq: indexing Z}, and $(c)$ is due to~\eqref{eq:cover pre}. Hence,
    \begin{equation}
        \|\mu(W, Z) - \mu(\widehat{W}, Z)\|_F = \|\gamma(W)Z - \gamma(\widehat{W})Z\|_F \leq \sqrt{m}\epsilon\|Z\|_F.
    \end{equation}
    This shows that any $\epsilon$-cover of $\{W: \|W\|_F\leq a\}$ is also a $\sqrt{m}\epsilon\|Z\|_F$-cover of $\{\mu(W, Z):\|W\|_F\leq a \}$, i.e.,
    \begin{equation}\label{eq: one layer bound1}
        \ln \mathcal{N}\left(\{\mu(W, Z):\|W\|_F\leq a\},\epsilon,\|\cdot\|_F\right) \leq \ln \mathcal{N}\left(\{W: \|W\|_F\leq a\},\frac{\epsilon}{\sqrt{m}\|Z\|_F},\|\cdot\|_F\right).
    \end{equation}
   Similar to the proof of Lemma \ref{lemma:one-fc-layer}, we reshape $W$ into a one-dimensional vector $\bar{W} \in \mathbb{R}^{cr}$. Then the $l_2$-norm of $\bar{W}$ is equivalent to the $F$-norm of $W$, i.e., $\|\bar{W}\|_2 = \|W\|_F \leq a$. Hence by Lemma \ref{lemma-cn}, we have
    \begin{equation}\label{eq:29}
        \ln \mathcal{N}(\{W: \|W\|_F\leq a\},\epsilon,\|\cdot\|_F) = \ln \mathcal{N}(\{\bar{W}: \|\bar{W}\|_2\leq a\},\epsilon,\|\cdot\|_2) \leq cr\ln\left(1+\frac{2a}{\epsilon}\right).
    \end{equation}
    Combining~\eqref{eq: one layer bound1} and~\eqref{eq:29} concludes the proof.
\end{proof}
\begin{remark}
From Lemma \ref{lemma:one-fc-layer} and Lemma \ref{lemma:one-conv-layer}, we can understand the advantage of convolutional layers over fully connected layers as follows. Assuming that there are $c$ convolutional filters and each one generates $m$ outputs, we have $d_{output} = cm$. Let convolutional weight $W$ satisfy $\|W\|_F\leq a$, then the corresponding fully connected matrix satisfies $\|\gamma(W)\|_F\leq \sqrt{m}a$. In order to make an appropriate comparison, we assume that $\|A\|_F\leq \sqrt{m}a$. Then, given the same input $Z \in \mathbb{R}^{d_{input} \times n}$, we have
\begin{equation}
    \frac{\ln \mathcal{N}(\{\mu(W, Z): \|W\|_F\leq a\},\epsilon,\|\cdot\|_F)}{\ln \mathcal{N}(\{AZ: \|A\|_F\leq \sqrt{m}a\},\epsilon,\|\cdot\|_F)} = \mathcal{O}\left(\frac{d_{input}d_{output}}{cr}\right) = \mathcal{O}\left(\frac{md_{input}}{r}\right).
\end{equation}
Typically we have $r \ll d_{input}$, hence the convolutional layers have much tighter covering number bounds then the fully connected layers.
\end{remark}

\subsection{Covering Number Bounds for Neural Networks}\label{subsec-cnn}
In previous section, we have obtained covering number bounds for a single network layer that is either fully connected or convolutional. Based on these results, this section studies covering number bounds for multilayer neural networks including FNNs, FCNNs and CNNs. 
Our analysis depends on the following lemma which shows that the covering number of a multilayer neural network can be bounded by the product of the covering number bounds of its layers.

\begin{lemma}\label{bartlett:cn}{(\cite{bartlett2017spectrally}, Lemma A.7)} Let $(\epsilon_1,...,\epsilon_L)$ be given, along with fixed Lipschitz mappings $(\sigma_1,...,\sigma_L)$ where $\sigma_i$ is $\rho_i$-Lipschitz satisfying $\sigma_i(0)=0$. Let $\mathbf{A}=(A_1,...,A_L)$ denote the fully connected weight matrices for all layers and $X \in \mathbb{R}^{d \times n}$ be the given input data with bounded $F$-norm. Denote by $\mathcal{H}_X$ the family of matrices generated by evaluating $X$ for all neural networks $F_\mathbf{A}(X)$ defined in~\eqref{def: fnn} with bounded weights, i.e., $\mathcal{H}_X \triangleq \{F_{\mathbf{A}}(X): \|A_i\|_\sigma \leq s_i, \forall i=1, ..., L\}$. Then, letting
\begin{equation}\label{eq:epsilon}
\epsilon \triangleq \sum_{j\leq L}\epsilon_j\rho_j\prod_{l=j+1}^{L}\rho_l s_l, 
\end{equation}
we have the following $\epsilon$-covering number bound for $\mathcal{H}_X$:
\begin{equation}
    \mathcal{N}(\mathcal{H}_X,\epsilon,\|\cdot\|_F) \leq \prod_{i=1}^{L}\sup_{\mathbf{A}_{i-1},\forall j \leq i-1, \|A_j\|_\sigma \leq s_j}\mathcal{N}\left(\left\{A_iF_{\mathbf{A}_{i-1}}(X): \|A_i\|_\sigma \leq s_i\right\},\epsilon_i, \|\cdot\|_F\right),
\end{equation}
where $\mathbf{A}_{i}$ denotes $(A_1, ..., A_{i})$ and $F_{\mathbf{A}_{i}}$ denotes the network constructed using the first $i$ layers of $F_\mathbf{A}$.
\end{lemma}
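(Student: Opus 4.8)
The plan is an induction on depth in which the $\epsilon$-cover of $\mathcal{H}_X$ is assembled one layer at a time. Write $N_k$ for the $k$-th factor $\sup_{\mathbf{A}_{k-1}}\mathcal{N}(\{A_kF_{\mathbf{A}_{k-1}}(X):\|A_k\|_\sigma\le s_k\},\epsilon_k,\|\cdot\|_F)$ on the right-hand side of the claimed bound, and set $\epsilon^{(i)} := \sum_{j\le i}\epsilon_j\rho_j\prod_{l=j+1}^{i}\rho_ls_l$, so that $\epsilon^{(L)}=\epsilon$. I would show by induction on $i\in\{0,\dots,L\}$ that there is a finite set $\mathcal{C}_i$ of matrices, \emph{each of which is itself a sub-network output} $F_{\mathbf{B}_i}(X)$ with $\|B_j\|_\sigma\le s_j$ for all $j\le i$, such that $|\mathcal{C}_i|\le\prod_{k=1}^{i}N_k$ and for every admissible $\mathbf{A}_i$ some $\widehat{Z}_i\in\mathcal{C}_i$ has $\|F_{\mathbf{A}_i}(X)-\widehat{Z}_i\|_F\le\epsilon^{(i)}$. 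The base case $i=0$ holds with $\mathcal{C}_0=\{X\}$ and $\epsilon^{(0)}=0$, and the case $i=L$ is exactly the lemma.

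For the inductive step, assume $\mathcal{C}_{i-1}$ has been built. For each $\widehat{Z}=F_{\mathbf{B}_{i-1}}(X)\in\mathcal{C}_{i-1}$, take a minimal $\epsilon_i$-cover of the pre-activation set $T_{\widehat{Z}} := \{A_i\widehat{Z}:\|A_i\|_\sigma\le s_i\}$; since $\widehat{Z}$ is an honest sub-network output, this cover has at most $N_i$ points, and since $T_{\widehat{Z}}$ is convex (a linear image of the operator-norm ball) the cover points may be taken inside $T_{\widehat{Z}}$, hence of the form $A_i'\widehat{Z}$. Let $\mathcal{C}_i$ be the collection of all $\sigma_i(A_i'\widehat{Z})$ as $\widehat{Z}$ runs over $\mathcal{C}_{i-1}$ and $A_i'\widehat{Z}$ over the chosen cover of $T_{\widehat{Z}}$; each such element equals $F_{(\mathbf{B}_{i-1},A_i')}(X)$, a valid depth-$i$ network, and $|\mathcal{C}_i|\le|\mathcal{C}_{i-1}|N_i\le\prod_{k\le i}N_k$. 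To verify the approximation property, fix admissible $\mathbf{A}_i=(\mathbf{A}_{i-1},A_i)$, put $Z_{i-1}=F_{\mathbf{A}_{i-1}}(X)$, choose $\widehat{Z}_{i-1}\in\mathcal{C}_{i-1}$ with $\|Z_{i-1}-\widehat{Z}_{i-1}\|_F\le\epsilon^{(i-1)}$, and choose $Y=A_i'\widehat{Z}_{i-1}$ in the cover of $T_{\widehat{Z}_{i-1}}$ with $\|A_i\widehat{Z}_{i-1}-Y\|_F\le\epsilon_i$ (possible because $A_i\widehat{Z}_{i-1}\in T_{\widehat{Z}_{i-1}}$). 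Then, using that $\sigma_i$ is $\rho_i$-Lipschitz, that $\|A_iB\|_F\le\|A_i\|_\sigma\|B\|_F$, and that $\|A_i\|_\sigma\le s_i$,
\begin{align*}
\|F_{\mathbf{A}_i}(X)-\sigma_i(Y)\|_F
&= \|\sigma_i(A_iZ_{i-1})-\sigma_i(Y)\|_F \;\le\; \rho_i\|A_iZ_{i-1}-Y\|_F \\
&\le \rho_i\bigl(\|A_i(Z_{i-1}-\widehat{Z}_{i-1})\|_F+\|A_i\widehat{Z}_{i-1}-Y\|_F\bigr)
\;\le\; \rho_i\bigl(s_i\epsilon^{(i-1)}+\epsilon_i\bigr) \;=\; \epsilon^{(i)},
\end{align*}
the last equality being the recursion $\epsilon^{(i)}=\rho_is_i\,\epsilon^{(i-1)}+\rho_i\epsilon_i$ read off the definition.

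I expect the only delicate point to be the insistence that every element of each intermediate cover $\mathcal{C}_{i-1}$ be realizable as an actual sub-network output: this is precisely what licenses bounding the $\epsilon_i$-cover of $T_{\widehat{Z}}$ by the supremum $N_i$ in the statement, since that supremum ranges only over sets of the form $\{A_iF_{\mathbf{A}_{i-1}}(X):\cdot\}$ with $F_{\mathbf{A}_{i-1}}(X)$ an honest sub-network output, and the covering number of $\{A_i\widehat{Z}:\cdot\}$ for an arbitrary matrix $\widehat{Z}$ need not be controlled by $N_i$. The convexity of $T_{\widehat{Z}}$ makes this free of charge: projecting any optimal-size cover onto the convex set $T_{\widehat{Z}}$ does not increase distances to points of $T_{\widehat{Z}}$, so an optimal cover sitting inside $T_{\widehat{Z}}$ exists, and its elements, being linear in $\widehat{Z}$, become genuine depth-$i$ networks once $\sigma_i$ is applied. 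Everything else — the two triangle-inequality splits, the Lipschitz and operator-norm contractions, and unrolling $\epsilon^{(i)}=\rho_i(s_i\epsilon^{(i-1)}+\epsilon_i)$ down to $\epsilon^{(0)}=0$ to recover $\epsilon^{(L)}=\epsilon$ — is routine bookkeeping.
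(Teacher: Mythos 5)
Your layer-by-layer induction is correct and is essentially the argument behind this statement, which the paper itself does not prove but imports directly from \cite{bartlett2017spectrally} (Lemma A.7): there too the cover of $\mathcal{H}_X$ is assembled by composing per-layer $\epsilon_i$-covers and propagating the error through the recursion $\epsilon^{(i)}=\rho_i\bigl(s_i\epsilon^{(i-1)}+\epsilon_i\bigr)$ using the $\rho_i$-Lipschitzness of $\sigma_i$ and $\|A_iB\|_F\le\|A_i\|_\sigma\|B\|_F$. Your extra care in making each intermediate cover proper — projecting the minimal cover onto the closed convex set $T_{\widehat{Z}}$, which is nonexpansive in the Frobenius (Hilbert) norm and so preserves both the radius $\epsilon_i$ and the cardinality, ensuring every cover center is a genuine sub-network output with admissible weights and hence that each factor is bounded by the supremum $N_i$ in the statement — is a valid refinement of a point the original proof treats more casually, and it does not change the approach.
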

Based on this lemma, we first prove the covering number bound for CNNs and then present the bounds for FNNs and FCNNs, both of which can be considered as special cases of CNNs.
\begin{lemma}\label{lemma-cnn}
Denote the input data by $X \in \mathbb{R}^{d \times n}$ with bounded $F$-norm. Let $(\sigma_1, ...,\sigma_L)$ be fixed functions where $\sigma_i$ is $\rho_i$-Lipschitz satisfying $\sigma_i(0)=0$. Let $\mathbf{C} = (C_1, ..., C_L)$ be fully connected weight matrices of all layers, where $C_i \in \mathbb{R}^{d_{i} \times d_{i-1}}$ and $d_0=d$. Let $S_\mathbf{A}$ and $S_\mathbf{W}$ denote the index set of fully connected layers and convolutional layers, respectively. Then if the layer index $i$ is in $S_\mathbf{A}$, we have $C_i = A_i$. Otherwise $i \in S_\mathbf{W}$ and $C_i = \gamma_i(W_i)\in \mathbb{R}^{d_{i} \times d_{i-1}}$, where $W_i \in \mathbb{R}^{c_i\times r_i}$ and each filter in $W_i$ is of size. Denote by $\mathcal{H}_X^\mathcal{C}$ the family of result matrices generated by evaluating $X$ for all CNNs $F_{\mathbf{C}}$ defined in~\eqref{eq: cnn def} with bounded weights, i.e.,
\begin{equation}
    \begin{aligned}
        \mathcal{H}_X^\mathcal{C} \triangleq \big\{F_{\mathbf{C}}(X): \|A_i\|_F\leq a_i, \forall i \in S_\mathbf{A};\;  \|W_i\|_F \leq a_i, \forall i \in S_\mathbf{W};\; \|C_i\|_\sigma \leq s_i,\forall i = 1, ..., L\big\}.
    \end{aligned}
\end{equation}
Let $\mathscr{R}_{\mathbf{C}}$ be defined in~\eqref{def:rc}, then we have the following covering number bound
\begin{equation}
	\begin{aligned}
		\ln \mathcal{N}\left(\mathcal{H}_X^\mathcal{C}, \epsilon,\|\cdot\|_F\right)
		\leq \left(\frac{ \|X\|_F \mathscr{R}_{\mathbf{C}}}{\epsilon}\right)^{\frac{1}{2}}.
	\end{aligned}
\end{equation}
\end{lemma}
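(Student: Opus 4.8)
The plan is to chain the single-layer covering bounds of Lemma~\ref{lemma:one-fc-layer} and Lemma~\ref{lemma:one-conv-layer} through the multilayer composition bound of Lemma~\ref{bartlett:cn}, and then to choose the per-layer resolutions $\epsilon_1,\dots,\epsilon_L$ so that the resulting product of per-layer bounds collapses exactly into the complexity measure $\mathscr{R}_{\mathbf{C}}$ of~\eqref{def:rc}.

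First I would record the forward-propagation estimate: for the truncated network $F_{\mathbf{C}_{i-1}}$ formed from the first $i-1$ layers of $F_{\mathbf{C}}$, the spectral constraints $\|C_j\|_\sigma\le s_j$ together with $\rho_j$-Lipschitzness and $\sigma_j(0)=0$ give $\|F_{\mathbf{C}_{i-1}}(X)\|_F\le\|X\|_F\prod_{j=1}^{i-1}\rho_j s_j$. This uniform bound lets me control the (otherwise variable) input to layer $i$ inside Lemma~\ref{bartlett:cn}. Applying Lemma~\ref{bartlett:cn} to $\mathcal{H}_X^{\mathcal{C}}$ — using that for a convolutional layer $C_i=\gamma_i(W_i)$ the set $\{C_iZ:\|W_i\|_F\le a_i\}$ is precisely $\{\mu_i(W_i,Z):\|W_i\|_F\le a_i\}$, and that dropping the spectral constraint only enlarges the set to be covered — I get a per-layer bound
\[
\ln\mathcal{N}_i\le \mathrm{dim}_i\,\ln\!\Big(1+\frac{2\hat a_i\|X\|_F\prod_{j<i}\rho_j s_j}{\epsilon_i}\Big),
\]
with $(\mathrm{dim}_i,\hat a_i)=(d_id_{i-1},a_i)$ for a fully connected layer and $(\mathrm{dim}_i,\hat a_i)=(c_ir_i,\,a_i\sqrt{d_i/c_i})$ for a convolutional layer.

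The remaining work is an optimization over the $\epsilon_i$. I would apply the elementary bound $\ln(1+x)\le\sqrt{x}$ to each layer term and then set $\epsilon_i=\epsilon/(L\alpha_i)$, where $\alpha_i=\rho_i\prod_{l=i+1}^L\rho_l s_l$; this respects the admissibility constraint $\epsilon=\sum_i\epsilon_i\alpha_i$ of~\eqref{eq:epsilon}. Using the telescoping identity $\alpha_i\prod_{j<i}\rho_j s_j=\big(\prod_{j=1}^L\rho_j s_j\big)/s_i$, each layer term becomes $\sqrt{2\|X\|_F L\prod_{j}\rho_j s_j/\epsilon}\cdot\mathrm{dim}_i\sqrt{\hat a_i/s_i}$. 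Taking logarithms in Lemma~\ref{bartlett:cn}, summing over $i$, and applying Cauchy--Schwarz in the form $\big(\sum_i\mathrm{dim}_i\sqrt{\hat a_i/s_i}\big)^2\le L\sum_i\mathrm{dim}_i^2\hat a_i/s_i$ turns the sum into $\sqrt{L}\,\big(\sum_i\mathrm{dim}_i^2\hat a_i/s_i\big)^{1/2}$; collecting the factor $2$, the $L^2$, the product $\prod_j\rho_j s_j$, and the per-layer summands reconstructs $\mathscr{R}_{\mathbf{C}}$ under the square root, giving $\ln\mathcal{N}(\mathcal{H}_X^{\mathcal{C}},\epsilon,\|\cdot\|_F)\le(\|X\|_F\mathscr{R}_{\mathbf{C}}/\epsilon)^{1/2}$.

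The main obstacle is bookkeeping rather than a new idea. One must check that (i) Lemma~\ref{bartlett:cn}, as stated for fully connected networks, is legitimately applied to a network with convolutional layers — the identification $C_i=\gamma_i(W_i)$ and the image characterization above are what make the single-layer lemmas applicable, while the spectral constraints $\|C_i\|_\sigma\le s_i$ are retained only for the forward-propagation step; and (ii) the particular (non-optimal but clean) choice $\epsilon_i\propto1/\alpha_i$, the two appeals to Cauchy--Schwarz and to $\ln(1+x)\le\sqrt x$, and the $\sqrt{d_i/c_i}$ factor arising from the $m_i=d_i/c_i$ outputs per filter all line up so that the constants and exponents match~\eqref{def:rc} exactly. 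A subtler point worth spelling out is precisely why the spectral-norm constraint may be discarded inside each single-layer covering estimate without loss.
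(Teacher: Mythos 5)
Your proposal is correct and follows essentially the same route as the paper's proof: forward-propagate the spectral-norm constraints to bound $\|F_{\mathbf{C}_{i-1}}(X)\|_F$, feed the single-layer bounds of Lemma~\ref{lemma:one-fc-layer} and Lemma~\ref{lemma:one-conv-layer} (with the spectral constraint dropped per layer, which only enlarges the set to be covered) into Lemma~\ref{bartlett:cn}, apply $\ln(1+x)\leq\sqrt{x}$, and then tune the per-layer resolutions $\epsilon_i$. The only, immaterial, difference is the last step: the paper chooses the allocation weights proportional to the per-layer complexities so that every summand equals the same quantity and the sum evaluates exactly to $L\big(\sum_{i\in S_\mathbf{A}}d_i^2d_{i-1}^2a_i/s_i+\sum_{i\in S_\mathbf{W}}c_i^2r_i^2a_i\sqrt{d_i/c_i}/s_i\big)^{1/2}$, whereas you take the uniform allocation and apply Cauchy--Schwarz, which produces the identical factor and hence the same bound $\left(\|X\|_F\mathscr{R}_{\mathbf{C}}/\epsilon\right)^{1/2}$ from~\eqref{def:rc}.
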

\begin{proof}
For any $\epsilon>0$, define $(\epsilon_1,...,\epsilon_L)$ by $\epsilon_i = \frac{\alpha_i\epsilon}{\rho_i\prod_{l=i+1}^{L}\rho_ls_l}$ for any $\alpha_i > 0$ satisfying $\sum_{i=1}^L \alpha_i = 1$. Then we have
\begin{equation}\label{eq: epsilon cnn}
    \epsilon = \sum_{i\leq L}\epsilon_i\rho_i\prod_{l=i+1}^{L}\rho_is_i,
\end{equation}
which is of the form as~\eqref{eq:epsilon}. By Lemma \ref{bartlett:cn}, the covering number of $\mathcal{H}_X^\mathcal{C}$ is bounded by
\begin{equation}\label{eq:cover cnn}
    \mathcal{N}(\mathcal{H}_X^\mathcal{C},\epsilon, \|\cdot\|_F) \leq \prod_{i=1}^{L}\sup_{\mathbf{C}_{i-1}, \forall j \leq i-1, \|C_j\|_\sigma \leq s_j}\mathcal{N}\left(\left\{C_iF_{\mathbf{C}_{i-1}}(X):\|C_i\|_{\sigma}\leq s_i \right\},\epsilon_i, \|\cdot\|_F\right),
\end{equation}
where $\mathbf{C}_i = (C_1, ..., C_i)$.
\noindent By Lemma \ref{lemma:one-fc-layer} and Lemma \ref{lemma:one-conv-layer}, if $i \in S_\mathbf{A}$, we have
\begin{equation}\label{eq:ni fc}
    \ln \mathcal{N}\left(\left\{C_iF_{\mathbf{C}_{i-1}}(X):\|C_i\|_{\sigma}\leq s_i \right\},\epsilon_i, \|\cdot\|_F\right) \leq d_i d_{i-1}\ln \bigg(1 + \frac{2a_i \|F_{\mathbf{C}_{i-1}}(X)\|_F }{\epsilon_i}\bigg),
\end{equation}
and if $i \in S_\mathbf{W}$,
\begin{equation}\label{eq:ni conv}
    \ln \mathcal{N}\left(\left\{C_iF_{\mathbf{C}_{i-1}}(X):\|C_i\|_{\sigma}\leq s_i \right\},\epsilon_i, \|\cdot\|_F\right) \leq c_i r_i\ln \bigg(1 + \frac{2a_i \sqrt{d_i/c_i} \|F_{\mathbf{C}_{i-1}}(X)\|_F }{\epsilon_i}\bigg).
\end{equation}
Since $\|C_j\|_\sigma \leq s_j$, we have
\begin{equation}\label{eq:$F$-norm cnn}
	\begin{aligned}
		\|F_{\mathbf{C}_i}(X)\|_F &= \|\sigma_i(C_iF_{\mathbf{C}_{i-1}}(X)) - \sigma_i(0)\|_F\\
		&\leq \rho_{i}\|C_i\|_\sigma\|F_{\mathbf{C}_{i-1}}(X)\|_F \\
		&\leq \rho_{i}s_i\|F_{\mathbf{C}_{i-1}}(X)\|_F \\
		&\leq \|X\|_F\prod_{j=1}^{i}\rho_j s_j,
	\end{aligned}
\end{equation}
where the last inequality is attained by induction. Substituting~\eqref{eq:ni fc},~\eqref{eq:ni conv} and~\eqref{eq:$F$-norm cnn} into~\eqref{eq:cover cnn}, we have
\begin{equation}
	\begin{aligned}
		\ln \mathcal{N}\left(\mathcal{H}_X^\mathcal{C},\epsilon, \|\cdot\|_F\right) &\leq \sum_{i \in S_\mathbf{A}} \sup_{\mathbf{C}_{i-1}, \forall j \leq i-1, \|C_j\|_\sigma \leq s_j} d_id_{i-1} \ln \bigg(1 + \frac{2a_i \|F_{\mathbf{C}_{i-1}}(X)\|_F }{\epsilon_i}\bigg) \\
        &\quad +  \sum_{i \in S_\mathbf{W}} \sup_{\mathbf{C}_{i-1}, \forall j \leq i-1, \|C_j\|_\sigma \leq s_j} c_i r_i \ln \bigg(1 + \frac{2a_i \sqrt{d_i/c_i} \|F_{\mathbf{C}_{i-1}}(X)\|_F }{\epsilon_i}\bigg) \\
        &\leq \sum_{i \in S_\mathbf{A}}  d_i d_{i-1} \ln \bigg(1 + \frac{2a_i \|X\|_F\prod_{j=1}^{i-1}\rho_{j}s_j }{\epsilon_i}\bigg) \\
        &\quad + \sum_{i \in S_\mathbf{W}} c_i r_i \ln \bigg(1 + \frac{2a_i \sqrt{d_i/c_i} \|X\|_F\prod_{j=1}^{i-1}\rho_{j}s_j }{\epsilon_i}\bigg).
	\end{aligned}
\end{equation}
By the definition of $\epsilon_i$, we further have
\begin{equation}
	\begin{aligned}
		 &\ln \mathcal{N}\left(\mathcal{H}_X^\mathcal{C},\epsilon, \|\cdot\|_F\right) \\
		 &\leq \sum_{i \in S_\mathbf{A}}  d_i d_{i-1} \ln \bigg(1 + \frac{2a_i \|X\|_F\prod_{j=1}^{i-1}\rho_{j}s_j }{\frac{\alpha_i\epsilon}{\rho_i\prod_{l=i+1}^{L}\rho_ls_l}}\bigg) + \sum_{i \in S_\mathbf{W}}  c_i r_i \ln \bigg(1 + \frac{2a_i \sqrt{d_i/c_i} \|X\|_F\prod_{j=1}^{i-1}\rho_{j}s_j }{\frac{\alpha_i\epsilon}{\rho_i\prod_{l=i+1}^{L}\rho_ls_l}}\bigg) \\
		&= \sum_{i \in S_\mathbf{A}} d_i d_{i-1} \ln \bigg(1 + \frac{2a_i \|X\|_F\prod_{j=1}^{L}\rho_{j}s_j }{s_i \alpha_i \epsilon}\bigg) + \sum_{i \in S_\mathbf{W}} c_i r_i \ln \bigg(1 + \frac{2a_i \sqrt{d_i/c_i } \|X\|_F\prod_{j=1}^{L}\rho_{j}s_j }{s_i \alpha_i \epsilon}\bigg)\\
		&\leq \sum_{i\in S_\mathbf{A}} d_i d_{i-1}\bigg(\frac{2a_i \|X\|_F\prod_{j=1}^{L}\rho_{j}s_j }{s_i \alpha_i \epsilon}\bigg)^{\frac{1}{2}} + \sum_{i \in S_\mathbf{W}} c_i r_i\bigg(\frac{2a_i \sqrt{d_i/c_i} \|X\|_F\prod_{j=1}^{L}\rho_{j}s_j }{s_i \alpha_i \epsilon}\bigg)^{\frac{1}{2}}\\
		&=\bigg(\frac{2\|X\|_F\prod_{i=1}^{L}\rho_is_i}{\epsilon}\bigg)^{\frac{1}{2}} \left( \sum_{i \in S_\mathbf{A}} d_i d_{i-1}\left(\frac{a_i}{s_i\alpha_i}\right)^{\frac{1}{2}} + \sum_{i \in S_\mathbf{W}} c_i r_i\left(\frac{a_i\sqrt{d_i/c_i}}{s_i\alpha_i}\right)^{\frac{1}{2}} \right).
	\end{aligned}
\end{equation}
where the last inequality is due to the fact that $\ln(1+x) \leq \sqrt{x}$ for $\forall x\geq 0$. Define
\begin{equation}
    \Delta \triangleq \sum_{j \in S_\mathbf{A}}\frac{d_j^2d_{j-1}^2a_j}{s_j} + \sum_{j \in S_\mathbf{W}}\frac{c_j^2r_j^2a_j\sqrt{d_j/c_j}}{s_j},
\end{equation}
and let
\begin{equation}
    \alpha_i = 
    \begin{cases}
    \frac{d_i^2d_{i-1}^2a_i}{s_i \Delta }, & \forall i \in S_\mathbf{A} \\
    \frac{c_i^2r_i^2a_i\sqrt{d_i/c_i}}{s_i \Delta }, & \forall i \in S_\mathbf{W}.
    \end{cases}
\end{equation}
It is easy to see that $\sum_{i=1}^{L}\alpha_i=1$ by extracting the common factor $1/\Delta$ from all $\alpha_i$s and using the definition of $\Delta$. Then we have 
\begin{equation}
    \begin{aligned}
    \ln \mathcal{N}\left(\mathcal{H}_X^\mathcal{C},\epsilon, \|\cdot\|_F\right) &\leq \bigg(\frac{2\|X\|_F\prod_{i=1}^{L}\rho_is_i}{\epsilon}\bigg)^{\frac{1}{2}} \left(\sum_{i \in S_\mathbf{A}}\frac{d_i^2d_{i-1}^2a_i}{s_i} + \sum_{i \in S_\mathbf{W}} \frac{c_i^2r_i^2a_i\sqrt{d_i/c_i}}{s_i} \right)^{\frac{1}{2}}L \\
    &= \left(\frac{ \|X\|_F \mathscr{R}_{\mathbf{C}}}{\epsilon}\right)^{\frac{1}{2}}.
    \end{aligned}
\end{equation}
\end{proof}
By substituting all layers in CNNs to fully connected layers, i.e., $S_\mathbf{W}$ is empty, we have the following covering number bound for FNNs.
\begin{proposition}\label{lemma-fnn}
Denote the input data by $X \in \mathbb{R}^{d \times n}$ with bounded $F$-norm. Let $(\sigma_1, ...,\sigma_L)$ be fixed functions where $\sigma_i$ is $\rho_i$-Lipschitz satisfying $\sigma_i(0)=0$. Let $\mathbf{A} = (A_1, ..., A_L)$ be fully connected weight matrices of all layers, where $A_i \in \mathbb{R}^{d_{i} \times d_{i-1}}$ and $d_0=d$. Denote by $\mathcal{H}_X^\mathcal{A}$ the family of result matrices generated by evaluating $X$ for all FNNs $F_{\mathbf{A}}$ defined in~\eqref{def: fnn} with bounded weights, i.e.,
\begin{equation}
    \begin{aligned}
        &\mathcal{H}_X^\mathcal{A} \triangleq \big\{F_{\mathbf{A}}(X): \|A_i\|_F\leq a_i, \|A\|_\sigma \leq s_i, \forall i = 1,..., L\big\}.
    \end{aligned}
\end{equation}
Letting $\mathscr{R}_{\mathbf{A}}$ be defined in~\eqref{def:ra}, we have the following covering number bound
\begin{equation}
	\begin{aligned}
		\ln \mathcal{N}\left(\mathcal{H}_X^\mathcal{A}, \epsilon,\|\cdot\|_F\right)
		\leq \left(\frac{ \|X\|_F \mathscr{R}_{\mathbf{A}}}{\epsilon}\right)^{\frac{1}{2}}.
	\end{aligned}
\end{equation}
\end{proposition}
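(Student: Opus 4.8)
The plan is to derive this as the special case of Lemma~\ref{lemma-cnn} obtained by declaring every layer fully connected, i.e., $S_\mathbf{W}=\emptyset$ and $S_\mathbf{A}=\{1,\dots,L\}$. With that choice the constraint set $\mathcal{H}_X^\mathcal{C}$ becomes exactly $\mathcal{H}_X^\mathcal{A}$ (the $\|W_i\|_F\le a_i$ constraints disappear and only $\|A_i\|_F\le a_i$ together with $\|A_i\|_\sigma\le s_i$ remain), and the convolutional sum in the definition~\eqref{def:rc} of $\mathscr{R}_\mathbf{C}$ vanishes, so $\mathscr{R}_\mathbf{C}$ collapses term-by-term to $\mathscr{R}_\mathbf{A}$ in~\eqref{def:ra}. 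Hence the conclusion of Lemma~\ref{lemma-cnn} is literally the claimed inequality, and strictly one sentence suffices; for completeness I would still spell out the (now shorter) self-contained argument.

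Concretely, first fix $\epsilon>0$ and weights $\alpha_1,\dots,\alpha_L>0$ with $\sum_i\alpha_i=1$, set $\epsilon_i=\alpha_i\epsilon/(\rho_i\prod_{l=i+1}^L\rho_l s_l)$ so that $\epsilon=\sum_{j\le L}\epsilon_j\rho_j\prod_{l=j+1}^L\rho_l s_l$ has the form~\eqref{eq:epsilon}, and apply Lemma~\ref{bartlett:cn} to bound $\mathcal{N}(\mathcal{H}_X^\mathcal{A},\epsilon,\|\cdot\|_F)$ by the product over $i$ of the per-layer covering numbers of $\{A_iF_{\mathbf{A}_{i-1}}(X):\|A_i\|_\sigma\le s_i\}$ at scale $\epsilon_i$. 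Second, bound each factor by Lemma~\ref{lemma:one-fc-layer}: here I would note that the set being covered is really $\{A_i:\|A_i\|_\sigma\le s_i,\ \|A_i\|_F\le a_i\}$, a subset of the $F$-ball of radius $a_i$, so an $\epsilon_i/\|F_{\mathbf{A}_{i-1}}(X)\|_F$-cover of that $F$-ball is a fortiori a cover of it, giving $\ln\mathcal{N}(\cdots)\le d_id_{i-1}\ln(1+2a_i\|F_{\mathbf{A}_{i-1}}(X)\|_F/\epsilon_i)$. Third, control the intermediate activations by the induction $\|F_{\mathbf{A}_i}(X)\|_F\le\rho_i s_i\|F_{\mathbf{A}_{i-1}}(X)\|_F\le\|X\|_F\prod_{j=1}^i\rho_j s_j$, using $\sigma_i(0)=0$, $\rho_i$-Lipschitzness, and $\|A_iG\|_F\le\|A_i\|_\sigma\|G\|_F$, exactly as in the proof of Lemma~\ref{lemma-cnn}.

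Finally, substitute the definition of $\epsilon_i$ into the logarithm (the denominator factor $\rho_i\prod_{l=i+1}^L\rho_l s_l$ combines with the numerator factor $\prod_{j=1}^{i-1}\rho_j s_j$ to give $\prod_{j=1}^L\rho_j s_j/s_i$), apply $\ln(1+x)\le\sqrt{x}$ to extract a square root, factor out $(2\|X\|_F\prod_{i=1}^L\rho_i s_i/\epsilon)^{1/2}$, and optimize the partition by taking $\alpha_i=d_i^2d_{i-1}^2a_i/(s_i\Delta)$ with $\Delta\triangleq\sum_{j=1}^L d_j^2d_{j-1}^2a_j/s_j$; this satisfies $\sum_i\alpha_i=1$ and makes every summand $d_id_{i-1}\sqrt{a_i/(s_i\alpha_i)}$ equal to $\sqrt{\Delta}$, so the sum is $L\sqrt{\Delta}$ and the whole bound becomes $(\|X\|_F\cdot 2\prod_i\rho_i s_i\cdot\Delta\cdot L^2/\epsilon)^{1/2}=(\|X\|_F\mathscr{R}_\mathbf{A}/\epsilon)^{1/2}$. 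I do not anticipate a genuine obstacle: the entire argument is a transcription of Lemma~\ref{lemma-cnn} with the convolutional branch deleted, and the only points needing a moment's care are the bookkeeping that the sliced set in Lemma~\ref{bartlett:cn} carries both the spectral and Frobenius constraints (so that Lemma~\ref{lemma:one-fc-layer} applies) and the verification that $\mathscr{R}_\mathbf{C}$ degenerates to $\mathscr{R}_\mathbf{A}$ when $S_\mathbf{W}=\emptyset$.
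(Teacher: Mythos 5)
Your proposal is correct and is essentially the paper's own argument: the paper obtains this proposition precisely as the special case of Lemma~\ref{lemma-cnn} with $S_\mathbf{W}=\emptyset$, and your spelled-out version is the proof of that lemma with the convolutional branch removed (same use of Lemma~\ref{bartlett:cn}, Lemma~\ref{lemma:one-fc-layer}, the induction on $\|F_{\mathbf{A}_i}(X)\|_F$, $\ln(1+x)\le\sqrt{x}$, and the choice $\alpha_i=d_i^2d_{i-1}^2a_i/(s_i\Delta)$). Your side remark that the covered set carries both the spectral and Frobenius constraints, so the Frobenius-ball cover of Lemma~\ref{lemma:one-fc-layer} applies, is a correct and welcome clarification of a point the paper leaves implicit.
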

Similarly, by substituting all layers in CNNs to convolutional layers, i.e., $S_\mathbf{A}$ is empty, we have the following covering number bound for FCNNs.
\begin{proposition}\label{lemma-fcnn}
Denote the input data by $X \in \mathbb{R}^{d \times n}$ with bounded $F$-norm. Let $(\sigma_1, ...,\sigma_L)$ be fixed functions where $\sigma_i$ is $\rho_i$-Lipschitz satisfying $\sigma_i(0)=0$. Let $\mathbf{W} = (w_1, ..., w_L)$ and $(\gamma_1(W_1), ..., \gamma_L(W_L))$ be convolutional weights and their corresponding fully connected weight matrices, where $W_i \in \mathbb{R}^{c_i \times r_i}$ containing $c_i$ convolutional filters of size $r_i$. Let the output of each convolutional layer be $d_i$ dimension, then $\gamma_i(W_i) \in \mathbb{R}^{d_i \times d_{i-1}}$ and $d_0=d$. Denote by $\mathcal{H}_X^\mathcal{W}$ the family of result matrices generated by evaluating $X$ for all FCNNs $F_{\mathbf{W}}$ defined in~\eqref{eq: fcnn def} with bounded weights, i.e.,
\begin{equation}
    \begin{aligned}
        &\mathcal{H}_X^\mathcal{W} \triangleq \big\{F_{\mathbf{W}}(X): \|W_i\|_F\leq a_i, \|\gamma_i(W_i)\|_\sigma \leq s_i, \forall i = 1,..., L\big\}.
    \end{aligned}
\end{equation}
Letting $\mathscr{R}_{\mathbf{W}}$ be defined in~\eqref{def:rw}, we have the following covering number bound
\begin{equation}
	\begin{aligned}
		\ln \mathcal{N}\left(\mathcal{H}_X^\mathcal{W}, \epsilon,\|\cdot\|_F\right)
		\leq \left(\frac{ \|X\|_F \mathscr{R}_{\mathbf{W}}}{\epsilon}\right)^{\frac{1}{2}}.
	\end{aligned}
\end{equation}
\end{proposition}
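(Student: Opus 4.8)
The plan is to observe that an FCNN is precisely the special case of the general CNN of~\eqref{eq: cnn def} in which every layer is convolutional, so Proposition~\ref{lemma-fcnn} follows from Lemma~\ref{lemma-cnn} by specialization. Concretely, I would take $S_\mathbf{A} = \emptyset$ and $S_\mathbf{W} = \{1,\dots,L\}$, set $C_i = \gamma_i(W_i)$ for every $i$, and note that then $\mathcal{H}_X^\mathcal{W}$ coincides with $\mathcal{H}_X^\mathcal{C}$ and that the complexity measure $\mathscr{R}_\mathbf{C}$ in~\eqref{def:rc} collapses to $\mathscr{R}_\mathbf{W}$ in~\eqref{def:rw}, since the first (fully connected) sum is empty. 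Invoking Lemma~\ref{lemma-cnn} for this choice then yields the stated bound $\ln\mathcal{N}(\mathcal{H}_X^\mathcal{W},\epsilon,\|\cdot\|_F) \le (\|X\|_F\mathscr{R}_\mathbf{W}/\epsilon)^{1/2}$ immediately.

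For completeness one could also give the self-contained argument that mirrors the proof of Lemma~\ref{lemma-cnn}. First, for $\epsilon>0$ choose $\epsilon_i = \alpha_i\epsilon / (\rho_i\prod_{l=i+1}^L \rho_l s_l)$ with $\alpha_i>0$ and $\sum_i\alpha_i=1$, so that $\epsilon$ has the form required by~\eqref{eq:epsilon}, and apply Lemma~\ref{bartlett:cn} to the network $F_\mathbf{W}$ (whose layer matrices $\gamma_i(W_i)$ are ordinary matrices, so the lemma applies verbatim) to write $\mathcal{N}(\mathcal{H}_X^\mathcal{W},\epsilon,\|\cdot\|_F)$ as a product over layers of single-layer covering numbers. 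Next bound each factor by Lemma~\ref{lemma:one-conv-layer} with input $Z = F_{\mathbf{W}_{i-1}}(X)$ and $m = d_i/c_i$ operations per filter, giving the term $c_ir_i\ln(1+2a_i\sqrt{d_i/c_i}\,\|F_{\mathbf{W}_{i-1}}(X)\|_F/\epsilon_i)$. Control the intermediate norms by the induction $\|F_{\mathbf{W}_i}(X)\|_F \le \|X\|_F\prod_{j=1}^i\rho_j s_j$, obtained exactly as in the proof of Lemma~\ref{lemma-cnn} from $\|\gamma_i(W_i)\|_\sigma\le s_i$ and $\sigma_i(0)=0$.

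Finally, substitute the value of $\epsilon_i$, apply $\ln(1+x)\le\sqrt{x}$ for $x\ge0$, and optimize the free parameters by taking $\alpha_i = c_i^2 r_i^2 a_i\sqrt{d_i/c_i}/(s_i\Delta)$ with $\Delta = \sum_{j=1}^L c_j^2 r_j^2 a_j\sqrt{d_j/c_j}/s_j$, which one checks sums to one; the resulting expression is exactly $(\|X\|_F\mathscr{R}_\mathbf{W}/\epsilon)^{1/2}$. I do not expect a genuine obstacle here, as the statement is a direct corollary of Lemma~\ref{lemma-cnn}; the only things to verify carefully are the bookkeeping that the empty fully connected sum makes $\mathscr{R}_\mathbf{C} = \mathscr{R}_\mathbf{W}$ and that the chosen $\alpha_i$ satisfy $\sum_i\alpha_i=1$. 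The one modelling point worth stating explicitly is that in the FCNN setting $d_i = m_i c_i$, where $m_i$ is the number of operations each filter performs at layer $i$, so that $\sqrt{d_i/c_i}=\sqrt{m_i}$ matches the $\sqrt{m}$ appearing in Lemma~\ref{lemma:one-conv-layer}.
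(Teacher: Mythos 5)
Your proposal is correct and matches the paper exactly: the paper states Proposition~\ref{lemma-fcnn} as the specialization of Lemma~\ref{lemma-cnn} with $S_\mathbf{A}=\emptyset$ and $S_\mathbf{W}=\{1,\dots,L\}$, under which $\mathscr{R}_\mathbf{C}$ reduces to $\mathscr{R}_\mathbf{W}$, which is precisely your first argument. Your optional self-contained sketch also mirrors the paper's proof of Lemma~\ref{lemma-cnn} step for step (same choice of $\epsilon_i$, same use of Lemma~\ref{bartlett:cn}, Lemma~\ref{lemma:one-conv-layer}, the induction on $\|F_{\mathbf{W}_i}(X)\|_F$, and the same $\alpha_i$), so there is no substantive difference.
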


\subsection{Generalization Bounds for Neural Networks}\label{subsec-bound}
By far, we have obtained the covering number bounds for multilayer neural networks. The next step is to relate the covering number to Rademacher complexity, and then we can obtain the generalization bound by Theorem~\ref{th:rademacher}. We take advantage of a standard tool in statistical learning theory named Dudley's entropy integral.
\begin{lemma}{(Dudley's entropy integral)} Let $\mathcal{F}$ be a real-valued function class taking values in $[0,1]$ and we assume that $\mathbf{0}\in\mathcal{F}$. Let $X = (x_1, ..., x_n)^\top$ be the given input containing $n$ examples and $\mathcal{F}_{ |X} = \left\{\left(f(x_1), ..., f(x_n)\right) \in [0,1]^n: f \in \mathcal{F}\right\}$, then
	\begin{equation}
		\mathfrak{R}_X(\mathcal{F}) \leq \inf_{\alpha>0}\left(\frac{4\alpha}{\sqrt{n}} +\frac{12}{n}\int_{\alpha}^{\sqrt{n}}\sqrt{\ln \mathcal{N}(\mathcal{F}_{ |X},\epsilon,\|\cdot\|_2)} \,d\epsilon\right).
	\end{equation}
\end{lemma}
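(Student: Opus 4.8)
The plan is a standard chaining argument on the finite‑dimensional set $\mathcal{F}_{|X}\subseteq\mathbb{R}^n$ under the Euclidean norm $\|\cdot\|_2$, combined with Massart's finite‑class lemma (the Rademacher average $\mathbb{E}_\tau\max_{v\in V}\langle\tau,v\rangle$ over a finite set $V\subseteq\mathbb{R}^n$ is at most $\max_{v\in V}\|v\|_2\sqrt{2\ln|V|}$). Fix $\alpha>0$; it suffices to establish the bound with the infimum removed for this $\alpha$ and then take $\inf_\alpha$. If $\alpha\ge\sqrt n$, then since each $f(x_i)\in[0,1]$ we have $\mathfrak{R}_X(\mathcal{F})\le 1\le 4\alpha/\sqrt n$ and there is nothing to prove, so assume $0<\alpha<\sqrt n$.

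First I would introduce dyadic scales $\epsilon_j=2^{-j}\sqrt n$ for $j=0,1,2,\dots$ and let $N$ be the smallest index with $\epsilon_N\le\alpha$ (so $\alpha/2<\epsilon_N\le\alpha$). For each $j\le N$ fix a minimal $\epsilon_j$‑cover $T_j\subseteq\mathbb{R}^n$ of $\mathcal{F}_{|X}$, so $|T_j|=\mathcal{N}(\mathcal{F}_{|X},\epsilon_j,\|\cdot\|_2)$; monotonicity of the covering number in $\epsilon$ gives $|T_{j-1}|\le|T_j|$, and since every $v_f:=(f(x_1),\dots,f(x_n))$ lies in $[0,1]^n$, hence within $\sqrt n$ of the origin, we may take $T_0=\{\mathbf 0\}$ (this is where $\mathbf 0\in\mathcal{F}$ is convenient). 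Writing $\pi_j(f)\in T_j$ for a nearest point to $v_f$, so $\|v_f-\pi_j(f)\|_2\le\epsilon_j$ and $\pi_0(f)=\mathbf 0$, the telescoping identity $v_f=\sum_{j=1}^N(\pi_j(f)-\pi_{j-1}(f))+(v_f-\pi_N(f))$ together with linearity of $\langle\tau,\cdot\rangle$ and subadditivity of the supremum yields
\begin{equation}
n\,\mathfrak{R}_X(\mathcal{F})=\mathbb{E}_\tau\sup_f\langle\tau,v_f\rangle\le\sum_{j=1}^N\mathbb{E}_\tau\sup_f\langle\tau,\pi_j(f)-\pi_{j-1}(f)\rangle+\mathbb{E}_\tau\sup_f\langle\tau,v_f-\pi_N(f)\rangle.
\end{equation}

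Next I would estimate the two kinds of terms. For the tail term, Cauchy--Schwarz and $\|\tau\|_2=\sqrt n$ give $\sup_f\langle\tau,v_f-\pi_N(f)\rangle\le\sqrt n\,\epsilon_N\le\sqrt n\,\alpha$ deterministically (so no limiting argument is needed). For the $j$‑th link, the set of differences $\{\pi_j(f)-\pi_{j-1}(f):f\in\mathcal{F}\}$ is finite of size at most $|T_j||T_{j-1}|\le|T_j|^2$ and each element has norm at most $\epsilon_j+\epsilon_{j-1}=3\epsilon_j$, so Massart's lemma gives $\mathbb{E}_\tau\sup_f\langle\tau,\pi_j(f)-\pi_{j-1}(f)\rangle\le 3\epsilon_j\sqrt{2\ln|T_j|^2}=6\,\epsilon_j\sqrt{\ln\mathcal{N}(\mathcal{F}_{|X},\epsilon_j,\|\cdot\|_2)}$. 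Finally, since $\epsilon_j=2(\epsilon_j-\epsilon_{j+1})$ and $\mathcal{N}(\mathcal{F}_{|X},\cdot,\|\cdot\|_2)$ is nonincreasing, each term $\epsilon_j\sqrt{\ln\mathcal{N}(\epsilon_j)}$ is at most $2\int_{\epsilon_{j+1}}^{\epsilon_j}\sqrt{\ln\mathcal{N}(\mathcal{F}_{|X},\epsilon,\|\cdot\|_2)}\,d\epsilon$, and summing over $j\le N$ telescopes into a single integral over $[\epsilon_{N+1},\epsilon_1]$. Dividing by $n$, using $\epsilon_1\le\sqrt n$ and $\epsilon_{N+1}=\epsilon_N/2\ge\alpha/4$, and collecting the constant $6\times 2=12$ yields $\mathfrak{R}_X(\mathcal{F})\le\alpha/\sqrt n+\tfrac{12}{n}\int_{\alpha/4}^{\sqrt n}\sqrt{\ln\mathcal{N}(\mathcal{F}_{|X},\epsilon,\|\cdot\|_2)}\,d\epsilon$; reparametrizing $\alpha=4\beta$ (the trivial bound from the first paragraph covering $\beta\ge\sqrt n/4$) gives exactly the claimed inequality for every $\beta>0$, so $\inf_{\beta>0}$ finishes the proof.

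The conceptual core is the multi‑scale chaining decomposition together with Massart's lemma; since both are classical there is no real obstruction. The only delicate part is the final bookkeeping — anchoring the dyadic grid at $\sqrt n$, choosing where to truncate it relative to $\alpha$, and the sum‑to‑integral comparison — which is precisely where the constants $4$ (from the tail scale $\epsilon_N\le\alpha$ together with the integral lower limit $\epsilon_{N+1}\ge\alpha/4$, after the $\alpha=4\beta$ rescaling) and $12=3\times 2\times 2$ emerge.
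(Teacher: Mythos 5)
Your proof is correct. The paper states this lemma without proof (it is the standard Dudley entropy-integral bound, quoted with the same constants as in Bartlett et al.\ 2017, Lemma A.5), and your dyadic chaining argument with Massart's finite-class lemma --- links of norm $3\epsilon_j$ over at most $|T_j|^2$ differences, the sum-to-integral comparison via $\epsilon_j = 2(\epsilon_j-\epsilon_{j+1})$, and the final $\alpha \mapsto 4\alpha$ rescaling with the trivial bound handling large $\alpha$ --- is exactly the standard derivation, with the constants $4$ and $12$ emerging correctly.
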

\begin{lemma} \label{lemma-dudley-cnn}
Given a training sample $S = ((x_1, y_1), ..., (x_n, y_n))$ of $n$ examples, where $x_i \in \mathbb{R}^d$ and $y_i$ taking integer values from $[1,k]$. Let $X = (x_1, ..., x_n)^\top \in \mathbb{R}^{d \times n}$ be the input data. Consider the hypothesis set $\mathcal{H}$ of neural networks mapping values from $\mathbb{R}^d$ to $\mathbb{R}^k$, let $\ell_\eta$ be the ramp loss defined in~\eqref{def: ell eta}, and define the $\ell_\eta$ loss function class with respect to $\mathcal{H}$ as
\begin{equation}
    \mathcal{H}_{\eta} \triangleq \left\{(x,y) \to \ell_{\eta}\left(h(x),y\right): h\in\mathcal{H} \right\}.
\end{equation}
Then empirical Rademacher complexity of $\mathcal{H}_{\eta}$ can be bounded by
	\begin{equation}
		\mathfrak{R}_S(\mathcal{H}_{\eta}) \leq 16n^{-5/8}\left(\frac{2\|X\|_F\mathscr{R}}{\eta}\right)^{\frac{1}{4}},
	\end{equation}
where $\mathscr{R}$ denotes the corresponding sensitive complexity defined in~\eqref{def:rc},~\eqref{def:ra} and~\eqref{def:rw} for CNNs, FNNs and FCNNs, respectively.
\end{lemma}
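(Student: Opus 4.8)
The plan is to apply Dudley's entropy integral to the loss class $\mathcal{H}_\eta$ and feed into it the covering‑number estimate already established for the network output family, namely Lemma~\ref{lemma-cnn} (and Propositions~\ref{lemma-fnn},~\ref{lemma-fcnn} in the FNN and FCNN cases). Write $\mathcal{H}_X$ for the family of output matrices $\{F_{\mathbf{C}}(X)\in\mathbb{R}^{k\times n}\}$ ranging over the admissible weights; this is exactly $\mathcal{H}_X^{\mathcal{C}}$, so $\ln\mathcal{N}(\mathcal{H}_X,\delta,\|\cdot\|_F)\le(\|X\|_F\mathscr{R}/\delta)^{1/2}$. Two tasks remain: (i) convert an $\|\cdot\|_F$-cover of $\mathcal{H}_X$ into an $\|\cdot\|_2$-cover of $\mathcal{H}_\eta|_X$, and (ii) evaluate the resulting entropy integral.

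For (i), observe that $v\mapsto\ell_\eta(v,y)=g_\eta\!\bigl(-(v_y-\max_{j\ne y}v_j)\bigr)$ is the composition of the $(1/\eta)$-Lipschitz scalar map $g_\eta$ with the margin functional $v\mapsto v_y-\max_{j\ne y}v_j$, which is $2$-Lipschitz with respect to $\|\cdot\|_\infty$ (since $|v_y-v'_y|\le\|v-v'\|_\infty$ and $|\max_{j\ne y}v_j-\max_{j\ne y}v'_j|\le\|v-v'\|_\infty$) and hence also with respect to $\|\cdot\|_2$; thus $v\mapsto\ell_\eta(v,y)$ is $(2/\eta)$-Lipschitz on $\mathbb{R}^k$. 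Consequently, if $\|F(X)-\widehat F(X)\|_F\le\delta$, then the associated loss vectors satisfy $\bigl\|(\ell_\eta(F(x_i),y_i))_{i=1}^n-(\ell_\eta(\widehat F(x_i),y_i))_{i=1}^n\bigr\|_2\le(2/\eta)\|F(X)-\widehat F(X)\|_F\le(2/\eta)\delta$, so any $\delta$-cover of $\mathcal{H}_X$ in $\|\cdot\|_F$ induces a $(2\delta/\eta)$-cover of $\mathcal{H}_\eta|_X$ in $\|\cdot\|_2$. Taking $\delta=\eta\epsilon/2$ gives $\ln\mathcal{N}(\mathcal{H}_\eta|_X,\epsilon,\|\cdot\|_2)\le\ln\mathcal{N}(\mathcal{H}_X,\eta\epsilon/2,\|\cdot\|_F)\le\bigl(2\|X\|_F\mathscr{R}/(\eta\epsilon)\bigr)^{1/2}$.

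For (ii), substitute $\sqrt{\ln\mathcal{N}(\mathcal{H}_\eta|_X,\epsilon,\|\cdot\|_2)}\le\bigl(2\|X\|_F\mathscr{R}/\eta\bigr)^{1/4}\epsilon^{-1/4}$ into Dudley's entropy integral. Since $\int_{\alpha}^{\sqrt n}\epsilon^{-1/4}\,d\epsilon=\tfrac43\bigl(n^{3/8}-\alpha^{3/4}\bigr)\le\tfrac43 n^{3/8}$ and this quantity stays bounded as $\alpha\downarrow0$, we may let $\alpha\to0$, so the term $4\alpha/\sqrt n$ drops out, leaving $\mathfrak{R}_S(\mathcal{H}_\eta)\le\frac{12}{n}\cdot\tfrac43 n^{3/8}\bigl(2\|X\|_F\mathscr{R}/\eta\bigr)^{1/4}=16\,n^{-5/8}\bigl(2\|X\|_F\mathscr{R}/\eta\bigr)^{1/4}$, which is the claimed bound. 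The FNN and FCNN statements follow verbatim after replacing Lemma~\ref{lemma-cnn} by Proposition~\ref{lemma-fnn} or~\ref{lemma-fcnn} and $\mathscr{R}$ by $\mathscr{R}_{\mathbf{A}}$ or $\mathscr{R}_{\mathbf{W}}$.

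The integral estimate and the tracking of the constant $16$ are routine; the parts that require care are pinning down the Lipschitz constant $2/\eta$ of the margin-based ramp loss (an off-by-a-constant here rescales the final bound) and checking the hypotheses of Dudley's lemma — that $\mathcal{H}_\eta$ is $[0,1]$-valued, which is immediate from the form of $g_\eta$, and that one may assume the class contains the zero function, which is handled by a standard technical adjustment that leaves the empirical Rademacher complexity unchanged. I expect the Lipschitz-constant bookkeeping to be the main place where a stray factor could creep in.
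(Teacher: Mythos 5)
Your proposal is correct and follows essentially the same route as the paper: transfer the covering-number bound of Lemma~\ref{lemma-cnn} (or Propositions~\ref{lemma-fnn},~\ref{lemma-fcnn}) to the loss class via the $(2/\eta)$-Lipschitzness of the ramp-margin composition, then plug it into Dudley's entropy integral. The only (harmless) deviation is that you obtain the constant $16$ by letting $\alpha\downarrow 0$ in the infimum, whereas the paper optimizes over $\alpha$ and then discards the resulting negative term, yielding the same final bound.
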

\begin{proof}
Since $\ell_\eta$ is $\frac{2}{\eta}$-Lipschitz, we have 
\begin{equation}
        \ln\mathcal{N}\left(\mathcal{H}_{\eta | S},\epsilon,\|\cdot\|_F\right) \leq \ln\mathcal{N}\left(\mathcal{H}_{|X},\frac{\eta\epsilon}{2},\|\cdot\|_F\right)  
        \leq \left(\frac{2\|X\|_F\mathscr{R}}{\eta\epsilon}\right)^{\frac{1}{2}},
\end{equation}
where the last inequality follows from Lemma \ref{lemma-cnn}, Proposition \ref{lemma-fnn} and Proposition \ref{lemma-fcnn}. Moreover, because $\ell_{\eta}$ takes value from $[0,1]$, we can apply Dudley's entropy integral, which yields
\begin{equation}\label{eq:dudley}
	\begin{aligned}
		\mathfrak{R}_S(\mathcal{H}_{\eta}) &\leq \inf_{\alpha>0}\left(\frac{4\alpha}{\sqrt{n}} +\frac{12}{n}\int_{\alpha}^{\sqrt{n}}\sqrt{\ln \mathcal{N}(\mathcal{H}_{\eta|S},\epsilon,\|\cdot\|_F)}\,\text{d}\epsilon\right) \\
		&\leq \inf_{\alpha>0}\left(\frac{4\alpha}{\sqrt{n}} +\frac{12}{n}\int_{\alpha}^{\sqrt{n}}\left(\frac{2\|X\|_F\mathscr{R}}{\eta\,\epsilon}\right)^{\frac{1}{4}}\,\text{d}\epsilon\right) \\
		&= \inf_{\alpha>0}\left(\frac{4\alpha}{\sqrt{n}} +\frac{16}{n}\left(\frac{2\|X\|_F\mathscr{R}}{\eta}\right)^{\frac{1}{4}}\left({n}^{3/8} - \alpha^{3/4}\right)\right). \\
	\end{aligned}
\end{equation}
When the first derivative equals zero, the right hand side function achieves the minimum at $\alpha = \frac{81\|X\|_F\mathscr{R}}{128\eta n^2}$. Then, we have
\begin{equation}
    \begin{aligned}
    	\mathfrak{R}_S(\mathcal{H}_{\eta}) &\leq \left(\frac{4}{\sqrt{n}}\frac{81\|X\|_F\mathscr{R}}{128\eta n^2} + \frac{16}{n}\left(\frac{2\|X\|_F\mathscr{R}}{\eta}\right)^{\frac{1}{4}}\left({n}^{3/8} - \left(\frac{81\|X\|_F\mathscr{R} }{128\eta n^2}\right)^{\frac{3}{4}}\right)  \right) \\
    	&= 16n^{-5/8}\left(\frac{2\|X\|_F\mathscr{R}}{\eta}\right)^{\frac{1}{4}} - \frac{351}{32}\frac{\|X\|_F\mathscr{R}}{\eta n^2\sqrt{n}} \\
    	&\leq  16n^{-5/8}\left(\frac{2\|X\|_F\mathscr{R}}{\eta}\right)^{\frac{1}{4}}.
    \end{aligned}
\end{equation}
\end{proof}
Combining the above with Theorem \ref{th:rademacher} yields the main Theorem \ref{th:main}. And it implies that for any neural network $h \in \mathcal{H}$,
\begin{equation}\label{eq: final bound}
    \begin{aligned}
        \mathcal{R}_{\mathcal{D}}(h) &\leq \widehat{\mathcal{R}}_{S,\ell_{\eta}}(h) + 2\mathfrak{R}_S(\mathcal{H}_{\eta}) + 3\sqrt{\frac{\ln(1/\delta)}{2n}} \\
        &\leq \widehat{\mathcal{R}}_{S,\ell_{\eta}}(h) + 32\left(\frac{2\|X\|_F\mathscr{R}}{\eta}\right)^{\frac{1}{4}}n^{-5/8} + 3\sqrt{\frac{\ln(1/\delta)}{2n}}.
    \end{aligned}
\end{equation}

\begin{remark} Here we emphasize the dependence of our bound on two parameters, including the number of samples $n$ and the depth of networks $L$. For fully convolutional neural networks with ReLU activation, our bound is of $\mathcal{O}(L^{3/4}/\sqrt{n})$, because $\|X\|_F$ is of $\sqrt{n}$ and $\mathscr{R}$ is of $L^3$. In this way, our bound is better than $\mathcal{O}(L^{3/2}/\sqrt{n})$ in \citep{bartlett2017spectrally, neyshabur2017pac}, but it seems to be worse than $\mathcal{O}(L^{1/2}/\sqrt{n})$ in \citep{golowich2018size,li2018tighter}. However, when combined with other parameters, our bound is advantageous under mild conditions, see Section \ref{sec-comparison}.
\end{remark}

\section{Extensions to Different Convolution Operations} \label{sec-mobile}

 The proposed generalization bound depends on the $F$-norm of convolutional weight $W$ as well as the spectral norm of the corresponding fully connected matrix $\gamma(W)$. Although introducing the notion of $\gamma(W)$ offers much convenience when deriving covering number bounds for CNNs and making comparisons with existing generalization bounds as shown in Section \ref{subsec-cnn} and Section \ref{sec-comparison}, it is cumbersome to transform $W$ to $\gamma(W)$. Moreover, $\gamma(W)$ will change according to the input dimension even if $W$ remains the same. This motivates us to relate the spectral norm of $\gamma(W)$ to the norm of $W$ so as to remove the dependence on $\gamma(W)$ from the proposed generalization bounds for CNNs. In this section, we will consider different types of convolution operations including standard convolution, depthwise convolution, and pointwise convolution. 
Note that depthwise and pointwise convolutions are often chained together to form depthwise separable convolutions, which are commonly used in CNN architectures like \citep{howard2017mobilenets, sandler2018mobilenetv2, chollet2017xception}.



\subsection{Standard Convolution}
Previously, \cite{li2018tighter} demonstrated that the spectral norm of $\gamma(W)$ equals $r / l$ for orthogonal filters, where $r$ is the length of each filter and $l$ represents the stride size. We discuss the general case of standard convolution without the orthogonality condition.
\begin{proposition}\label{prop: gamma w sigma standard}
    Let $W = (w^1, ..., w^c) \in \mathbb{R}^{c \times r}$ be a convolutional weight matrix containing $c$ convolutional filters. Assume that each filter $w^i$ performs $m$ operations on input data $Z \in \mathbb{R}^{d \times n}$. For the corresponding fully connected matrix $\gamma(W) \in \mathbb{R}^{cm \times d}$, we have
    \begin{equation}
        \|\gamma(W)\|_\sigma \leq \sqrt{m}\|W\|_F.
    \end{equation}
\end{proposition}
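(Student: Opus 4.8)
The plan is to bound the spectral norm of $\gamma(W)$ by relating it to the Frobenius norm, which is always an upper bound on the spectral norm, and then to control the Frobenius norm of $\gamma(W)$ in terms of $\|W\|_F$ using the explicit structure of $\gamma(W)$ established in Section~\ref{subsubsec-matrix}. Concretely, recall from \eqref{eq:gamma w} that the rows of $\gamma(W)$ are exactly the vectors $w^i_{S_k} \in \mathbb{R}^d$ for $i \in \{1,\dots,c\}$ and $k \in \{1,\dots,m\}$, where $w^i_{S_k}$ is obtained from the filter $w^i$ by placing its $r$ entries at the positions indexed by $S_k$ and zero elsewhere. The key observation is that $w^i_{S_k}$ is just a (zero-padded) rearrangement of $w^i$, so $\|w^i_{S_k}\|_2 = \|w^i\|_2$ for every $k$.

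The main steps I would carry out are as follows. First, invoke the standard inequality $\|M\|_\sigma \le \|M\|_F$ for any matrix $M$, applied to $M = \gamma(W)$. Second, compute $\|\gamma(W)\|_F^2$ as the sum of squared $\ell_2$-norms of its rows:
\begin{equation}
    \|\gamma(W)\|_F^2 = \sum_{i=1}^c \sum_{k=1}^m \|w^i_{S_k}\|_2^2 = \sum_{i=1}^c \sum_{k=1}^m \|w^i\|_2^2 = m \sum_{i=1}^c \|w^i\|_2^2 = m \|W\|_F^2,
\end{equation}
where the second equality uses the fact that $w^i_{S_k}$ is a permutation-with-zero-padding of $w^i$ and hence has the same Euclidean norm, and the final equality is the definition of the Frobenius norm of $W = (w^1,\dots,w^c)$. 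Third, take square roots and combine with the first step to conclude $\|\gamma(W)\|_\sigma \le \|\gamma(W)\|_F = \sqrt{m}\,\|W\|_F$.

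There is no serious obstacle here — the statement is essentially a bookkeeping exercise once the row structure of $\gamma(W)$ from \eqref{eq:gamma w} is in hand. The only point requiring a little care is the claim $\|w^i_{S_k}\|_2 = \|w^i\|_2$: this follows because, by the definition in Section~\ref{subsubsec-matrix}, $w^i_{S_k}$ contains precisely the entries of $w^i$ (reindexed according to $S_k$) together with additional zeros, so no entry is duplicated and none is lost. One should also note that in the standard (non-depthwise, non-pointwise) convolution considered here each index set $S_k$ is a size-$r$ subset of $\{1,\dots,d\}$ and the map $w^i \mapsto w^i_{S_k}$ is injective on coordinates, which is what makes the norm preserved; this is exactly the property \eqref{eq: indexing Z} relies on. I would remark that the bound is tight in the sense that it is attained, e.g., when all the receptive fields $S_k$ coincide, so that $\gamma(W)$ is rank one with singular value $\sqrt{m}\,\|W\|_F$ in the single-filter case; this also clarifies why the orthogonal-filter refinement of \cite{li2018tighter} (giving $r/l$) can be smaller, since disjoint or near-disjoint receptive fields reduce the operator norm well below the Frobenius norm.
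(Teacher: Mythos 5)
Your proof is correct and follows essentially the same route as the paper: bound $\|\gamma(W)\|_\sigma$ by $\|\gamma(W)\|_F$ and then observe that each filter's entries appear exactly $m$ times in $\gamma(W)$, giving $\|\gamma(W)\|_F = \sqrt{m}\,\|W\|_F$. The extra remarks on tightness and the comparison with the orthogonal-filter case are fine but not needed for the argument.
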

\begin{proof}
    We have
    \begin{equation}\label{eq: gamma w sigma standard}
        \begin{aligned}
        \|\gamma(W)\|_\sigma &\leq \|\gamma(W)\|_F = \left(\sum_{i=1}^{cm}\sum_{j=1}^d \gamma(W)_{ij}^2\right)^{\frac{1}{2}}   \\
        &= \left(m\sum_{i=1}^{c}\sum_{j=1}^r W_{ij}^2\right)^{\frac{1}{2}} = \sqrt{m}\|W\|_F.
        \end{aligned}
    \end{equation}
    This completes the proof.
\end{proof}
\begin{remark} Substituting $s_i = \sqrt{m_i}a_i = \sqrt{d_i/c_i}a_i$ into~\eqref{def:rw} yields 
\begin{equation}
\begin{aligned}
    \mathscr{R}_{\mathbf{W}}  
    &= \left(2\prod_{i=1}^{L}\rho_i\sqrt{d_i/c_i}a_i\right)\left(\sum_{i=1}^Lc_i^2r_i^2 \right)L^2.
\end{aligned}
\end{equation}
Thus we can remove the dependence on $\gamma(W)$ from the generalization bound of FCNNs. The complexity $\mathscr{R}_{\mathbf{C}}$ of CNNs can also be updated analogously. However, we have made an aggressive step in the first inequality of~\eqref{eq: gamma w sigma standard} that simply bounds the spectral norm by $F$-norm, which is generally not tight and will result in loose bounds.
\end{remark}
In the following, we will derive tighter relations between $\|\gamma(W)\|_\sigma$ and the norm of $W$ for special cases of depthwise and pointwise convolutions.

\subsection{Depthwise Convolution}

Depthwise convolution is widely used because of its efficiency in the sense that each convolutional filter operates independently on different input channels rather than across channels. 
In previous discussions, we do not consider the specific shape of input features and convolutional filters by using them as one-dimension vectors of size $d$ and $r$. We refine our notations by assuming that $d$ and $r$ both have two dimensions, i.e., the spatial dimension and the channel dimension. 
For the $i_{th}$ layer, let $d_i = m_i \times c_i$ and $r_i = k_i \times c_{i-1}$, where $m_i$ denotes the spatial size of the output, $c_i$ denotes the number of channels for the output, and $k_i$ denotes the spatial size of each convolutional filter. Hence, we have the output $Z_i \in \mathbb{R}^{m_i \times c_i \times n}$ and the convolutional weight $W_i \in \mathbb{R}^{c_i \times k_i \times c_{i-1}}$. 

We consider one depthwise convolutional layer with input data $Z\in\mathbb{R}^{m\times c\times n}$. Let $W = (w^1, ..., w^c) \in \mathbb{R}^{c \times k}$ be the depthwise convolutional weight matrix containing $c$ depthwise convolutional filters, and each filter $w^i$ only has a spatial dimension $k$. Here we use the fact that in depthwise convolution, the number of filters is always equal to the number of input channels and each filter operates on one channel, i.e., no channel dimension is needed for each filter. Assuming that each $w^i$ performs $m'$ operations on one channel of $Z$, then the output is $Z' \in \mathbb{R}^{m' \times c \times n}$. The corresponding fully connected weight matrix $\gamma(W) \in \mathbb{R}^{m'c \times mc}$ can be written as
\begin{equation}\label{eq: depthwise gamma}
    \gamma(W) =  \begin{pmatrix}
        \Omega(w^1) & 0 & 0 & \cdots & 0 \\
        0 & \Omega(w^2) & 0 & \cdots & 0 \\
        \vdots & \vdots & \vdots & \vdots & \vdots \\
        0 & 0 & 0 & \cdots & \Omega(w^c)
\end{pmatrix} \text{, where }
    \Omega(w^i) \triangleq \begin{pmatrix}
        w^i_{S_1} \\
        w^i_{S_2} \\
        \vdots \\
        w^i_{S_{m'}}
\end{pmatrix} \in \mathbb{R}^{m' \times m},
\end{equation}
and the index set $S_i$s are defined in Section \ref{subsubsec-matrix}. 

We aim to calculate the spectral norm of $\gamma(W)$, which is equal to the square root of the largest eigenvalue of square matrix $\gamma(W)\gamma(W)^\top$. Given~\eqref{eq: depthwise gamma}, we have
\begin{equation} \label{depthwise gamma w square}
    \gamma(W)\gamma(W)^{\top} = \begin{pmatrix}
        \Omega(w^1)\Omega(w^1)^\top & 0 & 0 & \cdots & 0 \\
        0 &  \Omega(w^2)\Omega(w^2)^\top & 0 & \cdots & 0 \\
        \vdots & \vdots & \vdots & \vdots & \vdots \\
        0 & 0 & 0 & \cdots &  \Omega(w^c)\Omega(w^c)^\top
    \end{pmatrix},
\end{equation}
where
\begin{equation}\label{eq: omega square}
    \Omega(w^i)\Omega(w^i)^\top = \begin{pmatrix}
    w^i_{S_1}w^{i\top}_{S_1} & w^{i}_{S_1}w^{i\top}_{S_2} & \cdots & w^i_{S_1}w^{i\top}_{S_{m'}} \\
    w^i_{S_2}w^{i\top}_{S_1} & w^i_{S_2}w^{i\top}_{S_2} & \cdots & w^i_{S_2}w^{i\top}_{S_{m'}} \\
    \vdots & \vdots & \vdots & \vdots \\
    w^i_{S_{m'}}w^{i\top}_{S_1} & w^i_{S_{m'}}w^{i\top}_{S_2} & \cdots & w^i_{S_{m'}}w^{i\top}_{S_{m'}}
    \end{pmatrix}.
\end{equation}
Then it only remains to find the largest eigenvalue of $\Omega(w^i)\Omega(w^i)^\top$. Without loss of generality, we can assume that indices in $S_j$ are in ascending order. Let $l$ be the stride size and $k$ be the filter size, then we have 
\begin{equation}\label{eq: s definition}
    S_j = \left(l(j-1)+1, l(j-1)+2, \cdots, l(j-1) + k\right), \forall j \in 1, ..., m'.
\end{equation}
With this notation, we further divide depthwise convolution into two scenarios when $S_1, ..., S_{m'}$ are overlapping or non-overlapping, or equivalently, when the stride size is smaller than or equal to the filter size. Both cases have been shown to play important roles in modern CNNs. We start with the easier non-overlapping scenario.

\subsubsection{Non-overlapping Convolutional Filters}
Non-overlapping convolution has attracted increasing attention recently. From theoretical perspective, non-overlapping filters lead to neat formulations and deliver concrete analysis. \cite{brutzkus2017globally} showed that a non-overlapping convolutional neural network with Gaussian inputs can converge to global optimum in polynomial time. Although limited, non-overlapping convolutions do play an important role in recent works. The following lemma demonstrates the relation between $\|\gamma(W)\|_\sigma$ and $\|W\|_F$ for non-overlapping convolutions.


\begin{proposition}\label{prop: non depthwise} 
Let $W = (w^1, ..., w^c)\in\mathbb{R}^{c \times k}$ be a depthwise convolutional weight matrix, and let $\gamma(W)\in\mathbb{R}^{m'c \times mc}$, $\Omega(w^i) \in \mathbb{R}^{m' \times m}$, and $S_1, ..., S_{m'}$ be defined in~\eqref{eq: depthwise gamma}. If $S_1, ..., S_{m'}$ are non-overlapping, then we have
    \begin{equation}
        \|\gamma(W)\|_{\sigma} \leq \|W\|_F.
    \end{equation}
\end{proposition}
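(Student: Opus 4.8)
The plan is to exploit the block-diagonal structure of $\gamma(W)$ in~\eqref{eq: depthwise gamma} together with the fact that non-overlapping index sets force the rows of each block $\Omega(w^i)$ to have pairwise disjoint supports. First I would record the standard fact that the spectral norm of a block-diagonal matrix is the maximum of the spectral norms of its blocks, so that from~\eqref{eq: depthwise gamma} we get $\|\gamma(W)\|_\sigma = \max_{i\in\{1,\dots,c\}}\|\Omega(w^i)\|_\sigma$. It then suffices to control each $\|\Omega(w^i)\|_\sigma$ separately.

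Next I would compute $\Omega(w^i)\Omega(w^i)^\top$ using~\eqref{eq: omega square}. Its $(j,j')$ entry is $w^i_{S_j}w^{i\top}_{S_{j'}}$, where $w^i_{S_j}\in\mathbb{R}^m$ is $w^i$ placed at the coordinates indexed by $S_j$ and zero elsewhere. When $S_1,\dots,S_{m'}$ are non-overlapping, for $j\neq j'$ the vectors $w^i_{S_j}$ and $w^i_{S_{j'}}$ have disjoint supports, so their inner product vanishes; for $j=j'$ the inner product equals $\sum_{t}(w^i_t)^2=\|w^i\|_2^2$. Hence $\Omega(w^i)\Omega(w^i)^\top = \|w^i\|_2^2\, I_{m'}$, whose only eigenvalue is $\|w^i\|_2^2$. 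Since $\|\Omega(w^i)\|_\sigma$ is the square root of the largest eigenvalue of $\Omega(w^i)\Omega(w^i)^\top$, we conclude $\|\Omega(w^i)\|_\sigma=\|w^i\|_2$.

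Finally I would combine these:
\begin{equation}
\|\gamma(W)\|_\sigma = \max_{i}\|w^i\|_2 \leq \Bigl(\sum_{i=1}^{c}\|w^i\|_2^2\Bigr)^{1/2} = \|W\|_F,
\end{equation}
where the inequality is just that the maximum of nonnegative numbers is at most the square root of their sum. This yields the claim.

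I do not expect a serious obstacle here; the only point requiring a little care is making the phrase ``$S_1,\dots,S_{m'}$ are non-overlapping'' precise and tying it to the disjointness of the supports of the $w^i_{S_j}$, which is immediate from the description of $w^i_{S_k}$ in Section~\ref{subsubsec-matrix} and from~\eqref{eq: s definition} (non-overlapping is equivalent to the stride $l$ being at least the filter size $k$). Everything else is a routine eigenvalue computation of a diagonal Gram matrix.
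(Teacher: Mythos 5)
Your proposal is correct and follows essentially the same route as the paper: both exploit the block-diagonal form of $\gamma(W)$ in~\eqref{eq: depthwise gamma}, show via disjoint supports that $\Omega(w^i)\Omega(w^i)^\top = \|w^i\|_2^2\, I_{m'}$, and then bound $\max_i \|w^i\|_2$ by $\|W\|_F$. The only cosmetic difference is that you invoke the block-norm fact at the level of $\|\Omega(w^i)\|_\sigma$ while the paper phrases it through the eigenvalues of $\gamma(W)\gamma(W)^\top$; the content is identical.
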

\begin{proof} 
When $S_1, ..., S_{m'}$ are non-overlapping, we have
\begin{equation}
    w^i_{S_p}w^{i\top}_{S_q} = \begin{cases}
    \left\|w^i\right\|_2^2, & \text{if }\, p=q \\
    0, & \text{otherwise.}
    \end{cases}
\end{equation}
By~\eqref{eq: omega square}, we have
\begin{equation}
    \Omega(w^i)\Omega(w^i)^\top = \text{diag}\left(\left\|w^i_{S_1}\right\|_2^2, ..., \big\|w^i_{S_{m'}}\big\|_2^2\right) = \text{diag}\left(\big\|w^i\big\|_2^2, ..., \left\|w^i\right\|_2^2\right) = \|w^i\|_2^2\; I_{m'},
\end{equation}
where $I_{m'}$ is the identity matrix with size $m'$. Combining with~\eqref{depthwise gamma w square}, we have
    \begin{equation}
        \begin{aligned}
            \|\gamma(W)\|_{\sigma} &= \sqrt{\lambda_{\max}\left(\gamma(W)\gamma(W)^{\top}\right)} = \sqrt{\max_{i=1,...,c} \lambda_{\max}\left(\Omega(w^i)\Omega(w^i)^\top\right)} \\
            &=\sqrt{\max_{i=1,...,c}\left\|w^i\right\|_2^2} \leq \sqrt{\sum_{i=1}^c\left\|w^i\right\|_2^2} = \|W\|_F.
        \end{aligned}
    \end{equation}
This completes the proof.
\end{proof}


\subsubsection{Overlapping Convolutional Filters}
Overlapping convolution operation significantly increases the expressive power of neural networks compared with non-overlapping operation \citep{sharir2017expressive}. In this scenario, we show that $\Omega(w^i)\Omega(w^i)^\top$ is a symmetric banded Toeplitz matrix \citep{gray2006toeplitz}. We first introduce its definition.

\begin{definition}[Banded Toeplitz Matrix]
Given an infinite sequence $\{t_q\}$ and a positive number $b$ for which $t_q = 0$ if $|q| > b$, the banded Toeplitz matrix with respect to this sequence is defined by
\begin{equation}
T_n \triangleq  \begin{pmatrix}
t_0 & t_{-1} & \cdots & t_{-b} & & & & & & &\\
t_1 & t_0 & & & & & & & & & \\
\vdots & & & & & & & & 0 & & \\
& & \ddots & & & & \ddots & & & & \\
t_b & & & & & & & & & & \\
& \ddots & & & & & & & & & \\
& & t_b & \cdots & t_1 & t_0 & t_{-1} & \cdots & t_{-b} & & \\
& &  &  &  &  &  &  &  & \ddots & \\
& &  &  &  \ddots &  &  & \ddots &  &  & t_{-b} \\
& & 0 &  &  &  &  &  &  &  & \vdots \\
&  &  &  &  &  &  &  &  & t_0 & t_{-1}\\
&  &  &  &  &  &  & t_b  & \cdots & t_1 & t_{0}
\end{pmatrix} \in \mathbb{R}^{n \times n}.
\end{equation}
If we further have $t_q = t_{-q}$ for all $|q|\leq b$, then $T_n$ is a symmetric banded Toeplitz matrix.
\end{definition}
For such a matrix, we have the following lemma for its eigenvalues.
\begin{lemma}\label{lemma:toeplitz} (\cite{gray2006toeplitz}, Lemma 4.1) Given a real symmetric banded Toeplitz matrix $T_n$ generated by $\{t_q\}$ with band $b$, let $\lambda_{k},k\in\{1,...,n\}$ be its eigenvalues. Then if  $\sum_{i=1}^{b}|t_i|<\infty$, for any $k$, we have
\begin{equation}
    \lambda_k \leq \sum_{i=-b}^{b} \left|t_{ |i| }\right|. 
\end{equation}

\end{lemma}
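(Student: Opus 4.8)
The plan is to bound the eigenvalues of the real symmetric banded Toeplitz matrix $T_n$ by viewing $T_n$ as (a truncation of) the Laurent operator associated with the symbol $f(\theta) = \sum_{q=-b}^{b} t_q e^{iq\theta}$, and then estimating $|f|$ crudely by the triangle inequality. Concretely, I would first write $T_n = (t_{p-q})_{p,q=1}^{n}$ so that the quadratic form associated with any unit vector $x = (x_1,\dots,x_n)^\top$ is $x^\top T_n x = \sum_{p,q} t_{p-q} x_p x_q$. Because $T_n$ is real symmetric, $\lambda_{\max}(T_n) = \max_{\|x\|_2=1} x^\top T_n x$ and, more to the point for the stated inequality, every eigenvalue $\lambda_k$ satisfies $\lambda_k \le \lambda_{\max}(T_n) \le \max_{\|x\|_2 = 1} x^\top T_n x$, so it suffices to bound this Rayleigh quotient.

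The key step is the standard spectral estimate for Toeplitz matrices: extend $x$ to a function $\phi(\theta) = \sum_{p=1}^{n} x_p e^{ip\theta}$ on the circle, and observe that $x^\top T_n x = \frac{1}{2\pi}\int_{-\pi}^{\pi} f(\theta)\, |\phi(\theta)|^2 \, d\theta$, where $f(\theta) = \sum_{q=-b}^{b} t_q e^{iq\theta}$ is the generating symbol (this identity is exactly the computation underlying \cite{gray2006toeplitz}). Since $\frac{1}{2\pi}\int |\phi(\theta)|^2 d\theta = \|x\|_2^2 = 1$, we get $x^\top T_n x \le \big(\sup_\theta f(\theta)\big) \cdot 1$, and then $\sup_\theta f(\theta) \le \sum_{q=-b}^{b} |t_q| e^{iq\theta}| = \sum_{q=-b}^{b} |t_q| = \sum_{i=-b}^{b} |t_{|i|}|$, using $t_q = t_{-q}$. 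Combining these gives $\lambda_k \le \lambda_{\max}(T_n) \le \sum_{i=-b}^{b} |t_{|i|}|$ for every $k$, which is the claim; the hypothesis $\sum_{i=1}^b |t_i| < \infty$ is what makes $f$ a bounded (continuous) function so that the supremum is finite. Since this lemma is quoted directly from \cite{gray2006toeplitz}, an alternative and even shorter route is simply to cite that reference, but the argument above is the honest proof.

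The main obstacle — really the only delicate point — is establishing the Fourier/quadratic-form identity $x^\top T_n x = \frac{1}{2\pi}\int f(\theta)|\phi(\theta)|^2 d\theta$ cleanly, i.e. correctly matching up the band structure ($t_q = 0$ for $|q|>b$) with the truncation to indices $1,\dots,n$ so that no boundary terms are dropped. This is a routine but bookkeeping-heavy manipulation: expanding $|\phi(\theta)|^2 = \sum_{p,p'} x_p x_{p'} e^{i(p-p')\theta}$ and integrating term by term against $f$ picks out exactly the coefficients $t_{p-p'}$, and one checks the ranges of summation are consistent. Everything else (positivity of $|\phi|^2$, normalization, triangle inequality) is immediate. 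I would therefore present the proof as: (1) reduce to bounding the Rayleigh quotient; (2) rewrite the Rayleigh quotient as an integral of the symbol against a probability density on the circle; (3) bound the symbol pointwise by $\sum |t_{|i|}|$ via the triangle inequality — or, given the context of a quoted lemma, simply defer to \cite{gray2006toeplitz}.
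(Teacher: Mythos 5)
Your proof is correct: the paper itself does not prove this lemma but quotes it directly from \citet{gray2006toeplitz} (Lemma 4.1), and your Rayleigh-quotient argument --- writing $x^{\top}T_n x = \frac{1}{2\pi}\int_{-\pi}^{\pi} f(\theta)\lvert\phi(\theta)\rvert^{2}\,d\theta$ for the symbol $f(\theta)=\sum_{q=-b}^{b}t_q e^{iq\theta}$ and bounding $f$ pointwise by $\sum_{i=-b}^{b}\lvert t_{|i|}\rvert$ via the triangle inequality --- is exactly the standard proof given in that reference. Aside from a small typo in the symbol bound (a stray modulus sign; you should bound $\sup_\theta f(\theta)\le\sum_{q=-b}^{b}\lvert t_q\rvert$), there is nothing to add.
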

Based on this, we obtain the following bound for $\|\gamma(W)\|_\sigma$ in terms of $\|W\|_\infty$.
\begin{proposition}\label{prop: over depthwise} 
Let $W = (w^1, ..., w^c)\in\mathbb{R}^{c \times k}$ be a depthwise convolutional weight matrix, and let $\gamma(W)\in\mathbb{R}^{m'c \times mc}$, $\Omega(w^i) \in \mathbb{R}^{m' \times m}$, and $S_1, ..., S_{m'}$ be defined as~\eqref{eq: depthwise gamma}. If $S_1, ..., S_{m'}$ are overlapping, then we have
    \begin{equation}
        \|\gamma(W)\|_{\sigma} \leq \|W\|_\infty.
    \end{equation}
\end{proposition}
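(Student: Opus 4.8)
The plan is to leverage the block structure already recorded in~\eqref{depthwise gamma w square}, reduce the problem to a single filter, recognize $\Omega(w^i)\Omega(w^i)^\top$ as a real symmetric banded Toeplitz matrix, and then invoke Lemma~\ref{lemma:toeplitz}.

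First I would reduce to one filter. Since $\gamma(W)\gamma(W)^\top$ is block diagonal with diagonal blocks $\Omega(w^i)\Omega(w^i)^\top$ by~\eqref{depthwise gamma w square}, we have $\|\gamma(W)\|_\sigma = \sqrt{\max_i \lambda_{\max}(\Omega(w^i)\Omega(w^i)^\top)} = \max_{i=1,\dots,c}\|\Omega(w^i)\|_\sigma$, exactly as in the non-overlapping case. Recalling that $\|W\|_\infty = \max_i \|w^i\|_1$ is the maximum absolute row sum of $W$, it suffices to prove $\|\Omega(w^i)\|_\sigma \le \|w^i\|_1$ for each fixed $i$.

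Next I would identify the Toeplitz structure of $\Omega(w^i)\Omega(w^i)^\top$, whose $(p,q)$ entry is the scalar $w^i_{S_p}w^{i\top}_{S_q}$ (see~\eqref{eq: omega square}). Using the explicit windows~\eqref{eq: s definition}, $S_p$ and $S_q$ are translates of one another by $l(p-q)$ positions, so $w^i_{S_p}w^{i\top}_{S_q}$ equals the lag-$l(p-q)$ autocorrelation $t_{p-q}\triangleq\sum_{a} w^i_a\, w^i_{a+l(p-q)}$ of the filter (zero-padding out-of-range coordinates); in particular it depends only on $p-q$, is symmetric under $p\leftrightarrow q$, satisfies $t_0 = \|w^i\|_2^2$, and vanishes whenever $l|p-q|\ge k$. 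Hence $\Omega(w^i)\Omega(w^i)^\top$ is a real symmetric banded Toeplitz matrix with band $b\triangleq\lfloor (k-1)/l\rfloor$ generated by $\{t_q\}$. Applying Lemma~\ref{lemma:toeplitz} and bounding each term by $|t_q|\le\sum_a|w^i_a|\,|w^i_{a+lq}|$ gives
\begin{equation}
\lambda_{\max}\!\left(\Omega(w^i)\Omega(w^i)^\top\right)\le\sum_{q=-b}^{b}|t_{|q|}|\le\sum_{a}|w^i_a|\sum_{q}|w^i_{a+lq}|\le\Big(\sum_a|w^i_a|\Big)^2=\|w^i\|_1^2,
\end{equation}
so $\|\Omega(w^i)\|_\sigma\le\|w^i\|_1$, and taking the maximum over $i$ yields $\|\gamma(W)\|_\sigma\le\|W\|_\infty$.

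The main obstacle is the middle step: one must verify that $\Omega(w^i)\Omega(w^i)^\top$ is \emph{exactly} (not merely asymptotically) Toeplitz, which relies on the fact that all windows $S_1,\dots,S_{m'}$ lie inside $\{1,\dots,m\}$ so there is no boundary truncation, and one must correctly pin down the band $b$ in terms of the filter size $k$ and stride $l$ and check the (trivially satisfied) summability hypothesis of Lemma~\ref{lemma:toeplitz}. The final rearrangement of the autocorrelation sum is routine, but needs a little care with the zero-padding convention and with the grouping of indices into residue classes modulo $l$.
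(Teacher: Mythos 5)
Your proposal is correct and follows essentially the same route as the paper's own proof: reduce to a single filter via the block-diagonal structure of $\gamma(W)\gamma(W)^\top$, recognize $\Omega(w^i)\Omega(w^i)^\top$ as a symmetric banded Toeplitz matrix of filter autocorrelations, apply Lemma~\ref{lemma:toeplitz}, and bound the banded sum by $\|w^i\|_1^2$. The only cosmetic difference is your band $b=\lfloor (k-1)/l\rfloor$ versus the paper's looser $b=\lceil k/l\rceil$, which is immaterial since the extra terms vanish.
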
 
\begin{proof} First, we show that $\Omega(w^i)\Omega(w^i)^\top$ is a symmetric banded Toeplitz matrix for all $i=1,...,c$, and then we can apply Lemma \ref{lemma:toeplitz}. Denote the stride size by $l$. In the overlapping scenario, the stride size $l$ is smaller than the filter size $k$. Using the definition of $S_j$ in~\eqref{eq: s definition}, if $p < q$, we have
\begin{equation}
    \left[\Omega(w^i)\Omega(w^i)^\top\right]_{p,q} = w^{\,i}_{S_p} w_{S_q}^{\,i\top} = \sum_{j=1}^{k-(q-p)l} w^{\,i}_{(q-p)l+j}w^{\,i}_j.
\end{equation}
By defining
\begin{equation}
    t^{\,i}_{s} \triangleq \sum_{j=1}^{k-s\,l} w^{\,i}_{s\,l+j}\,w^{\,i}_{j},
\end{equation}
we have $\left[\Omega(w^i)\Omega(w^i)^\top\right]_{p,q} = t^{\,i}_{ q-p }$. Since $\Omega(w^i)\Omega(w^i)^\top$ is symmetric, we have that
\begin{equation}
    \left[\Omega(w^i)\Omega(w^i)^\top\right]_{p,q} = t^{\,i}_{|p-q| }.
\end{equation}
By definition, it is a symmetric banded Toeplitz matrix with band $b = \lceil k/l \rceil$. Applying Lemma \ref{lemma:toeplitz}, we have
\begin{equation}
    \lambda_{\max}\left(\Omega(w^i)\Omega(w^i)^\top\right) \leq \sum_{q=-b}^b \sum_{j=1}^{k- |q| l} \left|w^i_{ |q| l+j}w^i_j\right| \overset{(a)}{\leq} \sum_{p=1}^k\sum_{q=1}^k \left|w^i_p w^i_q\right| = \left\|w^i\right\|_1^2,
\end{equation}
where equality of $(a)$ holds if the stride size $l$ is equal to one. Hence by~\eqref{depthwise gamma w square}, we have
    \begin{equation}
        \begin{aligned}
            \|\gamma(W)\|_{\sigma} &= \sqrt{\lambda_{\max}\left(\gamma(W)\gamma(W)^{\top}\right)} = \sqrt{\max_{i=1,...,c} \lambda_{\max}\left(\Omega(w^i)\Omega(w^i)^\top\right)} \\
            &\leq \sqrt{\max_{i=1,...,c}\left\|w^i\right\|_1^2} = \max_{i=1,...,c} \left\|w^i\right\|_1 = \left\|W\right\|_\infty.
        \end{aligned}
    \end{equation}
This concludes the proof.
\end{proof}

\subsection{Pointwise Convolution}
Different from depthwise convolutions, pointwise convolutions have channel dimensions, whereas their spatial dimensions are always equal to one. They are often used to combine the outputs of depthwise convolutions so as to form depthwise separable convolutions. In addition, they can be applied individually as bottleneck layers by setting the number of output channels to be smaller than the number of input channels, or as logit layers by setting the number of output channels to be equal to the number of output classes.

Given input data $Z \in \mathbb{R}^{m \times c \times n}$, let $W = (w^1, ..., w^{c'}) \in \mathbb{R}^{c' \times c}$ be a pointwise convolutional weight matrix containing $c'$ convolutional filters and the output data $Z' \in \mathbb{R}^{m \times c' \times n}$. Then the corresponding fully connected matrix $\gamma(W)$ can be formulated as 
\begin{equation}\label{eq: gamma w pointwise}
\gamma(W) =  \begin{pmatrix}
\Phi(w^1) \\
\Phi(w^2) \\
\vdots \\
\Phi(w^{c'})
\end{pmatrix} \in \mathbb{R}^{mc' \times mc} \text{, where }
\Phi(w^i) = \begin{pmatrix}
w^i & \cdots & \cdots & \cdots \\
\cdots & w^i & \cdots & \cdots \\
\vdots & \vdots & \vdots & \vdots \\
\cdots & \cdots & \cdots & w^i
\end{pmatrix} \in \mathbb{R}^{m \times mc}.
\end{equation}
Similarly, our goal is to compute the square root of the largest eigenvalue of $\gamma(W)\gamma(W)^\top$. Given~\eqref{eq: gamma w pointwise}, we have
\begin{equation}
     \Phi(w^i)\Phi(w^j)^\top = w^i w^{j \top} I_{m}.
\end{equation}
Hence, we have
\begin{equation}\label{eq: gamma w square pointwise}
\begin{aligned}
    \gamma(W)\gamma(W)^\top &= \begin{pmatrix}
    \Phi(w^1)\Phi(w^1)^\top & \Phi(w^1)\Phi(w^2)^\top & \cdots & \Phi(w^1)\Phi(w^{c'})^\top \\
    \Phi(w^2)\Phi(w^1)^\top & \Phi(w^2)\Phi(w^2)^\top & \cdots & \Phi(w^2)\Phi(w^{c'})^\top \\
    \vdots & \vdots & \vdots & \vdots \\
    \Phi(w^{c'})\Phi(w^1)^\top & \Phi(w^{c'})\Phi(w^2)^\top & \cdots & \Phi(w^{c'})\Phi(w^{c'})^\top 
    \end{pmatrix} \\
    &= \begin{pmatrix}
    w^1w^{1\top} I_{m} & w^1w^{2\top} I_{m} & \cdots & w^1w^{c'\top} I_{m} \\
    w^2w^{1\top} I_{m} & w^2w^{2\top} I_{m} & \cdots & w^2w^{c'\top} I_{m} \\
    \vdots & \vdots & \vdots & \vdots \\
    w^{c'}w^{1\top} I_{m} & w^{c'}w^{2\top} I_{m } & \cdots & w^{c'}w^{c'\top} I_{m} 
    \end{pmatrix} \\
    &\triangleq \Theta(WW^\top, I_{m}),
\end{aligned}
\end{equation}
where we define a new matrix operator $\Theta(\cdot, I_{m})$ in the last step. The following lemma presents a nice property for this operator.

\begin{lemma}\label{lm: theta}
    Given any positive semidefinite matrix $V \in \mathbb{R}^{n \times n}$, let $\lambda = \text{diag}(\lambda_1, ..., \lambda_n)$ denote its eigenvalues. For any $m > 0$, $\Theta(V, I_{m})$ is similar to $\Theta(\lambda, I_{m})$.
\end{lemma}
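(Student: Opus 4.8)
The plan is to exhibit an explicit similarity transformation realizing $\Theta(V, I_m) \sim \Theta(\lambda, I_m)$. Since $V$ is positive semidefinite (in fact any symmetric matrix would do), the spectral theorem gives an orthogonal matrix $Q \in \mathbb{R}^{n \times n}$ with $Q^\top V Q = \lambda$. The key observation is that the operator $\Theta(\cdot, I_m)$ interacts nicely with the Kronecker product: writing the blocks out, $\Theta(V, I_m)$ is exactly $V \otimes I_m$ (a block matrix whose $(i,j)$ block is $V_{ij} I_m$). The conjugating matrix will then be $Q \otimes I_m$, which is itself orthogonal, and the mixed-product property of the Kronecker product yields $(Q \otimes I_m)^\top (V \otimes I_m)(Q \otimes I_m) = (Q^\top V Q) \otimes I_m = \lambda \otimes I_m = \Theta(\lambda, I_m)$.

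\textbf{The steps, in order.} First I would verify the block identity $\Theta(V,I_m) = V \otimes I_m$ directly from the definition in~\eqref{eq: gamma w square pointwise}, and likewise $\Theta(\lambda, I_m) = \lambda \otimes I_m$; this is purely a matter of matching block entries. Second, invoke the spectral decomposition $V = Q\lambda Q^\top$ with $Q$ orthogonal, available precisely because $V$ is (positive semidefinite, hence) symmetric. Third, set $P \triangleq Q \otimes I_m$ and check $P$ is invertible (indeed orthogonal, with $P^{-1} = P^\top = Q^\top \otimes I_m$, since $(Q \otimes I_m)(Q^\top \otimes I_m) = (QQ^\top) \otimes I_m = I_{nm}$). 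Fourth, compute
\begin{equation}
P^{-1}\,\Theta(V,I_m)\,P = (Q^\top \otimes I_m)(V \otimes I_m)(Q \otimes I_m) = (Q^\top V Q) \otimes I_m = \lambda \otimes I_m = \Theta(\lambda, I_m),
\end{equation}
using the mixed-product rule $(A \otimes B)(C \otimes D) = (AC) \otimes (BD)$ twice. This establishes the claimed similarity.

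\textbf{What could be the main obstacle.} Honestly there is no deep obstacle here; the lemma is essentially the statement that $\Theta(\cdot, I_m)$ is the Kronecker-product-with-$I_m$ functor, which commutes with conjugation. If one wants to avoid invoking Kronecker-product machinery explicitly (in case the paper prefers to stay self-contained), the only mildly fiddly point is writing the conjugation at the level of $m \times m$ blocks by hand: one must check that the $(p,q)$ block of $P^{-1}\Theta(V,I_m)P$ equals $\big(\sum_{i,j} (Q^\top)_{pi} V_{ij} Q_{jq}\big) I_m = (Q^\top V Q)_{pq} I_m$, which requires carefully tracking that each $I_m$ factor passes through the scalar block entries unchanged. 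This is routine but index-heavy, so I would present the Kronecker-product version as the clean argument and, if desired, remark that it unwinds to the block computation. A secondary minor point worth stating explicitly is that we only use symmetry of $V$, not positive semidefiniteness, so the lemma holds in that greater generality; positive semidefiniteness is relevant later because it forces the eigenvalues $\lambda_i \geq 0$, which matters when we take square roots to extract $\|\gamma(W)\|_\sigma$.
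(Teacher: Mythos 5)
Your proof is correct and is essentially the paper's argument in Kronecker-product clothing: identifying $\Theta(V,I_m)$ with $V\otimes I_m$ and applying the mixed-product rule to $(Q\otimes I_m)^\top(V\otimes I_m)(Q\otimes I_m)$ is exactly the paper's step of diagonalizing $V=P\lambda P^{-1}$ and using the multiplicativity $\Theta(AB,I_m)=\Theta(A,I_m)\Theta(B,I_m)$, which the paper verifies by the same block computation you defer to. Your side remarks (orthogonality of $Q\otimes I_m$, and that only symmetry of $V$ is needed) are accurate but do not change the substance.
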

\begin{proof}
    Since $V$ is positive semidefinite, there exists matrix $P \in \mathbb{R}^{n \times n}$ such that $V$ can be factorized as $P \lambda P^{-1}$, hence $\Theta(V, I_{m}) = \Theta(P \lambda P^{-1}, I_{m})$. We first show that $\Theta$ has distributive property of multiplication, i.e, $\Theta(AB, I_{m}) = \Theta(A, I_{m}) \Theta(B, I_{m})$ for any $A, B \in \mathbb{R}^{n \times n}$. Indeed, we have
    \begin{equation}
        \begin{aligned}
            \Theta(AB, I_{m}) &= \begin{pmatrix}
            (AB)_{1,1} I_{m} & (AB)_{1,2} I_{m} & \cdots & (AB)_{1,n} I_{m} \\
            (AB)_{2,1} I_{m} & (AB)_{2,2} I_{m} & \cdots & (AB)_{2,n} I_{m} \\
            \vdots & \vdots & \vdots & \vdots \\
            (AB)_{n,1} I_{m} & (AB)_{n,2} I_{m} & \cdots & (AB)_{n,n} I_{m} \\
            \end{pmatrix}\\
            \vspace{2mm}
            &= \begin{pmatrix}
            A_{1,1} I_{m} & A_{1,2} I_{m} & \cdots & A_{1,n} I_{m} \\
            A_{2,1} I_{m} & A_{2,2} I_{m} & \cdots & A_{2,n} I_{m} \\
            \vdots & \vdots & \vdots & \vdots \\
            A_{n,1} I_{m} & A_{n,2} I_{m} & \cdots & A_{n,n} I_{m} \\
            \end{pmatrix} \times
            \begin{pmatrix}
            B_{1,1} I_{m} & B_{1,2} I_{m} & \cdots & B_{1,n} I_{m} \\
            B_{2,1} I_{m} & B_{2,2} I_{m} & \cdots & B_{2,n} I_{m} \\
            \vdots & \vdots & \vdots & \vdots \\
            B_{n,1} I_{m} & B_{n,2} I_{m} & \cdots & B_{n,n} I_{m} \\
            \end{pmatrix} \\
            &= \Theta(A, I_{m})\Theta(B, I_{m}).
        \end{aligned}
    \end{equation}
    Given this, we obtain
    \begin{equation}\label{eq:theta 1}
        \Theta(V, I_{m}) = \Theta(P \lambda P^{-1}, I_{m}) = \Theta(P, I_{m})\Theta(\lambda, I_{m})\Theta(P^{-1}, I_{m}).
    \end{equation}
    We also have
    \begin{equation}\label{eq:theta 2}
        \Theta(P, I_{m})\Theta(P^{-1}, I_{m}) = \Theta(PP^{-1}, I_{m}) = I_{mn}.
    \end{equation}
    Combining~\eqref{eq:theta 1} and~\eqref{eq:theta 2} concludes the proof.
\end{proof}
Based on the above lemma, it is straightforward to get $\|\gamma(W)\|_\sigma$ as shown in the following.
\begin{proposition}\label{prop: pointwise} 
Let $W = (w^1, ..., w^{c'}) \in  \mathbb{R}^{c' \times c}$ be a pointwise convolutional weight matrix, then the corresponding fully connected matrix $\gamma(W)$ generated by $W$ satisfies
    \begin{equation}
        \|\gamma(W)\|_{\sigma} = \|W\|_\sigma.
    \end{equation}
\end{proposition}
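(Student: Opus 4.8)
The plan is to identify $\|\gamma(W)\|_\sigma^2$ with the largest eigenvalue of the Gram matrix $\gamma(W)\gamma(W)^\top$ and then reduce that to $\lambda_{\max}(WW^\top) = \|W\|_\sigma^2$. Most of the work has already been done: by~\eqref{eq: gamma w square pointwise} we have the operator identity $\gamma(W)\gamma(W)^\top = \Theta(WW^\top, I_m)$, so it suffices to prove that $\lambda_{\max}\big(\Theta(WW^\top, I_m)\big) = \lambda_{\max}(WW^\top)$.

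First I would apply Lemma~\ref{lm: theta} with the positive semidefinite matrix $V = WW^\top \in \mathbb{R}^{c' \times c'}$, obtaining that $\Theta(WW^\top, I_m)$ is similar to $\Theta(\lambda, I_m)$, where $\lambda = \text{diag}(\lambda_1, \dots, \lambda_{c'})$ lists the eigenvalues of $WW^\top$. Since similarity preserves the spectrum, the two matrices have exactly the same eigenvalues.

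Next I would note that $\Theta(\lambda, I_m)$ has a transparent structure: because its defining matrix $\lambda$ is diagonal, every off-diagonal block $\lambda_{ij} I_m$ with $i \neq j$ vanishes, so $\Theta(\lambda, I_m) = \text{diag}(\lambda_1 I_m, \dots, \lambda_{c'} I_m)$. This block-diagonal matrix has eigenvalues $\lambda_1, \dots, \lambda_{c'}$, each with multiplicity $m$, hence $\lambda_{\max}\big(\Theta(\lambda, I_m)\big) = \max_i \lambda_i = \lambda_{\max}(WW^\top)$. Chaining the equalities gives $\|\gamma(W)\|_\sigma^2 = \lambda_{\max}\big(\gamma(W)\gamma(W)^\top\big) = \lambda_{\max}(WW^\top) = \|W\|_\sigma^2$, and taking square roots completes the argument.

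I do not expect any genuine obstacle here; the statement is essentially a corollary of~\eqref{eq: gamma w square pointwise} and Lemma~\ref{lm: theta}. The only points meriting a line of justification are that eigenvalues are a similarity invariant (licensing the passage to the diagonalized form) and that $\Theta$ applied to a diagonal matrix yields a block-diagonal matrix with the obvious spectrum, both of which are immediate from the definition of $\Theta$.
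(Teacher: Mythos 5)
Your proposal is correct and follows essentially the same route as the paper: it combines the identity $\gamma(W)\gamma(W)^\top = \Theta(WW^\top, I_m)$ from~\eqref{eq: gamma w square pointwise} with Lemma~\ref{lm: theta}, uses similarity invariance of the spectrum, and reads off $\lambda_{\max}$ from the block-diagonal matrix $\Theta(\lambda, I_m)$. The only difference is that you spell out explicitly why $\Theta(\lambda, I_m)$ has eigenvalues $\lambda_1,\dots,\lambda_{c'}$, a step the paper leaves implicit.
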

\begin{proof}
Let $\lambda = \text{diag}(\lambda_1, ..., \lambda_{c'})$, where $\lambda_i$s are the eigenvalues of $WW^\top$. Then by Lemma \ref{lm: theta} and equation~\eqref{eq: gamma w square pointwise}, we have

\begin{equation}
    \begin{aligned}
     \|\gamma(W)\|_\sigma &= \sqrt{\lambda_{\max} \left(\gamma(W)\gamma(W)^\top\right) } = \sqrt{\lambda_{\max} \left(\Theta(WW^\top, I_{m})\right) } \\
     &= \sqrt{\lambda_{\max} \left(\Theta(\lambda, I_{m})\right)} = \sqrt{\max_{i=1,...,c} \lambda_i} = \sqrt{\lambda_{\max} \left(WW^\top\right)} \\
     &= \|W\|_\sigma.
     \end{aligned}
\end{equation}
This completes the proof.

\end{proof}

\section{Comparison with Existing Generalization Bounds}\label{sec-comparison}
In this section, we compare our proposed generalization bounds
with existing bounds, e.g., \citep{neyshabur2015norm, bartlett2017spectrally, neyshabur2017pac, golowich2018size, li2018tighter} both theoretically and empirically. For theoretical comparisons, we discuss generalization bounds for FNNs and FCNNs. For empirical comparisons, we conduct experiments for FCNNs based on MobileNet v1 and MobileNet v2. Note that previous bounds are mainly derived for FNNs without taking into account convolution structures. Thus, for those works which do not explicitly present generalization bounds for FCNNs, we use the transformed fully connected weight matrices generated by convolution weights in place of the fully connected weight matrix in their bounds for FNNs when comparing. 


\subsection{Theoretical Comparisons}
Existing bounds depend on different norms of the fully connected matrices of layers or different norms of the convolutional weights. For instance, the generalization bound of FNNs proposed by \citep{bartlett2017spectrally} depends on the $2,1$-norm and the spectral norm of the fully connected matrix, whereas our bounds depend on its $F$-norm. Besides, with $W$ being the convolutional weight, previous bounds of FCNNs are based on different norms of $\gamma(W)$, whereas ours depend on the $F$-norm of $W$. To make them comparable, we first derive the relation between the relation between $\|A\|_{2,1}$ and $\|A\|_F$ as well as $\|\gamma(W)\|_{2,1}$ and $\|W\|_F$.

\begin{proposition}\label{prop:a 21}
    Given a fully connected weight matrix $A \in \mathbb{R}^{d_{output} \times d_{input}}$ satisfying $\|A\|_F \leq a$, we have
    \begin{equation}
        \|A\|_{2,1} \leq a\sqrt{d_{output}}.
    \end{equation}
\end{proposition}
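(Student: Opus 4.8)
The plan is to unwind the definition of the $2,1$-norm and reduce the claim to a single application of the Cauchy--Schwarz inequality. Recall that $\|A\|_{2,1}$ is obtained by taking the $\ell_2$ norm of each of the $d_{output}$ rows of $A$ and then summing these $d_{output}$ numbers; writing $A_{i,:}$ for the $i$-th row, this means $\|A\|_{2,1} = \sum_{i=1}^{d_{output}} \|A_{i,:}\|_2$. The appearance of exactly $d_{output}$ summands is precisely what makes $\sqrt{d_{output}}$ the natural factor to aim for.

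First I would collect the row norms into a vector $v = (\|A_{1,:}\|_2, \dots, \|A_{d_{output},:}\|_2) \in \mathbb{R}^{d_{output}}$, so that $\|A\|_{2,1} = \|v\|_1$. Applying Cauchy--Schwarz to $v$ and the all-ones vector in $\mathbb{R}^{d_{output}}$ (equivalently, using the elementary inequality $\|v\|_1 \le \sqrt{d_{output}}\,\|v\|_2$), I obtain $\|A\|_{2,1} \le \sqrt{d_{output}}\,\bigl(\sum_{i=1}^{d_{output}} \|A_{i,:}\|_2^2\bigr)^{1/2}$. Then I would note that $\sum_{i=1}^{d_{output}} \|A_{i,:}\|_2^2 = \sum_{i=1}^{d_{output}}\sum_{j=1}^{d_{input}} A_{i,j}^2 = \|A\|_F^2$, so the bracketed quantity is exactly $\|A\|_F$, and the hypothesis $\|A\|_F \le a$ then yields $\|A\|_{2,1} \le a\sqrt{d_{output}}$.

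There is essentially no real obstacle: once the $2,1$-norm is written out explicitly, the estimate is a one-line Cauchy--Schwarz bound together with the identity relating the sum of squared row norms to $\|A\|_F^2$. The only thing requiring a moment's care is the bookkeeping of conventions, namely making sure the outer $\ell_1$ sum runs over the $d_{output}$ rows so that the number of terms matches the factor $\sqrt{d_{output}}$ in the statement.
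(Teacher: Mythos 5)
Your proof is correct and follows essentially the same route as the paper: both reduce the claim to the comparison $\|v\|_1 \le \sqrt{d_{output}}\,\|v\|_2$ for the vector of row norms, which you justify via Cauchy--Schwarz and the paper via Jensen's inequality (the two are equivalent here), followed by the identity $\sum_i \|A_{i,:}\|_2^2 = \|A\|_F^2$.
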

\begin{proof}
With Jensen's inequality, we have
    \begin{equation}
        \left(\frac{1}{d_{output}} \sum_{i=1}^{d_{output}} \left(\sum_{j=1}^{d_{input}} A_{ij}^2 \right)^{1/2} \right)^2 \leq \frac{1}{d_{output}} \sum_{i=1}^{d_{output}} \left(\sum_{j=1}^{d_{input}} A_{ij}^2 \right).
    \end{equation}
Hence,
    \begin{equation}
        \|A\|_{2,1} = \sum_{i=1}^{d_{output}} \left(\sum_{j=1}^{d_{input}} A_{ij}^2 \right)^{1/2} \leq \sqrt{d_{output}} \|A\|_F \leq a\sqrt{d_{output}}.
    \end{equation}
\end{proof}

\begin{proposition}\label{prop: gamma w f 21 norm}
    Given a convolutional weight matrix $W = (w^1, ..., w^c) \in \mathbb{R}^{c \times r}$ satisfying $\|W\|_F \leq a$. Assuming that each convolutional filter $w^i \in W$ generates $m$ outputs, then its transformed fully connected weight matrix $\gamma(W) \in \mathbb{R}^{d_{output} \times d_{input}}$ where $d_{output} = cm$ satisfying
    \begin{equation}\label{gamma 21 norm}
        \|\gamma(W)\|_{2,1} \leq am\sqrt{c}.
    \end{equation}
\end{proposition}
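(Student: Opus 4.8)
The plan is to mimic the structure of the proof of Proposition \ref{prop:a 21}, but applied to the rows of $\gamma(W)$ and exploiting the sparsity and weight-sharing pattern established in Section \ref{subsubsec-matrix}. Recall from~\eqref{eq:gamma w} that the rows of $\gamma(W)$ are exactly the vectors $w^i_{S_k}$ for $i \in \{1,\dots,c\}$ and $k \in \{1,\dots,m\}$. The key observation is that $w^i_{S_k}$ is obtained from $w^i$ by permuting its entries into the positions indexed by $S_k$ and padding with zeros, so $\|w^i_{S_k}\|_2 = \|w^i\|_2$ for every $k$. Consequently the $2,1$-norm of $\gamma(W)$, which is the sum of the $\ell_2$-norms of its rows, is
\begin{equation}
    \|\gamma(W)\|_{2,1} = \sum_{i=1}^c \sum_{k=1}^m \|w^i_{S_k}\|_2 = \sum_{i=1}^c \sum_{k=1}^m \|w^i\|_2 = m \sum_{i=1}^c \|w^i\|_2.
\end{equation}

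Next I would bound $\sum_{i=1}^c \|w^i\|_2$ by $\sqrt{c}\,\|W\|_F$. This is the same Cauchy--Schwarz / Jensen step used in Proposition \ref{prop:a 21}: treating $W$ as a matrix with rows $w^1,\dots,w^c$, we have $\sum_{i=1}^c \|w^i\|_2 \le \sqrt{c}\,\big(\sum_{i=1}^c \|w^i\|_2^2\big)^{1/2} = \sqrt{c}\,\|W\|_F \le a\sqrt{c}$. Combining the two displays yields $\|\gamma(W)\|_{2,1} \le m\sqrt{c}\,\|W\|_F \le am\sqrt{c}$, which is exactly~\eqref{gamma 21 norm}.

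I do not anticipate a serious obstacle here; the proof is short and essentially combinatorial. The one point that needs care is justifying the identity $\|w^i_{S_k}\|_2 = \|w^i\|_2$ rigorously — this follows because, by the construction in Section \ref{subsubsec-matrix}, $w^i_{S_k}$ has the $r$ entries of $w^i$ placed (in order) at the positions listed in $S_k$ and zeros elsewhere, so it is a zero-padded permutation of $w^i$ and hence has the same Euclidean norm. Once that is in place, the rest is just counting the $cm$ rows of $\gamma(W)$ and applying Jensen's inequality exactly as in the fully connected case.
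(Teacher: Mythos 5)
Your proposal is correct and follows essentially the same route as the paper: the paper's proof likewise rewrites $\|\gamma(W)\|_{2,1}$ as the sum of row norms, notes that each of the $cm$ rows is a zero-padded copy of a filter so the sum equals $m\sum_{i=1}^c \|w^i\|_2$, and then applies Jensen's inequality to get the factor $\sqrt{c}\,\|W\|_F \leq a\sqrt{c}$. Your version merely spells out the row-norm identity $\|w^i_{S_k}\|_2 = \|w^i\|_2$ a bit more explicitly, which is fine.
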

\begin{proof}
    \begin{equation}\label{eq: 21 norm1}
        \|\gamma(W)\|_{2,1} = \sum_{i=1}^{cm} \left(\sum_{j=1}^{d_{input}} \gamma(W)_{ij}^2 \right)^{1/2} = m \sum_{i=1}^c \left(\sum_{j=1}^r W_{ij}^2\right)^{1/2}.
    \end{equation}
With Jensen's inequality, we have
    \begin{equation}
        \left(\frac{1}{c} \sum_{i=1}^c \left(\sum_{j=1}^r W_{ij}^2\right)^{1/2}\right)^2 \leq \frac{1}{c}  \sum_{i=1}^c  \left(\sum_{j=1}^r W_{ij}^2\right) = \frac{1}{c}\|W\|_F^2.
    \end{equation}
Combining this with~\eqref{eq: 21 norm1}, we get
    \begin{equation}
        \|\gamma(W)\|_{2,1} \leq m\sqrt{c} \|W\|_F \leq am\sqrt{c}.
    \end{equation}
This concludes the proof.
\end{proof}





Based on the summary of existing generalization bounds presented in \citep{li2018tighter}, we list the original and simplified bounds for FNNs in Table \ref{tb: comparison fnn}. 
The simplification is achieved by assuming that, for any layer $i\in\{1,...,L\}$, the Lipschitz constant of activation function $\sigma_i$ is equal to one, the output dimension $d_i$ is equal to $d$, and $\|A_i\|_\sigma \leq s, \|A_i\|_F \leq a$. Then by Proposition \ref{prop:a 21},  we have $\|A_i\|_{2,1} \leq a\sqrt{d}$. Table \ref{tb: comparison fcnn} of generalization bounds for FCNNs is obtained in a similar approach. The simplification column is achieved by further assuming that, for any layer $i\in\{1,...,L\}$, the number of channels denoted by $c_i$ is $c$, the size of each convolutional filter $r_i$ is $r$, and each convolutional filter generates $m$ outputs. We also assume that $\|\gamma_i(W_i)\|_\sigma \leq s$ and $\|W_i\|_F \leq a$. Then by Proposition \ref{prop: gamma w f 21 norm}, we have $\|\gamma_i(W_i)\|_{2,1} \leq a m \sqrt{c}$.


Ignoring constant factors independent of $L$, our result consistently surpasses the results from \citep{bartlett2017spectrally, neyshabur2017pac} in terms of the number of layers $L$ for both cases of FNNs and FCNNs. When $L < \min \{2^L, s^{3L + 1} \}$, which is often the case in FNNs and FCNNs, our result is also tighter than those from \citep{neyshabur2015norm, golowich2018size, li2018tighter}.


\begin{table}[ht]
\scalebox{0.95}{
\begin{tabular}{c|c|c}
 & Original Bound & Simplification \\ \hline
\cite{neyshabur2015norm} & $\mathcal{O}\left(\frac{2^L \prod_{i=1}^L\|A_i\|_F}{\sqrt{n}} \right)$ & $\mathcal{O}\left(\frac{ 2^La^L}{\sqrt{n}} \right)$ \\ \hline
\cite{bartlett2017spectrally} & $\widetilde{\mathcal{O}}\left(\frac{\prod_{i=1}^L \|A_i\|_\sigma}{\sqrt{n}}\left(\sum_{i=1}^L \frac{\|A_i\|_{2,1}^{\frac{2}{3}}}{\|A_i\|_\sigma^{\frac{2}{3}}}\right)^{\frac{3}{2}}\right)$  & $\widetilde{\mathcal{O}}\left(\frac{s^{L-1}L^{\frac{3}{2}}ad^{\frac{1}{2}}}{\sqrt{n}} \right)$ \\ \hline
\cite{neyshabur2017pac} & $\widetilde{\mathcal{O}}\left(\frac{\prod_{i=1}^L \|A_i\|_\sigma}{\sqrt{n}} \sqrt{L^2 d \sum_{i=1}^L \frac{\|A_i\|_F^2}{\|A_i\|_\sigma^2}}\right)$ & $\widetilde{\mathcal{O}}\left(\frac{s^{L-1}L^{\frac{3}{2}}ad^{\frac{1}{2}}}{\sqrt{n}} \right)$ \\ \hline
\cite{golowich2018size} & $\widetilde{\mathcal{O}}\left(\prod_{i=1}^L \|A_i\|_F \cdot  \min\left\{\frac{1}{\sqrt[4]{n}}, \sqrt{\frac{L}{n}}\right\} \right)$ & $\widetilde{\mathcal{O}}\left(a^L \cdot  \min\left\{\frac{1}{\sqrt[4]{n}}, \sqrt{\frac{L}{n}}\right\} \right)$  \\ \hline
\cite{li2018tighter} & $\widetilde{\mathcal{O}}\left(\frac{\prod_{i=1}^L \|A_i\|_\sigma \sqrt{Ld^2}}{\sqrt{n}} \right)$ & $\widetilde{\mathcal{O}}\left(\frac{s^{L}L^{\frac{1}{2}}d}{\sqrt{n}} \right)$  \\ \hline
Our result &  $\widetilde{\mathcal{O}}\left(\frac{\prod_{i=1}^{L} \|A_i\|_\sigma ^{\frac{1}{4}}}{\sqrt{n}}\left(L^2 d^4\sum_{i=1}^{L}\frac{\|A_i\|_F}{\|A_i\|_\sigma}\right)^{\frac{1}{4}}\right)$ & $\widetilde{\mathcal{O}}\left(\frac{s^{\frac{L-1}{4}}L^{\frac{3}{4}}a^{\frac{1}{4}}d}{\sqrt{n}} \right)$ 
\end{tabular}
}\caption{Comparison of generalization bounds for fully connected neural networks.}
\label{tb: comparison fnn}
\end{table}
 
\begin{table}[ht]
\scalebox{0.82}{
\begin{tabular}{c|c|c}
 & Original Bound & Simplification \\ \hline
\cite{neyshabur2015norm} & $\mathcal{O}\left(\frac{2^L \prod_{i=1}^L\|\gamma_i(W_i)\|_F}{\sqrt{n}} \right)$ & $\mathcal{O}\left(\frac{ 2^La^Lm^ \frac{L}{2}}{\sqrt{n}} \right)$ \\ \hline
\cite{bartlett2017spectrally} & $\widetilde{\mathcal{O}}\left(\frac{\prod_{i=1}^L \|\gamma_i(W_i)\|_\sigma}{\sqrt{n}}\left(\sum_{i=1}^L \frac{\|\gamma_i(W_i)\|_{2,1}^{\frac{2}{3}}}{\|\gamma_i(W_i)\|_\sigma^{\frac{2}{3}}}\right)^{\frac{3}{2}}\right)$  & $\widetilde{\mathcal{O}}\left(\frac{s^{L-1}L^{\frac{3}{2}}ac^{\frac{1}{2}}m}{\sqrt{n}} \right)$ \\ \hline
\cite{neyshabur2017pac} & $\widetilde{\mathcal{O}}\left(\frac{\prod_{i=1}^L \|\gamma_i(W_i)\|_\sigma}{\sqrt{n}} \sqrt{L^2 cm \sum_{i=1}^L \frac{\|\gamma_i(W_i)\|_F^2}{\|\gamma_i(W_i)\|_\sigma^2}}\right)$ & $\widetilde{\mathcal{O}}\left(\frac{s^{L-1}L^{\frac{3}{2}}ac^{\frac{1}{2}}m}{\sqrt{n}} \right)$ \\ \hline
\cite{golowich2018size} & $\widetilde{\mathcal{O}}\left(\prod_{i=1}^L \|\gamma_i(W_i)\|_F \cdot  \min\left\{\frac{1}{\sqrt[4]{n}}, \sqrt{\frac{L}{n}}\right\} \right)$ & $\widetilde{\mathcal{O}}\left(a^Lm^{\frac{L}{2}} \cdot  \min\left\{\frac{1}{\sqrt[4]{n}}, \sqrt{\frac{L}{n}}\right\} \right)$  \\ \hline
\cite{li2018tighter} & $\widetilde{\mathcal{O}}\left(\frac{\prod_{i=1}^L \|\gamma_i(W_i)\|_\sigma \sqrt{Lc^2m^2}}{\sqrt{n}} \right)$ & $\widetilde{\mathcal{O}}\left(\frac{s^{L}L^{\frac{1}{2}}cm}{\sqrt{n}} \right)$  \\ \hline
Our result &  $\widetilde{\mathcal{O}}\left(\frac{\prod_{i=1}^{L} \|\gamma_i(W_i)\|_\sigma ^{\frac{1}{4}}}{\sqrt{n}}\left(L^2 c^2r^2\sqrt{m}\sum_{i=1}^{L}\frac{\|W_i\|_F}{\|\gamma_i(W_i)\|_\sigma}\right)^{\frac{1}{4}}\right)$ & $\widetilde{\mathcal{O}}\left(\frac{s^{\frac{L-1}{4}}L^{\frac{3}{4}}a^{\frac{1}{4}}c^{\frac{1}{2}}m^{\frac{1}{8}}r^{\frac{1}{2}}}{\sqrt{n}} \right)$ 
\end{tabular}
}\caption{Comparison of generalization bounds for fully convolutional neural networks.}
\label{tb: comparison fcnn}
\end{table}

\subsection{Empirical Comparisons}
In this section, we conduct experiments to empirically demonstrate the advantage of our generalization bounds. We use MobileNet V1 \citep{howard2017mobilenets} and V2 \citep{sandler2018mobilenetv2} to compare our FCNN bounds with others. MobileNets extensively utilize depthwise separable convolutions to achieve balance between efficiency and accuracy such that they are suitable to deploy to mobile devices with limited computation power. In this experiment, we use official trained weights of MobileNets \footnote{https://github.com/tensorflow/models/tree/master/research/slim} to instantiate the network. For simplicity, we bound $\|\gamma(W)\|_\sigma$, $\|\gamma(W)\|_F$, and $\|\gamma(W)\|_{2,1}$ in terms of $\|W\|_F$ according to Proposition \ref{prop: gamma w sigma standard}, \ref{prop: non depthwise}, \ref{prop: over depthwise}, \ref{prop: pointwise} and \ref{prop: gamma w f 21 norm}. 
Additionally, we ignore the factors of the number of training sample $n$, as they are the same for all generalization bounds discussed here. Based on the results shown in Figure \ref{fig:comparison mobilenets}, we can see that our bound is much tighter than others empirically.

\begin{figure}

\begin{subfigure}[t]{0.5\textwidth}
\begin{tikzpicture}[font=\tiny]
    \begin{semilogyaxis}[
        symbolic x coords={1, 2, 3, 4, 5, 6},
        x tick label style={ /pgf/number format/1000 sep=},
        nodes near coords,
        every node near coord/.append style={font=\tiny, color=black},
        ]
    \addplot+[ybar, fill=blue, mark=none, point meta=explicit symbolic] table[meta=label] { 
        x y label
        1 1.93e83 1.93e83
        2 4.32e30  4.32e30
        3 8e30 8e30
        4 3.81e75 3.81e75
        5 3.41e30 3.41e30
        6 1.6e10 1.6e10
    };
    \end{semilogyaxis}
\end{tikzpicture}
\caption{MobileNet v1}
\label{fig:comparison mnv1}
\end{subfigure}
~
\begin{subfigure}[t]{0.5\textwidth}
\begin{tikzpicture}[font=\tiny]
    \begin{semilogyaxis}[
        symbolic x coords={1, 2, 3, 4, 5, 6},
        x tick label style={ /pgf/number format/1000 sep=},
        nodes near coords,
        every node near coord/.append style={font=\tiny, color=black},
        ]
    \addplot+[ybar, fill=blue, mark=none, point meta=explicit symbolic] table[meta=label] { 
        x y label
        1 2.41e144 2.41e144
        2 7.52e40 7.52e40 
        3 1.88e41 1.88e41
        4 1.94e129 1.94e129
        5 6.85e40 6.85e40
        6 5.1e12 5.1e12
    };
    \end{semilogyaxis}
\end{tikzpicture}
\caption{MobileNet v2}
\label{fig:comparison mnv2}
\end{subfigure}
\caption{Comparison of generalization bounds for MobileNets. From left to right, the bounds represent \cite{neyshabur2015norm},  \cite{bartlett2017spectrally}, \cite{neyshabur2017pac}, \cite{golowich2018size}, \cite{li2018tighter} and our result, respectively.}
\label{fig:comparison mobilenets}

\end{figure}

\section{Conclusion}\label{sec-conclude}
In this paper, we propose a margin-based generalization bound for general convolutional neural networks that can have both fully connected layers and convolutional layers. We study spectral norm for fully connected matrices generated by three types of convolution operations including standard convolution, depthwise convolution, and pointwise convolution. We show that the proposed generalization bounds for MobileNets are indeed tighter compared with existing bounds from both theoretical and experimental views. Such advantage is achieved by exploring the sparsity and shared weights of convolutional layers. 

In the end, it is worth mentioning that generalization bounds derived for convolutional neural networks cannot fully explain generalization phenomenon. More effort is needed to improve our theoretical understanding on generalization. We believe that generalization is achieved via combined efforts from optimization algorithms, loss functions, the structure of networks, and other factors. It is of interest to study how these factors contribute to generalization individually and jointly.

\vskip 0.2in
\bibliography{sample}

\begin{thebibliography}{42}
\providecommand{\natexlab}[1]{#1}
\providecommand{\url}[1]{\texttt{#1}}
\expandafter\ifx\csname urlstyle\endcsname\relax
  \providecommand{\doi}[1]{doi: #1}\else
  \providecommand{\doi}{doi: \begingroup \urlstyle{rm}\Url}\fi

\bibitem[Anthony and Bartlett(2009)]{anthony2009neural}
Martin Anthony and Peter~L Bartlett.
\newblock \emph{Neural network learning: Theoretical foundations}.
\newblock cambridge university press, 2009.

\bibitem[Arora et~al.(2018)Arora, Ge, Neyshabur, and Zhang]{arora2018stronger}
Sanjeev Arora, Rong Ge, Behnam Neyshabur, and Yi~Zhang.
\newblock Stronger generalization bounds for deep nets via a compression
  approach.
\newblock \emph{arXiv preprint arXiv:1802.05296}, 2018.

\bibitem[Bartlett et~al.(2017)Bartlett, Foster, and
  Telgarsky]{bartlett2017spectrally}
Peter~L Bartlett, Dylan~J Foster, and Matus~J Telgarsky.
\newblock Spectrally-normalized margin bounds for neural networks.
\newblock In \emph{Advances in Neural Information Processing Systems}, pages
  6240--6249, 2017.

\bibitem[Bengio and Delalleau(2011)]{bengio2011expressive}
Yoshua Bengio and Olivier Delalleau.
\newblock On the expressive power of deep architectures.
\newblock In \emph{International Conference on Algorithmic Learning Theory},
  pages 18--36. Springer, 2011.

\bibitem[Blum and Rivest(1989)]{blum1989training}
Avrim Blum and Ronald~L Rivest.
\newblock Training a 3-node neural network is np-complete.
\newblock In \emph{Advances in neural information processing systems}, pages
  494--501, 1989.

\bibitem[Brutzkus and Globerson(2017)]{brutzkus2017globally}
Alon Brutzkus and Amir Globerson.
\newblock Globally optimal gradient descent for a convnet with gaussian inputs.
\newblock \emph{arXiv preprint arXiv:1702.07966}, 2017.

\bibitem[Chollet(2017)]{chollet2017xception}
Fran{\c{c}}ois Chollet.
\newblock Xception: Deep learning with depthwise separable convolutions.
\newblock \emph{arXiv preprint}, pages 1610--02357, 2017.

\bibitem[Choromanska et~al.(2015)Choromanska, Henaff, Mathieu, Arous, and
  LeCun]{choromanska2015loss}
Anna Choromanska, Mikael Henaff, Michael Mathieu, G{\'e}rard~Ben Arous, and
  Yann LeCun.
\newblock The loss surfaces of multilayer networks.
\newblock In \emph{Artificial Intelligence and Statistics}, pages 192--204,
  2015.

\bibitem[Dauphin et~al.(2014)Dauphin, Pascanu, Gulcehre, Cho, Ganguli, and
  Bengio]{dauphin2014identifying}
Yann~N Dauphin, Razvan Pascanu, Caglar Gulcehre, Kyunghyun Cho, Surya Ganguli,
  and Yoshua Bengio.
\newblock Identifying and attacking the saddle point problem in
  high-dimensional non-convex optimization.
\newblock In \emph{Advances in neural information processing systems}, pages
  2933--2941, 2014.

\bibitem[Dinh et~al.(2017)Dinh, Pascanu, Bengio, and Bengio]{dinh2017sharp}
Laurent Dinh, Razvan Pascanu, Samy Bengio, and Yoshua Bengio.
\newblock Sharp minima can generalize for deep nets.
\newblock \emph{arXiv preprint arXiv:1703.04933}, 2017.

\bibitem[Draxler et~al.(2018)Draxler, Veschgini, Salmhofer, and
  Hamprecht]{pmlr-v80-draxler18a}
Felix Draxler, Kambis Veschgini, Manfred Salmhofer, and Fred Hamprecht.
\newblock Essentially no barriers in neural network energy landscape.
\newblock In Jennifer Dy and Andreas Krause, editors, \emph{Proceedings of the
  35th International Conference on Machine Learning}, volume~80 of
  \emph{Proceedings of Machine Learning Research}, pages 1309--1318,
  Stockholmsmässan, Stockholm Sweden, 10--15 Jul 2018. PMLR.
\newblock URL \url{http://proceedings.mlr.press/v80/draxler18a.html}.

\bibitem[Du and Goel(2018)]{du2018improved}
Simon~S Du and Surbhi Goel.
\newblock Improved learning of one-hidden-layer convolutional neural networks
  with overlaps.
\newblock \emph{arXiv preprint arXiv:1805.07798}, 2018.

\bibitem[Du et~al.(2017)Du, Lee, and Tian]{du2017convolutional}
Simon~S Du, Jason~D Lee, and Yuandong Tian.
\newblock When is a convolutional filter easy to learn?
\newblock \emph{arXiv preprint arXiv:1709.06129}, 2017.

\bibitem[Du et~al.(2018)Du, Wang, Zhai, Balakrishnan, Salakhutdinov, and
  Singh]{du2018many}
Simon~S Du, Yining Wang, Xiyu Zhai, Sivaraman Balakrishnan, Ruslan
  Salakhutdinov, and Aarti Singh.
\newblock How many samples are needed to learn a convolutional neural network?
\newblock \emph{stat}, 1050:\penalty0 21, 2018.

\bibitem[Eldan and Shamir(2016)]{eldan2016power}
Ronen Eldan and Ohad Shamir.
\newblock The power of depth for feedforward neural networks.
\newblock In \emph{Conference on Learning Theory}, pages 907--940, 2016.

\bibitem[Garipov et~al.(2018)Garipov, Izmailov, Podoprikhin, Vetrov, and
  Wilson]{garipov2018loss}
Timur Garipov, Pavel Izmailov, Dmitrii Podoprikhin, Dmitry~P Vetrov, and
  Andrew~Gordon Wilson.
\newblock Loss surfaces, mode connectivity, and fast ensembling of dnns.
\newblock \emph{arXiv preprint arXiv:1802.10026}, 2018.

\bibitem[Ge et~al.(2015)Ge, Huang, Jin, and Yuan]{ge2015escaping}
Rong Ge, Furong Huang, Chi Jin, and Yang Yuan.
\newblock Escaping from saddle points—online stochastic gradient for tensor
  decomposition.
\newblock In \emph{Conference on Learning Theory}, pages 797--842, 2015.

\bibitem[Ge et~al.(2016)Ge, Lee, and Ma]{ge2016matrix}
Rong Ge, Jason~D Lee, and Tengyu Ma.
\newblock Matrix completion has no spurious local minimum.
\newblock In \emph{Advances in Neural Information Processing Systems}, pages
  2973--2981, 2016.

\bibitem[Golowich et~al.(2018)Golowich, Rakhlin, and Shamir]{golowich2018size}
Noah Golowich, Alexander Rakhlin, and Ohad Shamir.
\newblock Size-independent sample complexity of neural networks.
\newblock In \emph{Conference On Learning Theory}, pages 297--299, 2018.

\bibitem[Gray et~al.(2006)]{gray2006toeplitz}
Robert~M Gray et~al.
\newblock Toeplitz and circulant matrices: A review.
\newblock \emph{Foundations and Trends{\textregistered} in Communications and
  Information Theory}, 2\penalty0 (3):\penalty0 155--239, 2006.

\bibitem[Hochreiter and Schmidhuber(1995)]{hochreiter1995simplifying}
Sepp Hochreiter and J{\"u}rgen Schmidhuber.
\newblock Simplifying neural nets by discovering flat minima.
\newblock In \emph{Advances in neural information processing systems}, pages
  529--536, 1995.

\bibitem[Hornik et~al.(1989)Hornik, Stinchcombe, and
  White]{hornik1989multilayer}
Kurt Hornik, Maxwell Stinchcombe, and Halbert White.
\newblock Multilayer feedforward networks are universal approximators.
\newblock \emph{Neural networks}, 2\penalty0 (5):\penalty0 359--366, 1989.

\bibitem[Howard et~al.(2017)Howard, Zhu, Chen, Kalenichenko, Wang, Weyand,
  Andreetto, and Adam]{howard2017mobilenets}
Andrew~G Howard, Menglong Zhu, Bo~Chen, Dmitry Kalenichenko, Weijun Wang,
  Tobias Weyand, Marco Andreetto, and Hartwig Adam.
\newblock Mobilenets: Efficient convolutional neural networks for mobile vision
  applications.
\newblock \emph{arXiv preprint arXiv:1704.04861}, 2017.

\bibitem[Jin et~al.(2017)Jin, Ge, Netrapalli, Kakade, and
  Jordan]{jin2017escape}
Chi Jin, Rong Ge, Praneeth Netrapalli, Sham~M Kakade, and Michael~I Jordan.
\newblock How to escape saddle points efficiently.
\newblock In \emph{International Conference on Machine Learning}, pages
  1724--1732, 2017.

\bibitem[Kawaguchi(2016)]{kawaguchi2016deep}
Kenji Kawaguchi.
\newblock Deep learning without poor local minima.
\newblock In \emph{Advances in Neural Information Processing Systems}, pages
  586--594, 2016.

\bibitem[Keskar et~al.(2016)Keskar, Mudigere, Nocedal, Smelyanskiy, and
  Tang]{keskar2016large}
Nitish~Shirish Keskar, Dheevatsa Mudigere, Jorge Nocedal, Mikhail Smelyanskiy,
  and Ping Tak~Peter Tang.
\newblock On large-batch training for deep learning: Generalization gap and
  sharp minima.
\newblock \emph{arXiv preprint arXiv:1609.04836}, 2016.

\bibitem[Krizhevsky et~al.(2012)Krizhevsky, Sutskever, and
  Hinton]{krizhevsky2012imagenet}
Alex Krizhevsky, Ilya Sutskever, and Geoffrey~E Hinton.
\newblock Imagenet classification with deep convolutional neural networks.
\newblock In \emph{Advances in neural information processing systems}, pages
  1097--1105, 2012.

\bibitem[Li et~al.(2018)Li, Lu, Wang, Haupt, and Zhao]{li2018tighter}
Xingguo Li, Junwei Lu, Zhaoran Wang, Jarvis Haupt, and Tuo Zhao.
\newblock On tighter generalization bound for deep neural networks: Cnns,
  resnets, and beyond.
\newblock \emph{arXiv preprint arXiv:1806.05159}, 2018.

\bibitem[Livni et~al.(2014)Livni, Shalev-Shwartz, and
  Shamir]{livni2014computational}
Roi Livni, Shai Shalev-Shwartz, and Ohad Shamir.
\newblock On the computational efficiency of training neural networks.
\newblock In \emph{Advances in Neural Information Processing Systems}, pages
  855--863, 2014.

\bibitem[Long et~al.(2015)Long, Shelhamer, and Darrell]{long2015fully}
Jonathan Long, Evan Shelhamer, and Trevor Darrell.
\newblock Fully convolutional networks for semantic segmentation.
\newblock In \emph{Proceedings of the IEEE conference on computer vision and
  pattern recognition}, pages 3431--3440, 2015.

\bibitem[Neyshabur et~al.(2015)Neyshabur, Tomioka, and
  Srebro]{neyshabur2015norm}
Behnam Neyshabur, Ryota Tomioka, and Nathan Srebro.
\newblock Norm-based capacity control in neural networks.
\newblock In \emph{Conference on Learning Theory}, pages 1376--1401, 2015.

\bibitem[Neyshabur et~al.(2017{\natexlab{a}})Neyshabur, Bhojanapalli,
  McAllester, and Srebro]{neyshabur2017pac}
Behnam Neyshabur, Srinadh Bhojanapalli, David McAllester, and Nathan Srebro.
\newblock A pac-bayesian approach to spectrally-normalized margin bounds for
  neural networks.
\newblock \emph{arXiv preprint arXiv:1707.09564}, 2017{\natexlab{a}}.

\bibitem[Neyshabur et~al.(2017{\natexlab{b}})Neyshabur, Bhojanapalli,
  McAllester, and Srebro]{neyshabur2017exploring}
Behnam Neyshabur, Srinadh Bhojanapalli, David McAllester, and Nati Srebro.
\newblock Exploring generalization in deep learning.
\newblock In \emph{Advances in Neural Information Processing Systems}, pages
  5947--5956, 2017{\natexlab{b}}.

\bibitem[Redmon et~al.(2016)Redmon, Divvala, Girshick, and
  Farhadi]{redmon2016you}
Joseph Redmon, Santosh Divvala, Ross Girshick, and Ali Farhadi.
\newblock You only look once: Unified, real-time object detection.
\newblock In \emph{Proceedings of the IEEE conference on computer vision and
  pattern recognition}, pages 779--788, 2016.

\bibitem[Ren et~al.(2015)Ren, He, Girshick, and Sun]{ren2015faster}
Shaoqing Ren, Kaiming He, Ross Girshick, and Jian Sun.
\newblock Faster r-cnn: Towards real-time object detection with region proposal
  networks.
\newblock In \emph{Advances in neural information processing systems}, pages
  91--99, 2015.

\bibitem[Sandler et~al.(2018)Sandler, Howard, Zhu, Zhmoginov, and
  Chen]{sandler2018mobilenetv2}
Mark Sandler, Andrew Howard, Menglong Zhu, Andrey Zhmoginov, and Liang-Chieh
  Chen.
\newblock Mobilenetv2: Inverted residuals and linear bottlenecks.
\newblock In \emph{Proceedings of the IEEE Conference on Computer Vision and
  Pattern Recognition}, pages 4510--4520, 2018.

\bibitem[Shamir(2018)]{shamir2018distribution}
Ohad Shamir.
\newblock Distribution-specific hardness of learning neural networks.
\newblock \emph{Journal of Machine Learning Research}, 19\penalty0 (32), 2018.

\bibitem[Sharir and Shashua(2017)]{sharir2017expressive}
Or~Sharir and Amnon Shashua.
\newblock On the expressive power of overlapping architectures of deep
  learning.
\newblock \emph{arXiv preprint arXiv:1703.02065}, 2017.

\bibitem[Wu(2016)]{wu_2016}
Yihong Wu, 2016.
\newblock URL \url{http://www.stat.yale.edu/~yw562/teaching/598/lec14.pdf}.

\bibitem[Xu and Raginsky(2017)]{xu2017information}
Aolin Xu and Maxim Raginsky.
\newblock Information-theoretic analysis of generalization capability of
  learning algorithms.
\newblock In \emph{Advances in Neural Information Processing Systems}, pages
  2524--2533, 2017.

\bibitem[Zhang et~al.(2016)Zhang, Bengio, Hardt, Recht, and
  Vinyals]{zhang2016understanding}
Chiyuan Zhang, Samy Bengio, Moritz Hardt, Benjamin Recht, and Oriol Vinyals.
\newblock Understanding deep learning requires rethinking generalization.
\newblock \emph{arXiv preprint arXiv:1611.03530}, 2016.

\bibitem[Zhang et~al.(2018)Zhang, Liu, and Tao]{zhang2018information}
Jingwei Zhang, Tongliang Liu, and Dacheng Tao.
\newblock An information-theoretic view for deep learning.
\newblock \emph{arXiv preprint arXiv:1804.09060}, 2018.

\end{thebibliography}

\end{document}